\newcommand{\cmark}{\ding{51}}
\newcommand{\xmark}{\ding{55}}
\newlength\myindent
\renewcommand{\tilde}{\widetilde}
\renewcommand{\hat}{\widehat}
\def \A {\mathcal{A}}
\def \D {\mathcal{D}}
\def \E {\mathbb{E}}
\def \F {\mathcal{F}}
\def \O {\mathcal{O}}
\def \R {\mathbb{R}}
\def \W {\mathcal{W}}
\def \X {\mathcal{X}}
\def \b {\mathbf{b}}
\def \fh {\hat{f}}
\def \g {\mathbf{g}}
\def \k {\kappa}
\def \ubf {\mathbf{u}}
\def \w {\mathbf{w}}
\def \x {\mathbf{x}}
\def \y {\mathbf{y}}
\def \z {\mathbf{z}}
\def \wh {\hat{\w}}
\def \xb {\bar{\x}}
\def \xt {\tilde{\x}}
\def \xs {\x^\star}
\def \epsilon {\varepsilon}
\def \sumT {\sum_{t=1}^T}
\def \sumt {\sum_{s=1}^t}
\def \sumTT {\sum_{t=2}^T}
\def \les {\lesssim}
\def \Reg {\textsc{Reg}}
\let\norm\undefined 
\newcommand\norm[1]{\left\| #1 \right\|}
\newcommand\abs[1]{\left| #1 \right|}
\newcommand\inner[2]{\langle #1, #2 \rangle}
\newcommand\sbr[1]{\left( #1 \right)}
\newcommand\mbr[1]{\left[ #1 \right]}
\newcommand\bbr[1]{\left\{ #1 \right\}}
\newcommand\term[1]{\textsc{term}~(\textsc{#1})}
\newcommand\given[1][]{\:#1\vert\:}
\newcommand\seq[1]{\{#1\}_{t=1}^T}
\DeclareMathOperator*{\argmin}{arg\,min}
\def \define {\triangleq}
\def \endenv {\hfill\raisebox{1pt}{$\triangleleft$}\smallskip}
\newtheorem{myThm}{Theorem}
\newtheorem{myLemma}{Lemma}
\newtheorem{myProp}{Proposition}
\newtheorem{myCor}{Corollary}
\newtheorem{myDef}{Definition}
\theoremstyle{definition}
\newtheorem{myAssum}{Assumption}
\newtheorem{myRemark}{Remark}
\definecolor{wine_red}{RGB}{228,48,64}
\definecolor{myblue}{RGB}{68,114,196}
\definecolor{DSgray}{cmyk}{0,1,0,0}
\newcommand{\pref}[1]{\prettyref{#1}}
\newcommand{\savehyperref}[2]{\texorpdfstring{\hyperref[#1]{#2}}{#2}}
\def \conversion {\text{O2B} conversion\xspace}
\def \AOL {\A_\mathsf{OL}}
\def \HB {\textnormal{\textbf{HB}}\xspace}
\def \NAG {\textnormal{\textbf{NAG}}\xspace}
\def \GD {\textnormal{\textbf{GD}}\xspace}
\title{Optimistic Online-to-Batch Conversions\\ for Accelerated Convergence and Universality}
\author{
  Yu-Hu Yan,~ Peng Zhao,~ Zhi-Hua Zhou\thanks{Correspondence: Peng Zhao $<$zhaop@lamda.nju.edu.cn$>$}\\
  National Key Laboratory for Novel Software Technology, Nanjing University, China\\
  School of Artificial Intelligence, Nanjing University, China\\
  \texttt{\{yanyh, zhaop, zhouzh\}@lamda.nju.edu.cn}
}
\begin{document}
\maketitle

\begin{abstract}
    In this work, we study offline convex optimization with smooth objectives, where the classical Nesterov's Accelerated Gradient (\textbf{NAG}) method achieves the optimal accelerated convergence. Extensive research has aimed to understand \textbf{NAG} from various perspectives, and a recent line of work approaches this from the viewpoint of online learning and online-to-batch conversion, emphasizing the role of \mbox{\emph{optimistic online algorithms}} for acceleration. In this work, we contribute to this perspective by proposing novel \mbox{\emph{optimistic online-to-batch conversions}} that incorporate optimism theoretically into the analysis, thereby significantly simplifying the online algorithm design while preserving the optimal convergence rates. Specifically, we demonstrate the effectiveness of our conversions through the following results: \textit{(i)} when combined with simple online gradient descent, our optimistic conversion achieves the optimal accelerated convergence; \textit{(ii)} our conversion also applies to strongly convex objectives, and by leveraging both optimistic online-to-batch conversion and optimistic online algorithms, we achieve the optimal accelerated convergence rate for strongly convex and smooth objectives, for the first time through the lens of online-to-batch conversion; \textit{(iii)} our optimistic conversion can achieve universality to smoothness~---~applicable to both smooth and non-smooth objectives without requiring knowledge of the smoothness coefficient~---~and remains efficient as non-universal methods by using only one gradient query in each iteration. Finally, we highlight the effectiveness of our optimistic online-to-batch conversions by a precise correspondence with \textbf{NAG}.
\end{abstract}

\section{Introduction}
\label{sec:introduction}
Convex optimization~\citep{Boyd,Nesterov-Convex} is a core problem in optimization theory. Its simple theoretical foundations and algorithms have made it essential for solving a wide range of real-world problems. Specifically, we focus on the simplest and most standard form:
\begin{equation}
    \label{eq:cvx-opt}
    \min_{\x \in \X} \ f(\x),
\end{equation}
where $f(\cdot)$ is a convex objective function and $\X \subseteq \R^d$ is the feasible domain. In this paper, we focus on the function-value convergence. Specifically, we aim to minimize the suboptimality gap $\smash{f(X_T) - \min_{\x \in \X} f(\x)}$, where $X_T$ denotes the final output of an algorithm after $T$ iterations.

\subsection{Accelerated Convergence}
When the objective function $f(\cdot)$ is \emph{Lipschitz-continuous}, the simple Gradient Descent (\GD) is sufficient to achieve the optimal convergence rates, specifically, $\smash{\O(T^{-1/2})}$ for convex functions and $\O(T^{-1})$ for strongly convex functions~\citep{Nemirovskij1983}. However, \GD is still not fully capable of handling convex smooth optimization.

In convex \emph{smooth} optimization, the pioneering Nesterov's Accelerated Gradient method (\NAG)~\citep{Nesterov-1,Nesterov-Convex} achieves an accelerated and optimal convergence rate of $\O(T^{-2})$. \NAG consists of two steps~---~a gradient descent step and an extrapolation step, formalized below:
\begin{equation}
    \label{eq:NAG}
    \y_t = \Pi_\X \mbr{\z_{t-1} - \theta_{t-1} \nabla f(\z_{t-1})}, \quad \z_t = \y_t + \beta_t (\y_t - \y_{t-1}),
\end{equation}
where $\Pi_\X[\cdot]$ represents the Euclidean projection onto $\X$, $\beta_t$ is the extrapolation parameter and $\theta_t$ denotes the step size. \NAG with carefully chosen parameters is able to achieve the optimal convergence rate of $\smash{\O(T^{-2})}$ for convex functions and $\smash{\O(\exp(-T/\sqrt{\kappa}))}$ for strongly convex functions, respectively, where $\kappa$ represents the condition number~\citep{Nesterov-1}.

\subsection{Online-to-Batch Conversion for Acceleration}
Due to the significant success of \NAG and acceleration methods, numerous studies have sought to understand them from various perspectives, including ordinary differential equations~\citep{JMLR'16:SuBC}, game theory~\citep{NeurIPS'18:Wang}, and online learning~\citep{ICML'19:Cutkosky}, among others. In this work, we focus on understanding \NAG and acceleration from the perspective of online learning.

From the online learning perspective, a classic approach is through the well-known \emph{online-to-batch conversion (O2B conversion)}~\citep{TIT'04:Bianchi,book'12:Shai-OCO}, which leverages \mbox{\emph{online learning}} algorithms~\citep{book'16:Hazan-OCO,Orabona:Modern-OL} to address offline optimization tasks. Formally, online learning is a versatile framework that models the interaction between a learner and the environment over time. In the $t$-th round, the learner selects a decision $\w_t$ from a convex compact set $\W \subseteq \R^d$. Simultaneously, the environment adversarially chooses a convex loss function $\ell_t:\W \rightarrow \R$. Subsequently, the learner incurs a loss $\ell_t(\w_t)$, receives feedback of the function $\ell_t(\cdot)$, and updates her decision to $\w_{t+1}$. In online learning, the learner aims to minimize the game-theoretic performance measure known as regret~\citep{Bianchi06}, which is formally defined as
\begin{equation}
  \label{eq:static}
  \Reg_T \define \sumT \ell_t(\w_t) - \min_{\w \in \W} \sumT \ell_t(\w).
\end{equation}
It represents the learner's excess cumulative loss compared with the best fixed comparator in hindsight.

The vanilla \conversion is able to achieve the optimal convergence rate for convex Lipschitz-continuous objectives via a black-box use of online learning algorithms. For example, the simple online gradient descent~\citep{ICML'03:zinkevich} along with the \conversion is sufficient to achieve the optimal rate of $\smash{\O(T^{-1/2})}$~\citep{Nemirovskij1983}.

While the vanilla conversion proves effective for Lipschitz objectives, it encounters limitations in convex smooth optimization. 
To address this challenge, \citet{ICML'19:Cutkosky} introduced a key algorithmic modification~---~evaluating the gradient at the averaged decision along the optimization trajectory rather than at the current decision in each iteration, as the averaged trajectory exhibits greater stability. 
Therefore, we refer to this enhanced approach as the \emph{stabilized} O2B conversion throughout this paper.
To utilize the stability of the averaged trajectory, \citet{ICML'19:Cutkosky} incorporated \mbox{\emph{optimistic online algorithms}}~\citep{COLT'12:VT,COLT'13:optimistic}, a powerful technique in modern online learning~\citep{Orabona:Modern-OL}. 
This integration has inspired a growing body of research that leverages the adaptivity of optimistic online learning algorithms to achieve accelerated convergence~\citep{ICML'19:Cutkosky,NeurIPS'19:UniXGrad,ICML'20:Joulani,NeurIPS'25:Holder}. 
We provide a comprehensive overview of these recent developments in \pref{subsec:preliminary-conversion}.

\subsection{Our Contributions}
With stabilized \conversion, \citet{ICML'19:Cutkosky,NeurIPS'19:UniXGrad,ICML'20:Joulani} have demonstrated that optimistic algorithms~\citep{SIAMJO'04:Nemirovski} are essential for acceleration~\citep{Nesterov-1} in convex smooth optimization. Interestingly, in game theory, recent studies reveal a stark separation of the ability of optimism: optimistic methods are necessary for convergence, while non-optimistic methods invariably diverge~\citep{SODA'18:Mertikopoulos}. This fundamental difference prompts an important question regarding the role of optimism across broader optimization contexts.

Motivated by this question, we further investigate the role of optimism in convex smooth optimization. 
We find that while optimism is essential for acceleration, it does \emph{not} need to be exclusively achieved via online learning algorithms, but can also be realized through the \conversion mechanism. 
Specifically, we propose an \emph{optimistic \conversion}, which can \emph{implicitly} incorporate optimistic capabilities in theoretical analysis. 
Interestingly, this shows that optimism not only plays a crucial role in online learning and game theory, but also in the offline optimization, making it a fundamental principle across different optimization contexts.
Our work is inspired by \citet{NeurIPS'18:Wang} but takes a fundamentally different approach: we abandon the game-theoretic interpretation and instead develop a novel optimistic O2B conversion framework that directly incorporates optimism into the conversion mechanism. 
While we arrive at the same intermediate result (in \pref{thm:conversion}), our framework provides a more direct and unified understanding of how optimism enables acceleration. 
As a first application, our framework re-derives the algorithm from \citet{NeurIPS'18:Wang} (our \pref{alg:our-conversion}), offering new insights on its effectiveness. 

Our optimistic conversion can also be extended to strongly convex and smooth objectives, achieving the optimal convergence rate for the first time through O2B conversion. 
Notably, while optimism remains essential in this challenging setup, it is now distributed between the O2B conversion and the online learning algorithm, and their cooperation makes the optimal convergence attainable. 

Furthermore, we consider making our method \emph{universal}~\citep{MP'15:Nesterov}. An optimization algorithm is called universal if it can automatically achieve the best possible convergence guarantees under smoothness and non-smoothness, without requiring the smoothness parameter. To this end, a natural idea is to leverage the AdaGrad-type step sizes~\citep{JMLR'11:Duchi} following \citet{NeurIPS'19:UniXGrad}. However, this generally requires querying the gradient oracle twice in each iteration, which is not as efficient as non-universal methods such as \NAG~\eqref{eq:NAG} and is also not efficient enough when the gradient evaluation is costly. The same concern also appears in the work of \citet{NeurIPS'19:UniXGrad}. To this end, building on our first optimistic conversion, we propose a further enhanced optimistic \conversion, which is able to achieve the \emph{optimal and universal} convergence rates, while maintaining \mbox{\emph{only one}} gradient query per iteration, making it as efficient as non-universal methods in terms of the gradient query complexity. Our results can also be extended to the stochastic optimization setting with high-probability guarantees, which we defer to~\pref{app:stochastic} due to space constraints.

\pref{table:conversion-comparison} presents the comparison between the stabilized \conversion of~\citet{ICML'19:Cutkosky} and our optimistic conversion regarding the ability for acceleration. The core message is~---~both \conversion and online learning algorithms are essential in achieving acceleration. Our optimistic \conversion incorporates the optimistic ability theoretically in the analysis and thus enables look-ahead online learning regret, which can liberate the algorithm design. Therefore, even the simple online gradient descent algorithm can achieve the accelerated convergence.
\begin{table}[!t]
    \centering
    \caption{\small{Comparison of the stabilized \conversion~\citep{ICML'19:Cutkosky} with our optimistic \conversion regarding the ability for acceleration. Our optimistic \conversion incorporates optimism directly, enabling look-ahead online learning regret, where a non-optimistic algorithm achieves the optimal accelerated rate.}}
    \label{table:conversion-comparison}
    \renewcommand{\arraystretch}{1.4} 
    \resizebox{0.60\textwidth}{!}{
    \begin{tabular}{l|cc}
    \hline 

    \hline
    \textbf{Property} & \textbf{Stabilized} & \textbf{Ours (Optimistic)} \\
    \hline
    Online Learning Regret & Standard & Look-ahead \\
    Optimism in Conversion & \textcolor{black}{\xmark} & \textcolor{blue}{\cmark} \\
    Optimism in Algorithm & \textcolor{blue}{\cmark} & \textcolor{black}{\xmark} \\
    Convergence Rate & $\O(T^{-2})$ & $\O(T^{-2})$ \\
    \hline

    \hline
    \end{tabular}}
    \vspace{-4mm}
\end{table}

Finally, we highlight the equivalence of the algorithm update trajectory between the optimistic-O2B-induced algorithm and the classical \NAG~\eqref{eq:NAG} in unconstrained settings under specific parameter configurations, offering an interpretation of \NAG through the lens of optimistic online learning. Conversely, this observation also offers an intuitive explanation for the effectiveness of our optimistic conversion because of its profound connection with \NAG. Furthermore, we demonstrate that previous works implementing optimistic algorithms~\citep{ICML'19:Cutkosky,NeurIPS'19:UniXGrad,ICML'20:Joulani} can be viewed as variants of \mbox{Polyak's Heavy-Ball} method~\citep{polyak'64}, enhanced with carefully designed corrected gradients.

To conclude, our contributions are mainly threefold, summarized as follows:
\begin{itemize}[leftmargin=4mm,topsep=-2pt,itemsep=-0.8pt]
    \item We propose an optimistic O2B conversion framework, which implicitly incorporates optimism in theoretical analysis. Our optimistic conversion re-derives the algorithm from \citet{NeurIPS'18:Wang}, while from the perspective of online-to-batch conversion, offering new insights on the relationship between the game-theoretic and the O2B-theoretic interpretations.
    \item Our optimistic conversion extends to strongly convex and smooth objectives. This marks the \emph{first} time that the optimal convergence rate for strongly convex and smooth objectives has been achieved through O2B conversion.
    \item We further enhance our optimistic conversion to achieve the optimal and universal convergence with only one gradient query per iteration, making it as efficient as non-universal methods in terms of the gradient query complexity, which is validated by the numerical evaluation in \pref{sec:experiment}.
\end{itemize}

\paragraph{Organization.} The rest of the paper is organized as follows. \pref{sec:preliminary} introduces the preliminaries and reviews related work. \pref{sec:conversions} presents our optimistic O2B conversions with accelerated and universal convergence. \pref{sec:comparison} compares O2B methods with classical approaches. \pref{sec:experiment} provides the numerical experiments to evaluate the performance of the proposed methods. Finally, \pref{sec:conclusion} concludes the paper. All proofs are provided in appendices.
\section{Preliminary}
\label{sec:preliminary}
In this section, we introduce some preliminary knowledge, including notations, assumptions, and recent advancements in convex smooth optimization via the stabilized \conversion. 

\subsection{Notations and Assumptions}
\label{subsec:assumptions}
In the following, we introduce the notations and assumptions used in this work.

\paragraph{Notations.}
For simplicity, we use $\|\cdot\|$ for $\|\cdot\|_2$ by default and write $a \lesssim b$ or $a = \O(b)$ if there exists a constant $C < \infty$ such that $a/b \le C$. When there is no ambiguity, we abbreviate $\sum_t$ and $\{\cdot\}_t$ for $\smash{\sumT}$ and $\seq{\cdot}$, respectively. We adopt $\Pi_\X[\z] = \argmin_{\x \in \X} \|\z - \x\|$ to denote the Euclidean projection onto $\X$. Given sequences of \conversion weights $\{\alpha_t\}_t$ and decisions of the online algorithm $\{\x_t\}_t$, we use $\smash{A_t \define \sumt \alpha_s}$ to represent the weights' summation and define 
\begin{equation}
    \label{eq:xt-xb}
    \xt_t \define \frac{1}{A_t} \sbr{\sum_{s=1}^{t-1} \alpha_s \x_s + \alpha_t \x_{t-1}}, \text{ and } \xb_t \define \frac{1}{A_t} \sbr{\sum_{s=1}^{t-1} \alpha_s \x_s + \alpha_t \x_t}
\end{equation}
to be two kinds of weighted decisions. The two expressions differ only in the decision associated with $\alpha_t$, while all other terms remain the same.

\begin{myAssum}[Domain Boundedness]
    \label{assum:domain}
    For any $\x,\y \in \X \subseteq \R^d$, the domain satisfies $\|\x - \y\| \le D$.
\end{myAssum}
\begin{myAssum}[Smoothness]
    \label{assum:smoothness}
    The objective function $f(\cdot)$ is $L$-smooth, i.e., $\|\nabla f(\x) - \nabla f(\y)\| \le L \|\x - \y\|$ holds for any $\x, \y \in \R^d$.
\end{myAssum}
Both assumptions are standard in the optimization literature. Specifically, \pref{assum:domain} is common in constrained optimization~\citep{ICML'19:Cutkosky,NeurIPS'19:UniXGrad,ICML'20:Joulani}. Note that the boundedness assumption is only required when designing a universal method in \pref{subsec:conversion-universal}. Additionally, \pref{assum:smoothness} is necessary for the accelerated convergence~\citep{Nesterov-1,Nesterov-Convex}.

\subsection{(Stabilized) Online-to-Batch Conversion}
\label{subsec:preliminary-conversion}
In this part, we introduce the vanilla online-to-batch conversion \citep{TIT'04:Bianchi,book'12:Shai-OCO} and the enhanced stabilized conversion \citep{ICML'19:Cutkosky}.

First, we formalize the vanilla \conversion in \pref{alg:conversion}, where in each iteration, the learner queries the gradient feedback of $\nabla f(\x_t)$, and feeds $\smash{\alpha_t \nabla f(\x_t)}$ into the black-box online learning algorithm to generate the next decision $\x_{t+1}$. By doing this, the suboptimality gap of the final decision $\xb_T$, defined in \pref{eq:xt-xb}, satisfies $\smash{f(\xb_T) - f(\xs) \le \tfrac{1}{A_T} \sum_t \inner{\alpha_t  \nabla f(\x_t)}{\x_t - \xs}}$. The above property is sufficient to attain the optimal convergence rates for Lipschitz-continuous objectives, i.e., $\smash{\O(T^{-1/2})}$ for convex and $\O(T^{-1})$ for strongly convex functions~\citep{arXiv'12:Lacoste-SGD}, matching the known lower bounds~\citep{Nemirovskij1983}. Despite its simplicity, vanilla conversion has not yet been shown to yield accelerated rates in smooth convex optimization problems.

\begin{algorithm}[t]
    \caption{Vanilla/Stabilized Online-to-Batch Conversion}
    \label{alg:conversion}
    \begin{algorithmic}[1]
        \REQUIRE Online learning algorithm $\A_\mathsf{OL}$, weights $\seq{\alpha_t}$ with $\alpha_t > 0$.
        \STATE \textbf{Initialize}: $\x_1 = \x_0 \in \X$
        \FOR{$t=1$ {\bfseries to} $T$}
            \STATE Query the gradient feedback $\g_t$, where
            \begin{equation*}
                \g_t = \begin{cases}
                    \nabla f(\x_t), & \text{(vanilla \conversion)} \\
                    \nabla f(\xb_t), & \text{(stabilized \conversion)}
                \end{cases}
            \end{equation*}
            \STATE Define $\ell_t(\x) \define \inner{\alpha_t \g_t}{\x}$ as the $t$-th round online function for $\AOL$
            \STATE Get $\x_{t+1}$ from $\AOL(\x_1, \{\ell_s(\cdot)\}_{s=1}^t)$ 
        \ENDFOR
        \STATE \textbf{Output}: $\xb_T = \frac{1}{A_T} \sumT \alpha_t \x_t$
        \vfill
    \end{algorithmic}
\end{algorithm}

To this end, \citet{ICML'19:Cutkosky} introduced the more powerful stabilized \conversion. The key idea is to query the gradient of the averaged decision $\xb_t$ rather than $\x_t$ in each iteration. The suboptimality gap of the final decision $\xb_T$ satisfies $\smash{f(\xb_T) - f(\xs) \le \tfrac{1}{A_T} \sum_t \inner{\alpha_t\nabla f(\xb_t)}{\x_t - \xs}}$. Although the only algorithmic difference between vanilla and stabilized conversion is where the gradient is taken, this is the key reason why stabilized conversion outperforms the vanilla one in terms of acceleration. To exploit the power of the stabilized \conversion, \citet{ICML'19:Cutkosky} leveraged the optimistic online learning techniques, which follows the two-step update rule below:  
\begin{equation}
    \label{eq:OOGD}
    \w_t = \Pi_\W \mbr{\wh_t - \eta_t M_t}, \quad \wh_{t+1} = \Pi_\W \mbr{\wh_t - \eta_t \nabla \ell_t(\w_t)}, 
\end{equation}
where $\eta_t > 0$ is a time-varying step size and $\smash{\wh_t}$ is an intermediate variable. In each iteration, before receiving the gradient feedback, the learner first updates using an optimistic estimate $M_t$ (called an \emph{optimism}) of the future gradient $\nabla \ell_t(\w_t)$ and then updates using the true gradient $\nabla \ell_t(\w_t)$. By doing this, optimistic methods can achieve the following regret bound with adaptivity for any $\ubf \in \W$:
\begin{equation}
    \label{eq:OOGD-analysis}
    \sumT \mbr{\ell_t(\w_t) - \ell_t(\ubf)} \le \O \bigg(\frac{1}{\eta_T} + \sumT \eta_t \|\nabla \ell_t(\w_t) - M_t\|_2^2 - \sumTT \frac{1}{\eta_{t-1}} \|\w_t - \w_{t-1}\|_2^2 \bigg), 
\end{equation}
which can result in favorable regret guarantees when the optimistic term $\smash{\|\nabla \ell_t(\w_t) - M_t\|}$ is small. 
In the context of offline convex optimization, $\nabla \ell_t(\w_t)$ equals $\alpha_t\nabla f(\xb_t)$. 
Subsequently, \citet{ICML'19:Cutkosky} chooses $M_t = \alpha_t \nabla f(\xb_{t-1})$ as the optimism (also known as gradient-variation regret in online learning~\citep{COLT'12:VT}) such that the optimistic term depends on $\|\xb_t - \xb_{t-1}\|$ under smoothness, because the combined decisions $\{\xb_t\}_t$ is \emph{stable}. 
To illustrate this, we focus on a key equation proposed therein: $A_{t-1} (\xb_{t-1} - \xb_t) = \alpha_t (\xb_t - \x_t)$. 
When the domain is bounded by \pref{assum:domain}, the variation of $\|\xb_t - \xb_{t-1}\| \approx \alpha_t / A_t = 1/t$ using $\alpha_t=1$. 
\citet{ICML'19:Cutkosky} employed this key stability property to achieve an $\smash{\O(T^{-3/2})}$ rate in the constrained setup and $\O(\log T \cdot T^{-2})$ for unconstrained optimization.

Consequently, \citet{NeurIPS'19:UniXGrad} leveraged optimistic online learning~\eqref{eq:OOGD} with optimism $M_t = \alpha_t \nabla f(\xt_t)$, where $\xt_t$ is another form of weighted combination, as defined in \pref{eq:xt-xb}. By doing so, the optimistic quantity $\smash{\|\nabla \ell_t(\w_t) - M_t\|^2}$ is of the same order as $L^2 \|\x_t - \x_{t-1}\|^2$ when $\alpha_t = t$, which therefore can be canceled by the intrinsic negative stability terms in the analysis of optimistic algorithms, as shown in \pref{eq:OOGD-analysis}, resulting in the optimal rate of $\O(T^{-2})$. \citet{ICML'20:Joulani} obtained the same optimal convergence rate but with a different optimism configuration of $M_t = \alpha_t \nabla f(\xb_{t-1})$ and extended the results to composite and variance-reduced optimization. For a comprehensive understanding of the connection between optimistic online learning and acceleration in smooth optimization, one may refer to \citet{LectureNote:AOpt25}.
\section{Optimistic Online-to-Batch Conversions for Acceleration and Universality}
\label{sec:conversions}
In this section, we introduce our optimistic online-to-batch conversions.
Specifically, \pref{subsec:conversion-acceleration} proposes the first optimistic conversion for acceleration.
Subsequently, \pref{subsec:conversion-scvx} extends our conversion to strongly convex objectives.
Finally, \pref{subsec:conversion-universal} proposes a further enhanced optimistic \conversion to achieve universality while maintaining efficiency.

\subsection{Acceleration for Smooth and Convex Functions}
\label{subsec:conversion-acceleration}
In this part, we propose our first optimistic \conversion below, which is general as it only requires the convexity of the objective function.
The proof is presented in \pref{app:conversion}.
\begin{myThm}
    \label{thm:conversion}
    If the objective function $f(\cdot)$ is convex, then we have
    \begin{equation}
        \label{eq:our-conversion}
        A_T \mbr{f(\xb_T) - f(\xs)} \le \sumT \inner{\alpha_t \nabla f(\xt_t)}{\x_t - \xs} + \sumT \alpha_t \inner{\nabla f(\xb_t) - \nabla f(\xt_t)}{\x_t - \x_{t-1}},
    \end{equation}
    where $\xt_t \define \frac{1}{A_t} \big(\sum_{s=1}^{t-1} \alpha_s \x_s + \alpha_t \x_{t-1}\big)$ and $\xb_t \define \frac{1}{A_t} \big(\sum_{s=1}^{t-1} \alpha_s \x_s + \alpha_t \x_t\big)$.
\end{myThm}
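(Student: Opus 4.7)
The plan is to prove the inequality by a per-round telescoping argument, using convexity of $f(\cdot)$ at the two linearization points $\xb_t$ and $\xt_t$ combined with the algebraic relationships among the weighted iterates $\xb_{t-1}$, $\xt_t$, $\xb_t$ that follow from their definitions. The first step is to rearrange the RHS of \pref{eq:our-conversion} into a more transparent form: expanding the second sum and regrouping yields the equivalent expression $\sum_{t=1}^T \alpha_t \inner{\nabla f(\xt_t)}{\x_{t-1} - \xs} + \sum_{t=1}^T \alpha_t \inner{\nabla f(\xb_t)}{\x_t - \x_{t-1}}$. I would then invoke two identities that follow directly from the definitions in \pref{eq:xt-xb}, namely $A_t(\xb_t - \xt_t) = \alpha_t(\x_t - \x_{t-1})$ and $A_t \xt_t - A_{t-1}\xb_{t-1} = \alpha_t \x_{t-1}$. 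These let me rewrite the $t$-th summand as
\begin{equation*}
A_t \inner{\nabla f(\xb_t)}{\xb_t - \xt_t} + A_t \inner{\nabla f(\xt_t)}{\xt_t - \xs} - A_{t-1}\inner{\nabla f(\xt_t)}{\xb_{t-1} - \xs},
\end{equation*}
so that every displacement now lies between points on the averaged trajectory.

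The second step is to lower-bound each of the three inner products by convexity: (a) $A_t\inner{\nabla f(\xb_t)}{\xb_t - \xt_t} \ge A_t[f(\xb_t) - f(\xt_t)]$ from convexity anchored at $\xb_t$; (b) $A_t\inner{\nabla f(\xt_t)}{\xt_t - \xs} \ge A_t[f(\xt_t) - f(\xs)]$ from convexity anchored at $\xt_t$; and (c) $\inner{\nabla f(\xt_t)}{\xb_{t-1} - \xs} \le f(\xb_{t-1}) - f(\xs)$, obtained by subtracting the two convexity inequalities $f(\xb_{t-1}) \ge f(\xt_t) + \inner{\nabla f(\xt_t)}{\xb_{t-1} - \xt_t}$ and $f(\xs) \ge f(\xt_t) + \inner{\nabla f(\xt_t)}{\xs - \xt_t}$. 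Combining (a)--(c), the $f(\xt_t)$ contributions cancel and each per-round quantity is bounded below by $A_t[f(\xb_t) - f(\xs)] - A_{t-1}[f(\xb_{t-1}) - f(\xs)]$; summing from $t = 1$ to $T$ with the convention $A_0 = 0$ telescopes the right-hand side to $A_T[f(\xb_T) - f(\xs)]$, giving the claim.

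The main obstacle I anticipate is identifying the correct initial rearrangement. A tempting shortcut would start from the stabilized O2B bound $A_T[f(\xb_T) - f(\xs)] \le \sum_t \alpha_t\inner{\nabla f(\xb_t)}{\x_t - \xs}$ recalled in \pref{subsec:preliminary-conversion} and add/subtract $\nabla f(\xt_t)$ inside the inner product, but this leaves a residual $\sum_t \alpha_t\inner{\nabla f(\xb_t) - \nabla f(\xt_t)}{\x_{t-1} - \xs}$ of indeterminate sign that cannot be absorbed. The rearrangement I propose is precisely what converts the problematic $\xs$-dependence into the stability increment $\x_t - \x_{t-1}$, enabling the identity $A_t(\xb_t - \xt_t) = \alpha_t(\x_t - \x_{t-1})$ to reduce everything to three standard convexity inequalities.
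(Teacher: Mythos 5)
Your overall architecture---regrouping the right-hand side of \pref{eq:our-conversion} into $\sum_t \alpha_t\inner{\nabla f(\xt_t)}{\x_{t-1}-\xs} + \sum_t\alpha_t\inner{\nabla f(\xb_t)}{\x_t-\x_{t-1}}$, converting all displacements onto the averaged trajectory via the identities $A_t(\xb_t-\xt_t)=\alpha_t(\x_t-\x_{t-1})$ and $A_t\xt_t - A_{t-1}\xb_{t-1}=\alpha_t\x_{t-1}$, and then telescoping---is essentially the paper's proof run in reverse (the paper starts from $\alpha_t[f(\xt_t)-f(\xs)]$, applies the same two identities and the same convexity inequalities, and builds up to the stated bound). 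Steps (a) and (b) are correct. Step (c), however, is a genuine error: the inequality $\inner{\nabla f(\xt_t)}{\xb_{t-1}-\xs}\le f(\xb_{t-1})-f(\xs)$ is false in general, and ``subtracting'' two convexity inequalities that point in the same direction is not a valid operation. Concretely, for $f(x)=x^2$ on $\R$ with $\xt_t=10$, $\xb_{t-1}=1$, $\xs=0$, the left-hand side is $20$ while the right-hand side is $1$.

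The gap is local and repairable without changing your plan: do not bound $-A_{t-1}\inner{\nabla f(\xt_t)}{\xb_{t-1}-\xs}$ in isolation, but combine it with (b). Writing $\xb_{t-1}-\xs = (\xb_{t-1}-\xt_t)+(\xt_t-\xs)$, the two terms together equal $-A_{t-1}\inner{\nabla f(\xt_t)}{\xb_{t-1}-\xt_t} + \alpha_t\inner{\nabla f(\xt_t)}{\xt_t-\xs}$, and now both coefficients have the correct signs for one-sided convexity: $-A_{t-1}\inner{\nabla f(\xt_t)}{\xb_{t-1}-\xt_t}\ge -A_{t-1}[f(\xb_{t-1})-f(\xt_t)]$ and $\alpha_t\inner{\nabla f(\xt_t)}{\xt_t-\xs}\ge\alpha_t[f(\xt_t)-f(\xs)]$. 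Adding (a) gives exactly the per-round lower bound $A_t[f(\xb_t)-f(\xs)]-A_{t-1}[f(\xb_{t-1})-f(\xs)]$ you wanted, and the telescoping with $A_0=0$ goes through. The reason your version happens to land on the correct final answer is that the invalid excess in (c) enters with coefficient $-A_{t-1}$ while (b) supplies the same inner product with coefficient $+A_t\ge A_{t-1}$, so only the \emph{combined} inequality---not (c) alone---is true; as written, your proof asserts a false intermediate claim and must be patched as above.
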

\paragraph{Comparison to Stablized O2B Conversion~\citep{ICML'19:Cutkosky}.}
Compared with the stabilized \conversion, which exhibits the property of $A_T \mbr{f(\xb_T) - f(\xs)} \le \sum_t \langle\alpha_t\nabla f(\xb_t), \x_t - \xs\rangle$, our optimistic \conversion in \pref{thm:conversion} differs in two aspects:
\begin{itemize}[leftmargin=5mm,topsep=-2pt]
    \item[\textit{(i)}] The first term in the right-hand side of \pref{eq:our-conversion} is an algorithm-related \emph{look-ahead} online learning regret that allows a \emph{clairvoyant-type} update because the algorithm can first observe the loss of the $t$-th iteration, i.e., $\smash{\alpha_t \nabla f(\xt_t)}$, and then updates from $\x_{t-1}$ to $\x_t$.
    Intuitively, this look-ahead online learning problem is easier to handle than the standard online learning problem.
    For stabilized conversion, clairvoyant-type updates are forbidden because $\xb_t$ contains the information of the current decision $\x_t$. \pref{fig:conversion-comparison} illustrates the aforementioned difference.

    \item[\textit{(ii)}] The second term in \pref{eq:our-conversion} serves as an optimistic quantity in the analysis, which can therefore preserve the potential for acceleration. The difference from the stabilized conversion of \citet{ICML'19:Cutkosky} is that our conversion introduces the optimistic term directly in the analysis, whereas the stabilized conversion requires an optimistic algorithm~\eqref{eq:OOGD} explicitly to achieve the same effect.
\end{itemize}
Our technical approach differs from the stabilized-O2B-conversion-based methods~\citep{ICML'19:Cutkosky,NeurIPS'19:UniXGrad,ICML'20:Joulani} from two aspects: 
\textit{(i)} Previous works analyze $\alpha_t [f(\xb_t) - f(\xs)]$ as an intermediate quantity, while we focus on $\alpha_t [f(\xt_t) - f(\xs)]$; 
\textit{(ii)} Previous works rely on the key equation $A_{t-1} (\xb_{t-1} - \xb_t) = \alpha_t (\xb_t - \x_t)$, while we employ a novel equation $A_{t-1} \sbr{\xb_{t-1} - \xt_t} = \alpha_t (\xt_t - \x_{t-1})$, which proves crucial for our final guarantee.
Interested readers can refer to \pref{app:equations} for more useful equations in the \conversion.

\begin{figure}[!t]
    \centering
    \includegraphics[clip, trim=35mm 105mm 130mm 50mm, width=0.65\textwidth]{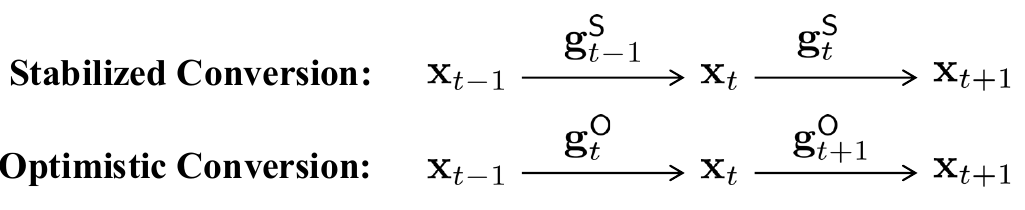}
    \caption{Comparison of the update between the optimistic and stabilized conversions, where \smash{$\g_t^{\textsf{O}} = \alpha_t \nabla f(\xt_t)$} and \smash{$\g_t^{\textsf{S}} = \alpha_t \nabla f(\xb_t)$} represent the losses faced by the optimistic and stabilized conversions, and \smash{$\x \xrightarrow{\g} \y$} denotes updating from $\x$ to $\y$ using the information $\g$. Compared with the stabilized conversion, ours can update with the information of the \emph{upcoming} losses.}
    \label{fig:conversion-comparison}
\end{figure} 

\paragraph{Connection to Game-theoretic Interpretation~\citep{NeurIPS'18:Wang}.} 
The authors provided an alternative game-theoretic perspective to understand acceleration in convex optimization.
Specifically, they reformulated the convex optimization problem~\eqref{eq:cvx-opt} as a two-player Fenchel game: $\smash{g(\x,\y) = \inner{\y}{\x} - f^*(\y)}$~\citep{COLT'18:Abernethy}, where $f^*(\cdot)$ is the Fenchel conjugate of $f(\cdot)$. 
The performance measure $f(\xb_T) - f(\xs)$ is then expressed as the gap towards the Nash equilibrium: $\sup_{\y} g(\xb_T, \y) - \inf_\x \sup_\y g(\x,\y)$. 
Crucially, their analysis incorporated the idea of optimism from online learning into the game-theoretic framework.
In contrast, our work reformulates these techniques by removing the game-theoretic interpretation and instead provides a direct \conversion perspective.
Notably, the technical core of our \pref{thm:conversion} closely parallels Theorem 1 in their work, and our framework can also re-derive their Algorithm 2 (our \pref{alg:our-conversion}).
We believe that \conversion provides a more general and accessible lens for understanding offline optimization compared to game theory.
Importantly, our analysis reveals that \emph{optimism is essential for acceleration}.
Interestingly, optimism also plays a critical role in the game-theoretic approach~\citep{NIPS'15:Syrgkanis,ICML'22:TVgame}, suggesting that optimism may be a fundamental principle across online learning, game theory, and offline optimization.

\paragraph{Convergence Guarantee.}
Consequently, we present a simple yet optimal algorithm for convex smooth optimization.
Our algorithm template is summarized in \pref{alg:our-conversion}.
In \pref{line:submit}, unlike the stabilized \conversion, we query the gradient $\nabla f(\xt_{t+1})$ rather than $\nabla f(\xb_t)$ from the oracle.
In \pref{line:update}, the learner first observes the loss vector of the next round, $\alpha_{t+1} \nabla f(\xt_{t+1})$, and then updates its decision from $\x_t$ to $\x_{t+1}$.
This approach differs from classical online learning, where the learner makes her decision first and then observes the loss.
This represents the key algorithmic difference between our optimistic \conversion and the stabilized conversion.
We intentionally leave the step size $\eta_t$, online-to-batch weights $\seq{\alpha_t}$, and final output unspecified in \pref{alg:our-conversion} for flexibility.

In the following, we offer the convergence rate of \pref{alg:our-conversion} and defer the proof to \pref{app:non-universal}.
\begin{algorithm}[t]
    \caption{Optimistic Online-to-Batch Conversion with Online Gradient Descent}
    \label{alg:our-conversion}
    \begin{algorithmic}[1]
        \REQUIRE Online-to-batch weights $\seq{\alpha_t}$ and step size $\eta_t$
        \STATE \textbf{Initialize}: $\x_0 \in \X$ \label{line:initialization}
        \FOR{$t=0$ {\bfseries to} $T-1$}
            \STATE Query $\nabla f(\xt_{t+1})$ where $\xt_{t+1} = \frac{1}{A_{t+1}} \big(\sum_{s=1}^{t} \alpha_s \x_s + \alpha_{t+1} \x_t \big)$  \label{line:submit}
            \STATE Update $\x_{t+1} = \Pi_\X\mbr{\x_t - \eta_t \alpha_{t+1} \nabla f(\xt_{t+1})}$ \label{line:update}
        \ENDFOR
        \STATE \textbf{Output}: $\xb_T$ (in Corollaries~\ref{cor:non-universal}, \ref{cor:universal-2grad}) or $\xt_T$ (in \pref{thm:universal-1grad})
        \vfill
    \end{algorithmic}
\end{algorithm}

\begin{myCor}
    \label{cor:non-universal}
    Under \pref{assum:smoothness}, if $f(\cdot)$ is convex and $\X = \R^d$, \pref{alg:our-conversion} with weights $\alpha_t = t$ and step size $\eta_t = \frac{1}{4L}$ for any $t \in [T]$ ensures: $f(\xb_T) - f(\xs) \le \O(L \|\x_0 - \xs\|^2 / T^2)$.
\end{myCor}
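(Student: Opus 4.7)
The plan is to plug the gradient-descent update $\x_t = \x_{t-1} - \eta\,\alpha_t\nabla f(\xt_t)$ (with $\eta = 1/(4L)$) into the decomposition provided by Theorem~\ref{thm:conversion}, then show that the ``optimistic residual'' (the second sum) is absorbed by the negative stability term that naturally emerges in the look-ahead regret analysis of the first sum. Since $\X = \R^d$, no projection appears and the update is a plain Euclidean step, which keeps the calculations clean. The final step is simply to divide by $A_T = T(T+1)/2 = \Theta(T^2)$.

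For the first sum, write $\g_t \define \alpha_t \nabla f(\xt_t)$ so the update becomes $\x_t - \x_{t-1} = -\eta \g_t$. Expanding $\|\x_t - \xs\|^2 = \|\x_{t-1} - \xs - \eta\g_t\|^2$ gives the identity
\[
\inner{\g_t}{\x_{t-1} - \xs} = \tfrac{1}{2\eta}\bigl(\|\x_{t-1}-\xs\|^2 - \|\x_t - \xs\|^2\bigr) + \tfrac{\eta}{2}\|\g_t\|^2,
\]
and then $\inner{\g_t}{\x_t - \xs} = \inner{\g_t}{\x_{t-1}-\xs} - \eta\|\g_t\|^2$, which leaves a \emph{negative} $-\tfrac{\eta}{2}\|\g_t\|^2$ after telescoping. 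This sign flip is precisely the advantage of the ``look-ahead'' regret form from Theorem~\ref{thm:conversion} over the standard regret, and it is the key ingredient that pairs with the optimistic residual. Telescoping yields
\[
\sum_{t=1}^{T} \inner{\g_t}{\x_t - \xs} \le \tfrac{1}{2\eta}\|\x_0 - \xs\|^2 - \tfrac{\eta}{2}\sum_{t=1}^{T}\|\g_t\|^2.
\]

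For the second sum I would use smoothness together with the elementary identity $\xb_t - \xt_t = (\alpha_t/A_t)(\x_t - \x_{t-1})$, which follows directly from the two definitions in \pref{eq:xt-xb}. Cauchy--Schwarz and Assumption~\ref{assum:smoothness} give
\[
\alpha_t \inner{\nabla f(\xb_t) - \nabla f(\xt_t)}{\x_t - \x_{t-1}} \le L\,\tfrac{\alpha_t^2}{A_t}\,\|\x_t - \x_{t-1}\|^2 = L\,\tfrac{\alpha_t^2}{A_t}\,\eta^2\,\|\g_t\|^2,
\]
using again $\x_t - \x_{t-1} = -\eta\g_t$. For this bound to be dominated by $\tfrac{\eta}{2}\|\g_t\|^2$ from the first sum, it suffices that $\eta \le A_t/(2L\alpha_t^2)$. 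With $\alpha_t = t$, one has $A_t/\alpha_t^2 = (t+1)/(2t) \ge 1/2$, so the uniform choice $\eta = 1/(4L)$ works for every $t$.

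Combining the two bounds makes the $\|\g_t\|^2$ terms cancel and yields $A_T[f(\xb_T) - f(\xs)] \le \tfrac{1}{2\eta}\|\x_0-\xs\|^2 = 2L\|\x_0 - \xs\|^2$. Dividing by $A_T \ge T^2/2$ delivers the advertised $\O(L\|\x_0 - \xs\|^2/T^2)$ rate. I do not foresee any genuine obstacle: the only delicate moment is verifying the identity $\xb_t - \xt_t = (\alpha_t/A_t)(\x_t - \x_{t-1})$ and the step-size feasibility $\eta \le A_t/(2L\alpha_t^2)$; both are essentially one-line checks once the weight schedule $\alpha_t = t$ is fixed.
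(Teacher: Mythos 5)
Your proposal is correct and follows essentially the same route as the paper: it applies \pref{thm:conversion}, bounds the look-ahead regret term by a telescoping distance argument with a negative stability term (the paper invokes \pref{lem:Bregman-proximal}, you expand the Euclidean square directly, which is the same computation when $\X = \R^d$), bounds the optimistic residual by $L\,\alpha_t^2/A_t\,\|\x_t-\x_{t-1}\|^2$ via smoothness and \pref{eq:UniXGrad-conversion}, and checks that $\eta = \tfrac{1}{4L}$ with $\alpha_t = t$ makes the positive terms cancel. The constants and the final bound $2L\|\x_0-\xs\|^2/A_T$ match the paper's.
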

Note that the convergence rate scales with the initial distance to the minimizer $\|\x_0 - \xs\|$, and we do \emph{not} require the feasible domain to be bounded here, i.e., without \pref{assum:domain}. 
Observe that \pref{alg:our-conversion} coincides with Algorithm 2 in \citet{NeurIPS'18:Wang}; both analyses isolate the same underlying online learning subproblem (\pref{thm:conversion}). Our contribution is not to introduce a new algorithm, but to showcase how the online-to-batch framework furnishes a fresh route to this effective method and, more importantly, unlocks the new results in \pref{subsec:conversion-universal}.

\subsection{Acceleration for Smooth and Strongly Convex Functions}
\label{subsec:conversion-scvx}
In this part, we extend our optimistic \conversion in \pref{subsec:conversion-acceleration} to strongly convex objectives, where a strongly convex function is formally defined as follows.
\begin{myDef}[Strong Convexity]
    \label{def:sconvex}
    A function $f(\cdot)$ is $\lambda$-strongly convex over $\X$ if $f(\x) - f(\y) \le \inner{\nabla f(\x)}{\x - \y} - \frac{\lambda}{2} \cdot \|\x - \y\|^2$ for any $\x,\y \in \X$.
    \vspace{-1mm}
\end{myDef}
Since strong convexity is a specialization of convexity, we apply our general stabilized \conversion analysis to the convex surrogate \smash{$\fh(\x) \define f(\x) - \frac{\lambda}{2} \|\x\|^2$}, and then reconcile the gap between \smash{$\fh(\cdot)$} and the original objective $f(\cdot)$.
We present the resulting optimistic \conversion for strongly convex objectives in \pref{thm:conversion-scvx}, with the proof deferred to \pref{app:conversion-scvx}.
\begin{myThm}
    \label{thm:conversion-scvx}
    If the objective function $f(\cdot)$ is $\lambda$-strongly convex (in \pref{def:sconvex}), then we have
    \begin{equation}
        A_T \mbr{f(\xb_T) - f(\xs)} \le \sumT \alpha_t \mbr{h_t(\x_t) - h_t(\xs)} + \sumT \alpha_t \inner{\nabla \fh(\xb_t) - \nabla \fh(\xt_t)}{\x_t - \x_{t-1}},
    \end{equation}
    where \smash{$\fh(\cdot) \define f(\cdot) - \frac{\lambda}{2} \|\cdot\|^2$} and \smash{$h_t(\cdot) \define \inner{\nabla \fh(\xt_t)}{\cdot} + \frac{\lambda}{2} \|\cdot\|^2$} is a $\lambda$-strongly convex surrogate.
\end{myThm}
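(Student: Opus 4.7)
The plan is to reduce the strongly convex case to the convex case already handled by \pref{thm:conversion}, while carefully tracking the quadratic term so that it reassembles into the strongly convex surrogate $h_t$. Concretely, I would first invoke the standard reduction: since $f$ is $\lambda$-strongly convex, the shifted function $\fh(\x) \define f(\x) - \frac{\lambda}{2}\|\x\|^2$ is convex. Applying \pref{thm:conversion} directly to $\fh$ yields
\begin{equation*}
A_T \mbr{\fh(\xb_T) - \fh(\xs)} \le \sumT \inner{\alpha_t \nabla \fh(\xt_t)}{\x_t - \xs} + \sumT \alpha_t \inner{\nabla \fh(\xb_t) - \nabla \fh(\xt_t)}{\x_t - \x_{t-1}},
\end{equation*}
which already produces the second (optimistic) term in the desired bound.

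Next, to recover a statement about $f$ rather than $\fh$, I would write the identity $f(\xb_T) - f(\xs) = \fh(\xb_T) - \fh(\xs) + \frac{\lambda}{2}\sbr{\|\xb_T\|^2 - \|\xs\|^2}$ and multiply by $A_T$. The key observation is that $\xb_T = \frac{1}{A_T}\sumT \alpha_t \x_t$ is a convex combination of $\{\x_t\}$, so Jensen's inequality applied to the convex map $\x \mapsto \|\x\|^2$ gives $A_T \|\xb_T\|^2 \le \sumT \alpha_t \|\x_t\|^2$. Combined with the trivial identity $A_T \|\xs\|^2 = \sumT \alpha_t \|\xs\|^2$, this yields
\begin{equation*}
A_T \cdot \tfrac{\lambda}{2}\sbr{\|\xb_T\|^2 - \|\xs\|^2} \le \tfrac{\lambda}{2} \sumT \alpha_t \sbr{\|\x_t\|^2 - \|\xs\|^2}.
\end{equation*}

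Finally, I would add this bound to the one obtained from \pref{thm:conversion} applied to $\fh$, and regroup. The per-term summand on the right becomes $\alpha_t \bigl[\inner{\nabla \fh(\xt_t)}{\x_t - \xs} + \tfrac{\lambda}{2}\sbr{\|\x_t\|^2 - \|\xs\|^2}\bigr]$, which is exactly $\alpha_t \mbr{h_t(\x_t) - h_t(\xs)}$ by the definition $h_t(\cdot) = \inner{\nabla \fh(\xt_t)}{\cdot} + \frac{\lambda}{2}\|\cdot\|^2$, delivering the claimed inequality. Note that $h_t$ is $\lambda$-strongly convex since its Hessian is $\lambda I$, so it will be immediately useful downstream.

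I do not expect a substantive obstacle here: the crux is simply the Jensen bookkeeping that reconstitutes the surrogate $h_t$ with its quadratic piece, while the optimistic gradient-difference term passes through unchanged because $\nabla f$ and $\nabla \fh$ differ by a linear term $\lambda \x$ whose contribution in $\inner{\nabla \fh(\xb_t) - \nabla \fh(\xt_t)}{\x_t - \x_{t-1}}$ is identical whether written with $\fh$ or $f$ (so either form could appear in the statement; the $\fh$ form is the cleanest since it mirrors $h_t$).
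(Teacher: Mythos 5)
Your proposal is correct and follows essentially the same route as the paper: apply \pref{thm:conversion} to the convex shift $\fh$, then absorb $\frac{\lambda}{2}A_T(\|\xb_T\|^2-\|\xs\|^2)$ into the per-round terms via $A_T\|\xb_T\|^2 \le \sumT \alpha_t\|\x_t\|^2$ (the paper writes this as Cauchy--Schwarz, you as Jensen; they are the same bound), reassembling $h_t$. One small correction to your closing aside: the optimistic term is \emph{not} the same whether written with $\nabla\fh$ or $\nabla f$, since $\nabla\fh(\xb_t)-\nabla\fh(\xt_t) = \nabla f(\xb_t)-\nabla f(\xt_t) - \lambda(\xb_t-\xt_t)$ and $\xb_t\ne\xt_t$ in general~---~but this does not affect your proof of the stated inequality, which correctly uses the $\fh$ form throughout.
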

Building on \pref{thm:conversion-scvx}, we leverage the one-step variant of optimistic online gradient descent~\citep{TCS'20:Joulani} as the online learning algorithm with carefully designed optimisms to achieve the optimal accelerated convergence.
We present the update rule and its guarantee in \pref{thm:result-scvx}, with the proof deferred to \pref{app:result-scvx}.
\begin{myThm}
    \label{thm:result-scvx}
    Under \pref{assum:smoothness}, suppose the objective $f(\cdot)$ is $\lambda$-strongly convex and $\X = \R^d$. If the algorithm updates as
    \begin{equation*}
        \x_{t+1} = \x_t - \frac{1}{\lambda A_t} \sbr{\nabla h_t(\x_t) - M_t + M_{t+1}},\quad
        M_t =
        \begin{cases}
            \alpha_1 \nabla \fh(\xt_1) + \alpha_1 \x_1, & t = 1 \\[4pt]
            \alpha_t \nabla \fh(\xt_t) + \alpha_t \x_{t-1}, & t \geq 2
        \end{cases},
    \end{equation*}
    with $\alpha_1=1$ and $\alpha_t = \frac{1}{4 \sqrt{\kappa}} A_{t-1}$ for $t \ge 2$, then it holds that
    \begin{equation*}
        f(\xb_T) - f(\xs) \le \O\sbr{\lambda \norm{\x_1 - \xs}^2 \cdot \exp \sbr{-\frac{T-1}{1+2\sqrt{\kappa}}}},
    \end{equation*}
    where $\kappa \define L/\lambda$ represents the condition number.
\end{myThm}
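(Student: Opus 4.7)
The plan is to combine the decomposition from \pref{thm:conversion-scvx} with a regret analysis of the one-step OOGD update specified in the theorem, and then exploit the geometric growth of $\{A_t\}_t$ under $\alpha_t = A_{t-1}/(4\sqrt{\kappa})$ to convert an $\O(1/A_T)$-type suboptimality bound into exponential convergence.

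First, I would invoke \pref{thm:conversion-scvx} to bound $A_T[f(\xb_T) - f(\xs)]$ by the sum of a strongly-convex online regret $\sumT \alpha_t[h_t(\x_t) - h_t(\xs)]$ and a conversion-side correction term $\sumT \alpha_t \inner{\nabla \fh(\xb_t) - \nabla \fh(\xt_t)}{\x_t - \x_{t-1}}$. For the correction term, I would use the identity $\xb_t - \xt_t = (\alpha_t/A_t)(\x_t - \x_{t-1})$ together with the $L$-smoothness of $\fh$ (inherited from $f$) to upper-bound it by $\sumT (L \alpha_t^2/A_t) \norm{\x_t - \x_{t-1}}^2$ after a Young-type inequality.

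Second, I would perform the regret analysis of the one-step OOGD update on the $\lambda$-strongly convex surrogates $\alpha_t h_t(\cdot)$. A standard potential-function argument, tailored to the effective step size $1/(\lambda A_t)$ which matches the cumulative strong-convexity $\lambda A_t$, should yield a template of the form
\begin{equation*}
    \sumT \alpha_t \mbr{h_t(\x_t) - h_t(\xs)} \les \lambda A_1 \norm{\x_1 - \xs}^2 + \sumT \frac{\norm{\alpha_t \nabla h_t(\x_t) - M_t}^2}{\lambda A_t} - \sumT \lambda A_{t-1} \norm{\x_t - \x_{t-1}}^2
\end{equation*}
up to boundary corrections from the one-step variant. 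The algebraic key is to plug in $\nabla h_t(\x) = \nabla \fh(\xt_t) + \lambda \x$ and the explicit form of $M_t$: the optimism error $\alpha_t \nabla h_t(\x_t) - M_t$ then collapses into a multiple of $\x_t - \x_{t-1}$, so that the optimism term contributes an $\norm{\x_t - \x_{t-1}}^2$-quantity of the same order as the conversion-side correction term from the previous paragraph.

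Third, I would combine both bounds and show that all positive $\norm{\x_t - \x_{t-1}}^2$-terms, which after simplification are of order $L \alpha_t^2/A_t$, are strictly dominated by the negative stability $\lambda A_{t-1} \norm{\x_t - \x_{t-1}}^2$ under the choice $\alpha_t = A_{t-1}/(4\sqrt{\kappa})$. This choice is precisely designed so that $L \alpha_t^2/A_t \le c \cdot \lambda A_{t-1}$ with some $c < 1$, and I expect this matching to be the main technical obstacle: it requires term-by-term tracking in the one-step OOGD analysis and verifying that the constants genuinely deliver a cancellation rather than merely a balance. Once the cancellation is established, we obtain $A_T [f(\xb_T) - f(\xs)] \les \lambda A_1 \norm{\x_1 - \xs}^2$, and the recursion $A_t = (1 + 1/(4\sqrt{\kappa})) A_{t-1}$ for $t \ge 2$ yields $A_T/A_1 \ge (1 + 1/(4\sqrt{\kappa}))^{T-1}$; combining with the elementary bound $(1+x)^{-n} \le \exp(-nx/(1+x))$ with $x = 1/(4\sqrt{\kappa})$ converts the $A_1/A_T$ factor into the claimed exponential rate, matching the statement up to constants absorbed in $\O(\cdot)$.
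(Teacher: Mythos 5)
Your proposal follows essentially the same route as the paper's proof: it invokes \pref{thm:conversion-scvx}, runs the one-step optimistic OGD regret analysis with step size $1/(\lambda A_t)$ so that the telescoping distance terms are absorbed by the strong-convexity of $h_t$, observes that the chosen optimism collapses the prediction error into a multiple of $\x_t - \x_{t-1}$, cancels all positive $\norm{\x_t - \x_{t-1}}^2$ terms (including the conversion-side term bounded via $A_t(\xt_t - \xb_t) = \alpha_t(\x_{t-1} - \x_t)$ and smoothness) against the negative stability terms under $\alpha_t = A_{t-1}/(4\sqrt{\kappa})$, and converts the geometric growth of $A_t$ into the exponential rate via $(1+x)^{-n} \le \exp(-nx/(1+x))$. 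This is the paper's argument in all essential respects, and you correctly identify the constant-tracking in the cancellation step as the main technical burden.
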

\pref{thm:conversion-scvx} achieves the optimal convergence rate for strongly convex and smooth objectives, scaling optimally with both the iteration number $T$ and condition number $\kappa$.
Notably, this is the \emph{first} time that the optimal convergence rate for strongly convex and smooth objectives has been achieved through online-to-batch conversion.

We note that while optimism remains essential for achieving optimality in strongly convex optimization, the key insight is that the optimistic mechanisms are now distributed between the O2B conversion and the online learning algorithm, making the optimal convergence attainable through their cooperative interaction.

\subsection{Universality to Smooth and Non-smooth Functions}
\label{subsec:conversion-universal}
In this part, we focus on making our optimistic \conversion \mbox{\emph{universal to smoothness}} to enhance its robustness while maintaining computational efficiency comparable to non-universal methods.
Universality means the method can adapt to both smooth and non-smooth objectives without requiring prior knowledge of the smoothness parameter.
This problem has received considerable attention in the literature~\citep{MP'15:Nesterov,NeurIPS'19:UniXGrad,COLT'24:Kreisler,ICML'24:Rodomanov,MP'25:Lan} as the smoothness parameter is often unknown and challenging to estimate in practice.

To begin with, we demonstrate that \pref{alg:our-conversion} can be made universal by simply using an AdaGrad-type step size~\citep{JMLR'11:Duchi}, also known as self-confident tuning in the online learning literature~\citep{JCSS'02:self-confident}.
The proof is deferred to \pref{app:universal-proof-1}.
\begin{myCor}
    \label{cor:universal-2grad}
    Under \pref{assum:domain}, if the objective $f(\cdot)$ is convex, \pref{alg:our-conversion} with weights $\alpha_t = t$ and step sizes
    \begin{equation}
        \label{eq:step-size-2grad}
        \eta_t = \frac{D}{\sqrt{\sum_{s=1}^t \alpha_s^2 \|\nabla f(\xb_s) - \nabla f(\xt_s)\|^2}},
    \end{equation}
    guarantees $f(\xb_T) - f(\xs) \leq \O(LD^2/T^2)$ under \pref{assum:smoothness}, and $f(\xb_T) - f(\xs) \le \O(GD/\sqrt{T})$ when the objective is non-smooth with $\|\nabla f(\cdot)\| \le G$.
\end{myCor}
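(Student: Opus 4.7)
The plan is to apply the optimistic conversion of \pref{thm:conversion} to reduce the suboptimality gap to a look-ahead regret plus an optimistic gap, and then control each using the adaptive step size. For the look-ahead regret $\sum_t \alpha_t \langle \nabla f(\xt_t), \x_t - \xs\rangle$, the key observation is that \pref{alg:our-conversion} performs exactly the prox step $\x_t = \Pi_\X\mbr{\x_{t-1} - \eta_{t-1}\alpha_t\nabla f(\xt_t)}$ \emph{against this very loss vector}. The three-point prox inequality, followed by telescoping, the monotone non-increase of $\eta_t$, and the diameter bound of \pref{assum:domain}, yields
\begin{equation*}
    \sum_t \alpha_t \langle \nabla f(\xt_t), \x_t - \xs\rangle \le \frac{D^2}{2\eta_{T-1}} - \sum_t \frac{1}{2\eta_{t-1}} \|\x_t - \x_{t-1}\|^2.
\end{equation*}
For the optimistic gap $\sum_t\alpha_t\langle\nabla f(\xb_t)-\nabla f(\xt_t),\x_t-\x_{t-1}\rangle$, Cauchy--Schwarz followed by Young's inequality with weights $\eta_{t-1}$ produces $\sum_t\tfrac{\eta_{t-1}}{2}\alpha_t^2\|\nabla f(\xb_t)-\nabla f(\xt_t)\|^2 + \sum_t\tfrac{1}{2\eta_{t-1}}\|\x_t-\x_{t-1}\|^2$, and the stability terms cancel exactly to leave the master inequality
\begin{equation*}
    A_T[f(\xb_T) - f(\xs)] \le \frac{D^2}{2\eta_{T-1}} + \frac{1}{2}\sum_t \eta_{t-1}\alpha_t^2 \|\nabla f(\xb_t) - \nabla f(\xt_t)\|^2.
\end{equation*}

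Plugging in the AdaGrad-type choice $\eta_t = D/\sqrt{V_t}$ with $V_t \define \sum_{s\le t}\alpha_s^2\|\nabla f(\xb_s)-\nabla f(\xt_s)\|^2$, and invoking the standard adaptive step-size lemma (with the usual handling of the $V_0$ boundary), collapses both parts of the master inequality to $A_T[f(\xb_T) - f(\xs)] \le \O(D\sqrt{V_T})$. It then remains to bound $V_T$ in the two regimes. The non-smooth case is immediate: $\|\nabla f(\cdot)\|\le G$ together with the triangle inequality gives $V_T \le 4G^2\sum_t\alpha_t^2 = \O(G^2 T^3)$ for $\alpha_t = t$, and dividing $\O(D\sqrt{V_T})$ by $A_T = \Theta(T^2)$ yields $\O(GD/\sqrt{T})$. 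For the smooth case, combining $L$-smoothness with the identity $\xb_t - \xt_t = (\alpha_t/A_t)(\x_t-\x_{t-1})$ that one reads directly off \pref{eq:xt-xb} gives $\alpha_t^2\|\nabla f(\xb_t)-\nabla f(\xt_t)\|^2 \le L^2(\alpha_t^4/A_t^2)\|\x_t-\x_{t-1}\|^2 = \O(L^2)\|\x_t-\x_{t-1}\|^2$ under $\alpha_t = t$ since $A_t = \Theta(t^2)$; hence $V_T \le \O(L^2)\sum_t\|\x_t-\x_{t-1}\|^2$.

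The hard part is closing the loop in the smooth regime, since the crude estimate $\|\x_t - \x_{t-1}\|\le D$ only delivers $V_T = \O(L^2 T D^2)$ and spoils the $T^{-2}$ rate by a $\sqrt{T}$ factor. My plan is a self-bounding argument in the spirit of UniXGrad: redo the decomposition but this time bound the optimistic gap with smoothness directly to obtain $\sum_t\alpha_t\langle\nabla f(\xb_t)-\nabla f(\xt_t),\x_t-\x_{t-1}\rangle \le 2L\sum_t\|\x_t-\x_{t-1}\|^2$, so that the combined coefficient on $\|\x_t-\x_{t-1}\|^2$ becomes $2L - \tfrac{1}{2\eta_{t-1}}$. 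Because $\eta_t$ is monotone decreasing in $V_t$, once $V_{t-1}$ exceeds a constant multiple of $L^2 D^2$ the negative half dominates and absorbs all further contributions; the brief warm-up phase before that threshold can be bounded explicitly through the diameter. Putting the two estimates together forces $\sum_t\|\x_t-\x_{t-1}\|^2 = \O(D^2)$, whence $V_T = \O(L^2 D^2)$ and therefore $A_T[f(\xb_T)-f(\xs)] \le \O(LD^2)$. Dividing by $A_T = \Theta(T^2)$ produces the accelerated rate $\O(LD^2/T^2)$, and jointly with the non-smooth estimate this establishes the claimed universal guarantees without the algorithm needing to know $L$.
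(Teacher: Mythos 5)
Your overall architecture --- the decomposition via \pref{thm:conversion}, the Bregman-proximal treatment of the look-ahead regret with the adaptive step size, and the entire non-smooth case --- matches the paper's proof. The genuine gap is in how you close the smooth case. You propose to bound the optimistic term directly by $2L\sum_t\|\x_t-\x_{t-1}\|^2$ and argue that once $V_{t-1}\define\sum_{s\le t-1}\alpha_s^2\|\nabla f(\xb_s)-\nabla f(\xt_s)\|^2$ exceeds a constant multiple of $L^2D^2$ the negative stability term $-\tfrac{1}{2\eta_{t-1}}\|\x_t-\x_{t-1}\|^2$ absorbs everything, while the ``brief'' warm-up is controlled through the diameter. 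This does not close: the step size adapts to the gradient \emph{variation}, not to $L$, so the threshold may never be reached. If $f$ is $L$-smooth but its gradient varies only mildly over $\X$ (effective curvature $L'\ll L$), then $V_t$ grows like $L'^2D^2t$ and the warm-up lasts $\Theta((L/L')^2)$ rounds --- possibly all of $[T]$ --- during which your per-round bound $2L\|\x_t-\x_{t-1}\|^2\le 2LD^2$ accumulates to $\Theta(LTD^2)$ rather than $\O(LD^2)$. Relatedly, the intermediate claim that the argument ``forces $\sum_t\|\x_t-\x_{t-1}\|^2=\O(D^2)$'' (hence $V_T=\O(L^2D^2)$) is false in general: the iterates can move by $\Theta(D)$ every round, and the threshold argument only controls a signed combination, not the path length itself; smoothness gives $\|\x_t-\x_{t-1}\|^2\ge B_t^2/(4L^2)$ with $B_t^2\define\alpha_t^2\|\nabla f(\xt_t)-\nabla f(\xb_t)\|^2$, a lower bound in the wrong direction for your purposes.

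The paper closes the loop differently, and you need its mechanism. Keep the Young's-inequality split of the optimistic term so that the master inequality retains a leftover negative $-\sum_t\tfrac{1}{4\eta_t}\|\x_t-\x_{t-1}\|^2$; convert it via smoothness and the identity $A_t(\xt_t-\xb_t)=\alpha_t(\x_{t-1}-\x_t)$ into $-\sum_t\tfrac{1}{16L^2\eta_t}B_t^2=-\sum_t\tfrac{B_t^2\sqrt{V_t}}{16L^2D}$; and then bound
\begin{equation*}
    3D\sqrt{V_T}-\frac{1}{16L^2D}\sum_{t=1}^T B_t^2\sqrt{V_t}\le\O(LD^2)
\end{equation*}
by \pref{lem:UniXGrad-stepsize}, which states $c_1\sqrt{\sum_sB_s^2}-\frac{1}{c_2}\sum_sB_s^2\sqrt{\sum_{k\le s}B_k^2}\le 2c_1^{3/2}c_2^{1/2}$. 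This handles the trade-off between the positive $\sqrt{V_T}$ term and the negative $V^{3/2}$-type term uniformly over all regimes of $V_T$, without ever requiring $V_T$ itself to be $\O(L^2D^2)$.
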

Although \pref{alg:our-conversion} with an AdaGrad-type step size is theoretically optimal and universal, its efficiency is limited by requiring two gradient evaluations, $\nabla f(\xt_s)$ and $\nabla f(\xb_s)$.
This is less efficient than non-universal methods such as \pref{thm:conversion} and \NAG~\eqref{eq:NAG}, which require only one gradient query per iteration.
Furthermore, this inefficiency becomes problematic when gradient evaluation is costly, such as in nuclear norm optimization~\citep{ICML'09:Ye} and mini-batch optimization~\citep{KDD'14:Li}.
The same limitation appears in the work of \citet{NeurIPS'19:UniXGrad}. 

In the following, we demonstrate that it is possible to achieve the same optimal convergence rate using \mbox{\emph{only one}} gradient query per iteration through a further improved optimistic \conversion.
The proof is deferred to \pref{app:conversion-universal}.
\begin{myThm}
    \label{thm:conversion-universal}
    If the objective $f(\cdot)$ is convex, when $\alpha_1 = 1$ and $\alpha_T = 0$, the final term of \smash{$A_T \mbr{f(\xt_T) - f(\xs)}$} can be bounded by
    \begin{equation}
        \sumT \inner{\alpha_t \nabla f(\xt_t)}{\x_t - \xs} + \sum_{t=1}^{T-1} \alpha_t \inner{\nabla f(\xt_{t+1}) - \nabla f(\xt_t)}{\x_t - \x_{t-1}} - \sumTT A_{t-1} \D_f(\xt_{t-1}, \xt_t),
    \end{equation}
    where $\smash{\D_f(\x,\y) \define f(\x) - f(\y) - \inner{\nabla f(\y)}{\x - \y}}$ is the Bregman divergence associated with $f(\cdot)$. 
\end{myThm}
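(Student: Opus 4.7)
The plan is to mirror the two-step argument of \pref{thm:conversion}, but to retarget the left-hand side from $\xb_T$ to $\xt_T$. This retargeting unlocks a genuine telescoping that produces the extra Bregman divergence $-A_{t-1}\D_f(\xt_{t-1}, \xt_t)$ and shifts the optimistic term from $\nabla f(\xb_t) - \nabla f(\xt_t)$ to the strictly look-ahead $\nabla f(\xt_{t+1}) - \nabla f(\xt_t)$, which only depends on gradients from iterations through $t$ and therefore enables the desired one-gradient-per-iteration scheme.

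First I would apply an Abel-type summation. Using $A_t = A_{t-1} + \alpha_t$ and $A_1 = \alpha_1$, a direct expansion yields the identity $A_T f(\xt_T) = \sumT \alpha_t f(\xt_t) + \sumTT A_{t-1}[f(\xt_t) - f(\xt_{t-1})]$, so
\begin{equation*}
A_T [f(\xt_T) - f(\xs)] = \sumT \alpha_t[f(\xt_t) - f(\xs)] + \sumTT A_{t-1}[f(\xt_t) - f(\xt_{t-1})].
\end{equation*}
For the first sum, convexity at $\xt_t$ gives $f(\xt_t) - f(\xs) \le \inner{\nabla f(\xt_t)}{\xt_t - \xs}$, and splitting $\xt_t - \xs = (\x_t - \xs) + (\xt_t - \x_t)$ produces both the target look-ahead regret $\sumT \inner{\alpha_t \nabla f(\xt_t)}{\x_t - \xs}$ and a residual $\sumT \alpha_t \inner{\nabla f(\xt_t)}{\xt_t - \x_t}$. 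For the second sum, the Bregman identity $f(\xt_t) - f(\xt_{t-1}) = \inner{\nabla f(\xt_t)}{\xt_t - \xt_{t-1}} - \D_f(\xt_{t-1}, \xt_t)$ extracts precisely the target term $-\sumTT A_{t-1} \D_f(\xt_{t-1}, \xt_t)$ and leaves another residual $\sumTT A_{t-1}\inner{\nabla f(\xt_t)}{\xt_t - \xt_{t-1}}$.

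The main obstacle is then to fuse the two residuals into the look-ahead form $\sum_{t=1}^{T-1} \alpha_t \inner{\nabla f(\xt_{t+1}) - \nabla f(\xt_t)}{\x_t - \x_{t-1}}$. For each $t \ge 2$, the combined coefficient of $\nabla f(\xt_t)$ is $\alpha_t(\xt_t - \x_t) + A_{t-1}(\xt_t - \xt_{t-1}) = A_t \xt_t - \alpha_t \x_t - A_{t-1}\xt_{t-1}$. I would then invoke the weight identity $A_t \xt_t = A_{t-1} \xb_{t-1} + \alpha_t \x_{t-1}$ from \pref{thm:conversion} together with its one-step shift $A_{t-1}(\xb_{t-1} - \xt_{t-1}) = \alpha_{t-1}(\x_{t-1} - \x_{t-2})$, which collapse this bracket to $\alpha_{t-1}(\x_{t-1} - \x_{t-2}) - \alpha_t(\x_t - \x_{t-1})$. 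A reindex $s = t-1$ in the first piece gives the look-ahead gradient $\nabla f(\xt_{s+1})$, and the assumption $\alpha_T = 0$ aligns the second piece to the same range $1 \le s \le T-1$ at the cost of a leftover $t=1$ contribution; this contribution combines with the $t=1$ piece of the first residual and cancels exactly thanks to $\xt_1 = \x_0$. The delicacy of this final cancellation is what makes the boundary conditions $\alpha_T = 0$ and $\xt_1 = \x_0$ both necessary: without either of them, a stray linear-in-$\nabla f(\xt_1)$ boundary term would spoil the clean look-ahead form.
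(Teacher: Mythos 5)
Your proof is correct and follows essentially the same route as the paper's: the same decomposition into look-ahead regret, optimism, and negative Bregman terms, driven by the identity $A_{t-1}(\xt_t - \xt_{t-1}) = \alpha_t(\x_{t-1} - \xt_t) + \alpha_{t-1}(\x_{t-1}-\x_{t-2})$ (which you re-derive from $A_t\xt_t = A_{t-1}\xb_{t-1}+\alpha_t\x_{t-1}$ together with the shifted form of $A_t(\xt_t-\xb_t)=\alpha_t(\x_{t-1}-\x_t)$, exactly as in \pref{app:equations}), the Bregman three-point identity, and the telescoping of $A_{t-1}f(\xt_{t-1}) - A_t f(\xt_t)$. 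The only cosmetic differences are that you run the telescoping upfront as an exact Abel summation rather than deriving it after applying convexity, and that the $t=1$ boundary term is absorbed into (rather than ``cancelled'' from) the look-ahead sum via $\xt_1=\x_0$ and $\alpha_T=0$ --- the bookkeeping still closes correctly.
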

\begin{myRemark}[Technical Comparison]
    The main technical difference from previous conversions is that \pref{thm:conversion-universal} leverages a novel equation: \smash{$A_{t-1} \sbr{\xt_t - \xt_{t-1}} = \alpha_t \sbr{\x_{t-1} - \xt_t} + \alpha_{t-1} \sbr{\x_{t-1} - \x_{t-2}}$}.
    This analytical approach completely eliminates $\nabla f(\xb_t)$ from our derivation, ensuring that the algorithm requires only $\nabla f(\xt_t)$ evaluation at each iteration.
    Readers can refer to \pref{app:equations} for the proof of this equation.
    \endenv
\end{myRemark}
In \pref{thm:universal-1grad} below, we present that the above conversion along with online gradient descent can achieve \emph{universal optimal} rates using only one gradient query per iteration. The corresponding proof is deferred to \pref{app:universal-proof-2}.
\begin{myThm}
    \label{thm:universal-1grad}
    Under \pref{assum:domain}, if the objective $f(\cdot)$ is convex, using weights $\alpha_t = t$ for $t \in [T-1]$, $\alpha_T = 0$, and the step size of
    \begin{equation}
        \label{eq:step-size-1grad}
        \eta_t = \frac{D}{\sqrt{\sum_{s=1}^t \alpha_s^2 \|\nabla f(\xt_{s+1}) - \nabla f(\xt_s)\|^2}},
    \end{equation}
    \pref{alg:our-conversion} enjoys 
    $f(\xt_T) - f(\xs) \leq \O(LD^2/T^2)$ under \pref{assum:smoothness}, and $f(\xt_T) - f(\xs) \le \O(GD/\sqrt{T})$ when the objective is non-smooth with $\|\nabla f(\cdot)\| \le G$.
\end{myThm}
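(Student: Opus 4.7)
The plan is to chain \pref{thm:conversion-universal} with a look-ahead projected gradient analysis and an AdaGrad-style telescoping, and then invoke co-coercivity in the smooth case. Write $a_t \define \alpha_t^2 \|\nabla f(\xt_{t+1}) - \nabla f(\xt_t)\|^2$ and $V_t \define \sum_{s=1}^t a_s$, so that $\eta_t = D/\sqrt{V_t}$ is monotonically decreasing. Applying \pref{thm:conversion-universal} with $\alpha_1 = 1$ and $\alpha_T = 0$ yields the decomposition
\begin{equation*}
    A_T[f(\xt_T) - f(\xs)] \le \underbrace{\sumT \inner{\alpha_t \nabla f(\xt_t)}{\x_t - \xs}}_{R_T} + \underbrace{\sum_{t=1}^{T-1} \alpha_t \inner{\nabla f(\xt_{t+1}) - \nabla f(\xt_t)}{\x_t - \x_{t-1}}}_{\Xi_T} - \underbrace{\sumTT A_{t-1} \D_f(\xt_{t-1}, \xt_t)}_{\Phi_T},
\end{equation*}
so the task reduces to balancing these three contributions.

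Because $\x_{t+1}$ is produced \emph{after} $\alpha_{t+1}\nabla f(\xt_{t+1})$ is observed, the projection inequality immediately delivers $\inner{\alpha_{t+1}\nabla f(\xt_{t+1})}{\x_{t+1} - \xs} \le \frac{1}{2\eta_t}(\|\x_t - \xs\|^2 - \|\x_{t+1} - \xs\|^2) - \frac{1}{2\eta_t}\|\x_{t+1} - \x_t\|^2$. Summing, using Abel summation with the monotonicity $1/\eta_t \ge 1/\eta_{t-1}$, and invoking \pref{assum:domain}, give $R_T \le D^2/(2\eta_{T-1}) - \sum_{t=1}^T \|\x_t - \x_{t-1}\|^2/(2\eta_{t-1})$. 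Applying Young's inequality with parameter $\eta_{t-1}$ to each summand of $\Xi_T$ produces $\Xi_T \le \sum_{t=1}^{T-1}\bigl(\eta_{t-1} a_t/2 + \|\x_t - \x_{t-1}\|^2/(2\eta_{t-1})\bigr)$; the $\x$-stability halves cancel exactly between $R_T$ and $\Xi_T$, leaving
\begin{equation*}
    A_T[f(\xt_T) - f(\xs)] \le \frac{D^2}{2\eta_{T-1}} + \sum_{t=1}^{T-1} \frac{\eta_{t-1} a_t}{2} - \Phi_T \le \frac{3D}{2}\sqrt{V_{T-1}} - \Phi_T,
\end{equation*}
where the final bound combines $D^2/(2\eta_{T-1}) = \tfrac{D}{2}\sqrt{V_{T-1}}$ with an AdaGrad-style telescoping $\sum_t \eta_{t-1} a_t \le 2D\sqrt{V_{T-1}}$, handling the $t=1$ boundary (where $V_0 = 0$) by a standard infinitesimal initialization that vanishes in the final bound.

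For the non-smooth branch, I drop $\Phi_T \ge 0$ and bound $a_t \le 4G^2 \alpha_t^2$, so $V_{T-1} = O(G^2 T^3)$; dividing by $A_T = \Theta(T^2)$ delivers the claimed $O(GD/\sqrt{T})$. For the smooth branch, \pref{assum:smoothness} implies co-coercivity $\D_f(\xt_{t-1}, \xt_t) \ge \|\nabla f(\xt_t) - \nabla f(\xt_{t-1})\|^2/(2L)$; after reindexing and using $A_t/\alpha_t^2 = (t+1)/(2t) \ge 1/2$ for $\alpha_t = t$, this produces $\Phi_T \ge V_{T-1}/(4L)$. The resulting upper bound $\tfrac{3D}{2}\sqrt{V_{T-1}} - V_{T-1}/(4L)$, viewed as a concave function of $\sqrt{V_{T-1}}$, attains a maximum of $O(LD^2)$; dividing by $A_T$ yields $O(LD^2/T^2)$.

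The main obstacle is orchestrating the simultaneous cancellation among the three stability-type quantities---the regret's negative $\x$-stability sum, the optimistic correction's positive Young's piece, and the conversion's negative Bregman term. The crucial design choices are \textit{(i)} using $\eta_{t-1}$ (not $\eta_t$) as the Young's parameter so the $\x$-stability pieces cancel identically, and \textit{(ii)} invoking co-coercivity (rather than a naive smoothness upper bound $\|\nabla f(\xt_t) - \nabla f(\xt_{t-1})\|^2 \le L^2 \|\xt_t - \xt_{t-1}\|^2$) so that the Bregman term is quadratic in the \emph{same} gradient-variation quantity produced by the AdaGrad sum, enabling the self-balancing argument that yields the optimal rate.
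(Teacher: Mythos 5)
Your decomposition via \pref{thm:conversion-universal}, the cancellation strategy among the three stability-type terms, and the smooth-case finish via co-coercivity all match the paper's proof in spirit, and your way of closing the smooth case (lower-bounding the whole Bregman sum by $V_{T-1}/(4L)$ and maximizing the concave scalar function $\tfrac{3D}{2}\sqrt{V_{T-1}} - V_{T-1}/(4L)$) is a valid and arguably cleaner variant of the paper's AM-GM-with-$\gamma$ argument. However, there is one genuine gap, and it sits exactly at what you call ``crucial design choice \textit{(i)}''.

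The inequality $\sum_{t=1}^{T-1}\eta_{t-1}a_t \le 2D\sqrt{V_{T-1}}$ is false in general. The self-confident/AdaGrad telescoping (\pref{lem:self-confident}) controls $\sum_t a_t/\sqrt{V_t}$, where the denominator \emph{includes} the current increment $a_t$; with $\eta_{t-1} = D/\sqrt{V_{t-1}}$ you are instead summing $a_t/\sqrt{V_{t-1}}$, and the ratio $a_t/V_{t-1}$ is not controlled by anything in the problem. Concretely, if the gradient variations $\|\nabla f(\xt_{s+1})-\nabla f(\xt_s)\|$ are negligible for $s<t$ and then of order $G$ at step $t$, then $V_{t-1}$ is arbitrarily small while $a_t = \Theta(G^2t^2)$, so $\eta_{t-1}a_t$ blows up; your proposed ``infinitesimal initialization'' does not rescue this, because the failure is not a $t=1$ boundary artifact but an off-by-one in the AdaGrad denominator that can occur at any round. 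The paper avoids this by applying Young's inequality in the optimistic term with parameter $\eta_t$ (so the gradient-variation piece is $\sum_t \eta_t a_t/2 \le D\sqrt{V_{T-1}}$, which the lemma does cover), and then reconciles the resulting index mismatch in the $\x$-stability terms inside \term{a}: one writes
\begin{equation*}
  -\sum_t \frac{1}{2\eta_{t-1}}\|\x_t-\x_{t-1}\|^2 \le -\sum_t \frac{1}{2\eta_{t}}\|\x_t-\x_{t-1}\|^2 + \sum_t\Big(\frac{1}{2\eta_t}-\frac{1}{2\eta_{t-1}}\Big)D^2,
\end{equation*}
using \pref{assum:domain}, and the telescoped correction $D^2/(2\eta_{T-1}) = \tfrac{D}{2}\sqrt{V_{T-1}}$ is of the same order as the leading term. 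With that substitution your argument goes through unchanged (only the constant in front of $\sqrt{V_{T-1}}$ changes), so the gap is repairable, but as written the step fails.
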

\pref{thm:conversion-universal} leverages the simple online gradient descent to achieve the universal optimal convergence rates, while requiring only one gradient query per iteration, making it as efficient as non-universal methods such as \NAG~\eqref{eq:NAG} and the method in \pref{cor:non-universal}. Furthermore, we note that our results can be straightforwardly extended to the stochastic optimization setting with high-probability rates, which we defer to \pref{app:stochastic} due to page limits. Below we remark two limitations of \pref{thm:universal-1grad}.
\begin{myRemark}[Boundedness Assumption]
    \label{remark:boundedness}
    \pref{thm:universal-1grad} requires bounded feasible domains, i.e., only suitable for constrained optimization. We focus on the constrained setup because online learning is naturally suited for handling constraints. Designing universal methods with accelerated convergence in the unconstrained case is highly challenging. Recent work of \citet{COLT'24:Kreisler} has made some progress in this direction in the context of ``parameter-free'' optimization\footnote{The main parameter in our method is the domain diameter $D$, as shown in \pref{eq:step-size-1grad} in \pref{thm:universal-1grad}. Therefore, extending our method to unconstrained setup is equivalent to achieving parameter-freeness with acceleration.} by combining the methods of \citet{NeurIPS'19:UniXGrad} for acceleration and \citet{ICML'23:DOG} for parameter-freeness. Extending our method to the unconstrained case is highly non-trivial and thus left as an important future direction.
    \endenv
\end{myRemark}

Another limitation of our method is that the weight of the final round $\alpha_T$ must be chosen as $\alpha_T = 0$, which means that the algorithm requires the iteration number $T$ at the beginning.

Finally, we note that achieving the optimal universal rates with strongly convex objectives is still \emph{open} and cannot be directly solved via the universal method proposed in this part. To see this, in \pref{thm:result-scvx}, the online-to-batch conversion weight $\alpha_t$ depends on the smoothness $L$. Therefore, in the universal setup where $L$ is unknown, it is more challenging than the convex case because the method needs to estimate the smoothness parameter on the fly, making this problem highly non-trivial.

\section{Discussions of Conversion-based Methods}
\label{sec:comparison}
In this section, we illuminate the effectiveness of recent O2B conversion methods~\citep{NeurIPS'19:UniXGrad,ICML'20:Joulani}, including ours, by comparing them with classic approaches in convex optimization. Specifically, we first highlight that our algorithm trajectory coincides with \NAG in \pref{subsec:comparison-NAG}. Then we find that stabilized-O2B-conversion-based methods can be interpreted as variants of Polyak's Heavy-Ball in \pref{subsec:comparison-HB}.

\subsection{Comparing Algorithm~\ref{alg:our-conversion} with Nesterov's Accelerated Gradient}
\label{subsec:comparison-NAG}
Our method is algorithmically equivalent to Nesterov's accelerated gradient (\NAG) in \pref{eq:NAG}, under certain parameter configurations. To show this, we leverage a result from \citet{NeurIPS'18:Wang}, with a self-contained proof in \pref{app:connection}.
\begin{myProp}[Theorem 4 of \citet{NeurIPS'18:Wang}]
    \label{prop:connection}
    If $\X = \R^d$, our \pref{alg:our-conversion} with step size $\eta_t = \frac{t+1}{t}\cdot \frac{1}{8L}$ is equivalent to \NAG~\eqref{eq:NAG} with $\theta_t = \frac{1}{4L}$ and $\beta_t = \frac{t-1}{t+2}$ such that $\y_t = \xb_t$ and $\smash{\z_t = \xt_{t+1}}$. Our method can be rewritten as the following equivalent form:
    \begin{equation*}
        \xb_t = \xb_{t-1} + \beta_{t-1} \sbr{\xb_{t-1} - \xb_{t-2}} - \tfrac{1}{4L} \nabla f\sbr{\xb_{t-1} + \beta_{t-1} (\xb_{t-1} - \xb_{t-2})},
    \end{equation*}
    which is exactly Nesterov's accelerated gradient method in a one-step update formulation.
\end{myProp}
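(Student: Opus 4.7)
The plan is to verify the equivalence directly by decomposing both algorithms into two coupled identities and matching them under the correspondence $\y_t \leftrightarrow \xb_t$ and $\z_t \leftrightarrow \xt_{t+1}$. Concretely, it suffices to establish (a) the gradient-step identity $\xb_t = \xt_t - \tfrac{1}{4L}\nabla f(\xt_t)$ and (b) the extrapolation identity $\xt_{t+1} = \xb_t + \tfrac{t-1}{t+2}(\xb_t - \xb_{t-1})$. Once both are in hand, the two \NAG updates $\y_t = \z_{t-1} - \theta_{t-1}\nabla f(\z_{t-1})$ and $\z_t = \y_t + \beta_t(\y_t - \y_{t-1})$ become literal rewrites of (a) and (b) under the stated parameter choices, and the one-step form displayed in the proposition follows immediately by substituting (b) at index $t-1$ into (a).

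To prove (b), I will start from $\xt_{t+1} = (A_t\xb_t + \alpha_{t+1}\x_t)/A_{t+1}$, which follows by regrouping the sum in \pref{eq:xt-xb} using $\sum_{s=1}^t \alpha_s\x_s = A_t\xb_t$, and then invoke the telescoping $A_t\xb_t - A_{t-1}\xb_{t-1} = \alpha_t\x_t$ to express $\x_t$ purely in terms of $\xb_t$ and $\xb_{t-1}$. Substituting back and plugging in $\alpha_t = t$ together with $A_t = t(t+1)/2$, the rational coefficients of $\xb_t$ and $\xb_{t-1}$ simplify to $(2t+1)/(t+2)$ and $-(t-1)/(t+2)$, which rearranges exactly to $\xb_t + \beta_t(\xb_t - \xb_{t-1})$ with $\beta_t = (t-1)/(t+2)$.

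For (a), the elementary consequence of \pref{eq:xt-xb} that $\xb_t - \xt_t = (\alpha_t/A_t)(\x_t - \x_{t-1})$ is the key identity. Combined with the unprojected \pref{alg:our-conversion} update $\x_t - \x_{t-1} = -\eta_{t-1}\alpha_t\nabla f(\xt_t)$, which is available because $\X = \R^d$ makes the projection the identity, this yields $\xb_t = \xt_t - (\alpha_t^2\eta_{t-1}/A_t)\nabla f(\xt_t)$. Inserting $\alpha_t = t$, $A_t = t(t+1)/2$, and the prescribed step size into the scalar coefficient then collapses it to $\tfrac{1}{4L}$, completing (a).

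The main obstacle I anticipate is consistent handling of the initial iterates, since the step-size formula is singular at $t=0$ and the \NAG recursion requires a compatible starting pair. I will resolve this by noting that $\xt_1 = \x_0$ is immediate from the definition at $t=0$, setting $\z_0 = \x_0$ and $\y_0 = \x_0$, and then exploiting $\beta_1 = 0$ to render the first extrapolation step vacuous so that identity (a) at $t=1$ synchronizes both trajectories. The induction proceeds routinely for $t \ge 2$ via (a) and (b) in tandem, and the one-step reformulation in the proposition follows by substituting $\xt_t = \xb_{t-1} + \beta_{t-1}(\xb_{t-1} - \xb_{t-2})$ into the gradient step.
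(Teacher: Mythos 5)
Your proposal is correct and reaches the same conclusion, but it is organized differently from the paper's argument. The paper first derives the combined one-step recursion $\xb_t = \xb_{t-1} + \tfrac{t-2}{t+1}(\xb_{t-1}-\xb_{t-2}) - \tfrac{1}{4L}\nabla f(\xt_t)$ by expanding $\xb_t = \tfrac{1}{A_t}(A_{t-1}\xb_{t-1}+\alpha_t\x_t)$, and then runs an induction in which the identification $\z_{t-1}=\xt_t$ is established by matching the coefficient of every $\x_s$, $s\in[t-1]$, in the two linear combinations. You instead isolate the two half-steps of \NAG as standalone algebraic identities: the gradient step via $\xb_t-\xt_t = \tfrac{\alpha_t}{A_t}(\x_t-\x_{t-1})$ (the paper's \pref{eq:UniXGrad-conversion}) combined with the unprojected update, and the extrapolation step via the telescoping substitution $\alpha_t\x_t = A_t\xb_t - A_{t-1}\xb_{t-1}$, whose coefficients you compute correctly as $\tfrac{2t+1}{t+2}$ and $-\tfrac{t-1}{t+2}$. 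Once these two identities hold, the induction identifying $(\y_t,\z_t)$ with $(\xb_t,\xt_{t+1})$ is immediate, and the one-step form follows by substitution; this avoids the term-by-term coefficient matching and is arguably more transparent. Your treatment of the initialization ($\y_0=\z_0=\x_0=\xt_1$, $\beta_1=0$) is also adequate.

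One caveat: your claim that the scalar coefficient $\tfrac{\alpha_t^2\eta_{t-1}}{A_t}$ ``collapses to $\tfrac{1}{4L}$'' requires the convention $\eta_{t-1}=\tfrac{t+1}{t}\cdot\tfrac{1}{8L}$ (i.e., the step producing $\x_t$ uses $\tfrac{t+1}{t}\cdot\tfrac{1}{8L}$); under a literal reading of the stated $\eta_t=\tfrac{t+1}{t}\cdot\tfrac{1}{8L}$ one would get $\eta_{t-1}=\tfrac{t}{t-1}\cdot\tfrac{1}{8L}$ and the coefficient $\tfrac{t^2}{t^2-1}\cdot\tfrac{1}{4L}\neq\tfrac{1}{4L}$. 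The paper's own proof explicitly adopts the former convention (it writes $\eta_{t-1}=\tfrac{t+1}{t}\cdot\tfrac{1}{8L}$, which also makes $\eta_0=\tfrac{1}{4L}$ well-defined rather than singular), so your arithmetic is consistent with the intended reading, but you should state the index convention explicitly to make the $\tfrac{1}{4L}$ claim airtight.
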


\subsection{Comparing Previous Methods with Polyak's Heavy-Ball}
\label{subsec:comparison-HB}
\citet{NeurIPS'19:UniXGrad,ICML'20:Joulani} in this thread both adopted the optimistic online learning framework, as shown in~\pref{eq:OOGD}, but with slightly different configurations. Specifically, $\nabla \ell_t(\w_t)$ equals $\alpha_t \nabla f(\xb_t)$ in offline convex optimization. Differently, \citet{NeurIPS'19:UniXGrad} chose the optimism $M_t = \alpha_t \nabla f(\xt_t)$ whereas \citet{ICML'20:Joulani} used $M_t = \alpha_t \nabla f(\xb_{t-1})$. For clarity, their optimistic updates can be rewritten in a one-step formulation~\citep{TCS'20:Joulani}:
\begin{equation*}
    \x_{t+1} = \x_t - \eta_t \g_t, \quad  
    \g_t = \begin{cases}
        - \alpha_t \nabla f(\xt_t) + \alpha_t \nabla f(\xb_t) + \alpha_{t+1} \nabla f(\xt_{t+1}) & \text{\citep{NeurIPS'19:UniXGrad}}, \\[2pt]
        - \alpha_t \nabla f(\xb_{t-1}) + \alpha_t \nabla f(\xb_t) + \alpha_{t+1} \nabla f(\xb_t) & \text{\citep{ICML'20:Joulani}},
    \end{cases}
\end{equation*}
where $\g_t$ is a multi-step gradient. Furthermore, due to simple derivations, the above one-step update is equivalent to the following rule in terms of $\{\xb_t\}_t$:
\begin{equation}
    \label{eq:one-step-OOGD}
    \xb_t = \xb_{t-1} + \beta_{t-1} \sbr{\xb_{t-1} - \xb_{t-2}} - \eta_{t-1} \cdot \big(\tfrac{\alpha_t}{A_t} \g_{t-1}\big), \text{ where } \beta_{t-1}=\tfrac{\alpha_t A_{t-2}}{A_t \alpha_{t-1}}.
\end{equation}
On the other hand, a classic method in convex optimization named \mbox{Polyak's Heavy-Ball}~(\HB)~\citep{polyak'64} equips gradient descent with momentum and updates as
\begin{equation}
    \label{eq:heavy-ball}
    \z_t = \z_{t-1} - \beta_{t-1}^\prime (\z_{t-1} - \z_{t-2}) - \eta_{t-1}^\prime \nabla f(\z_{t-1}),
\end{equation}
where $\beta_t^\prime$ denotes the momentum parameter and $\eta_t^\prime$ is the step size. Comparing \pref{eq:one-step-OOGD} with \HB~\eqref{eq:heavy-ball}, we can find that the trajectories of $\{\xb_t\}_t$ and $\{\z_t\}_t$ are similar except that \HB uses the gradient of $\nabla f(\z_{t-1})$ while \pref{eq:one-step-OOGD} adopts a corrected gradient of $\smash{\frac{\alpha_t}{A_t} \g_{t-1}}$. Note that although \HB itself \emph{cannot} achieve acceleration for general convex smooth objectives, its variants with corrected gradients can do this for (strongly) convex and smooth objectives~\citep{ICLR'25:Wei}. Therefore, previous stabilized-conversion-based methods can be treated as variants of \HB with corrected gradients, which might explain their success in achieving acceleration.
\section{Experiments}
\label{sec:experiment}
In this section, we conduct numerical experiments to validate the effectiveness of our proposed methods. We evaluate our methods in the squared loss minimization and logistic regression tasks across multiple \href{https://www.csie.ntu.edu.tw/~cjlin/libsvmtools/datasets/}{\textcolor{blue}{LIBSVM}} datasets under both non-universal and universal settings, comparing against classic methods including \NAG, gradient descent, UniXGrad~\citep{NeurIPS'19:UniXGrad}, and the method in \citet{ICML'20:Joulani}. The results demonstrate that our method achieves comparable or superior convergence performance while maintaining competitive computational efficiency. \textbf{Detailed setup descriptions and experimental results can be found in \pref{app:experiment}.}

\section{Conclusion}
\label{sec:conclusion}
In this paper, we focus on convex smooth optimization and study the role of optimism for achieving acceleration. Previous state-of-the-art methods rely on the stabilized \conversion and achieve the ability of acceleration via optimistic online learning algorithms. In this work, we propose optimistic online-to-batch conversions that introduce optimism implicitly in the analysis, allowing acceleration using the simple online gradient descent. Our optimistic online-to-batch conversion can also be extended to the strongly convex case with the optimal convergence therein. Furthermore, we consider making our method universal to smoothness and introduce an improved optimistic online-to-batch conversion method that only requires one gradient query per iteration, making it as efficient as non-universal methods, while maintaining the optimal convergence rates. We also conduct the numerical experiments to evaluate the performance of the proposed methods.

Two directions are worth future exploration. The first is to extend our method to unconstrained domains, which is highly non-trivial and challenging. Recent advances in parameter-free stochastic optimization~\citep{ICML'23:DOG} or using ensemble ideas for a bounded-to-unbounded reduction~\citep{COLT'22:Corral} might prove useful. The second direction is investigate the power of our optimistic online-to-batch conversions in more practical tasks, such as real-world deep learning training, following the recent advance of \citet{NeurIPS'24:Defazio}.

\section*{Acknowledgements}
This research was supported by National Science and Technology Major Project (2022ZD0114802) and NSFC (62176117).  Peng Zhao would like to thank Jun-Kun Wang for the helpful discussions. The authors also thank the reviewers for their valuable suggestions, which helped improve this paper.

\bibliographystyle{plainnat}
\bibliography{online.bib}

\newpage
\appendix
\section{Useful Equations in Online-to-Batch Conversion}
\label{app:equations}
In this section, we provide some useful equations in the online-to-batch conversion. We define $A_0 = 0$ for completeness and assume $\alpha_1=1$ (thus $\xt_1 = \x_0$ and $\xb_1 = \x_1$) for simplicity.
\begin{align}
    A_{t-1} (\xb_{t-1} - \xb_t) = {} & \alpha_t (\xb_t - \x_t), \label{eq:Ashok-conversion}\\
    A_t(\xt_t - \xb_t) = {} & \alpha_t(\x_{t-1} - \x_t),\label{eq:UniXGrad-conversion}\\
    A_{t-1} \sbr{\xb_{t-1} - \xt_t} = {} & \alpha_t (\xt_t - \x_{t-1}),\label{eq:our-1-conversion}\\
    A_{t-1} \sbr{\xt_t - \xt_{t-1}} = {} & \alpha_t \sbr{\x_{t-1} - \xt_t} + \alpha_{t-1} \sbr{\x_{t-1} - \x_{t-2}}.\label{eq:our-2-conversion}
\end{align}
\pref{eq:Ashok-conversion} is the key equation in the analysis of \citet{ICML'19:Cutkosky}, \pref{eq:UniXGrad-conversion} is essential for the analysis of \citet{NeurIPS'19:UniXGrad}, and the last two equations are the key equations in our analysis. For boundary cases, \pref{eq:Ashok-conversion}-\pref{eq:our-1-conversion} holds from $t=1$ and \pref{eq:our-2-conversion} holds from $t=2$.

In the following, we provide the corresponding proofs of \pref{eq:Ashok-conversion}-\pref{eq:our-2-conversion}.
\begin{proof}[Proof of \pref{eq:our-1-conversion}]
    For $t=1$, it holds trivially as $A_0 (\xb_0 - \xt_1) = \alpha_1 (\xt_1 - \x_0) = 0$. For $t \ge 2$,
    \begin{align*}
        A_{t-1} \sbr{\xb_{t-1} - \xt_t} = {} & A_{t-1} \sbr{\frac{1}{A_{t-1}} \sbr{\sum_{s=1}^{t-1} \alpha_s \x_s} - \frac{1}{A_t} \sbr{\sum_{s=1}^{t-1} \alpha_s \x_s + \alpha_t \x_{t-1}}}\\
        = {} & \sum_{s=1}^{t-1} \alpha_s \x_s - \frac{A_{t-1}}{A_t} \sbr{\sum_{s=1}^{t-1} \alpha_s \x_s + \alpha_t \x_{t-1}}\\
        = {} & \sbr{\sum_{s=1}^{t-1} \alpha_s \x_s + \alpha_t \x_{t-1}} - \frac{A_{t-1}}{A_t} \sbr{\sum_{s=1}^{t-1} \alpha_s \x_s + \alpha_t \x_{t-1}} - \alpha_t \x_{t-1}\\
        = {} & \frac{\alpha_t}{A_t} \sbr{\sum_{s=1}^{t-1} \alpha_s \x_s + \alpha_t \x_{t-1}} - \alpha_t \x_{t-1} = \alpha_t (\xt_t - \x_{t-1}),
    \end{align*}
    which finishes the proof.
\end{proof}
\begin{proof}[Proof of \pref{eq:our-2-conversion}]
    For $t=2$, it holds trivially as $A_1 (\xt_2 - \xt_1) = \alpha_2 (\x_1 - \xt_2) + \alpha_1 (\x_1 - \x_0)$. For $t > 2$,
    this can be proved by directly using the definitions of $\xt_t$ and $\xb_t$ in \eqref{eq:xt-xb}:
    \begin{align*}
        & A_{t-1} \sbr{\xt_t - \xt_{t-1}} = A_{t-1} \sbr{\frac{1}{A_t} \sbr{\sum_{s=1}^{t-1} \alpha_s \x_s + \alpha_t \x_{t-1}} - \frac{1}{A_{t-1}} \sbr{\sum_{s=1}^{t-2} \alpha_s \x_s + \alpha_{t-1} \x_{t-2}}}\\
        = {} & \frac{A_{t-1}}{A_t} \sbr{\sum_{s=1}^{t-1} \alpha_s \x_s + \alpha_t \x_{t-1}} - \sbr{\sum_{s=1}^{t-2} \alpha_s \x_s + \alpha_{t-1} \x_{t-2}}\\
        = {} & \sbr{\frac{A_{t-1}}{A_t} - 1} \sbr{\sum_{s=1}^{t-1} \alpha_s \x_s + \alpha_t \x_{t-1}} + \sbr{\alpha_{t-1} \x_{t-1} -  \alpha_{t-1} \x_{t-2} + \alpha_t \x_{t-1}}\\
        = {} & - \alpha_t \xt_t + \sbr{\alpha_{t-1} \x_{t-1} -  \alpha_{t-1} \x_{t-2} + \alpha_t \x_{t-1}}\\
        = {} & \alpha_t \sbr{\x_{t-1} - \xt_t} + \alpha_{t-1} \sbr{\x_{t-1} - \x_{t-2}}.
    \end{align*}
    Besides, it can also be proved by combining \pref{eq:our-1-conversion} and \pref{eq:UniXGrad-conversion}:
    \begin{align*}
        A_{t-1} \sbr{\xt_t - \xt_{t-1}} = {} & A_{t-1} \sbr{\xt_t - \xb_{t-1}} + A_{t-1} \sbr{\xb_{t-1} - \xt_{t-1}}\\
        = {} & \alpha_t \sbr{\x_{t-1} - \xt_t} + \alpha_{t-1} \sbr{\x_{t-1} - \x_{t-2}},
    \end{align*}
    which finishes the proof.
\end{proof}
\section{Proof for \pref{sec:conversions}}
\label{app:method-proof}
In this section, we provide the omitted proofs for \pref{sec:conversions}, including \pref{thm:conversion}, \pref{cor:non-universal}, \pref{prop:connection}, and \pref{thm:conversion-scvx}.

\subsection{Proofs of \pref{thm:conversion}}
\label{app:conversion}
\begin{proof}
    We start with the analysis with the following quantity:
    \begin{equation*}
        \alpha_t \mbr{f(\xt_t) - f(\xs)} \le \inner{\alpha_t \nabla f(\xt_t)}{\x_{t-1} - \xs} + \inner{\alpha_t \nabla f(\xt_t)}{\xt_t - \x_{t-1}},
    \end{equation*}
    using the convexity of $f(\cdot)$. Later, we analyze the second term above. Specifically, using the definition of $\xt_t$ and $\xb_t$ defined in \pref{eq:xt-xb}, we have $A_{t-1} \sbr{\xb_{t-1} - \xt_t} = \alpha_t (\xt_t - \x_{t-1})$. A detailed derivation of this equation is given in \pref{app:equations}. Therefore, using the convexity of $f(\cdot)$: $f(\xt_t) - f(\xb_{t-1}) \le \inner{\nabla f(\xt_t)}{\xt_t - \xb_{t-1}}$, it holds that 
    \begin{equation*}
        \alpha_t \mbr{f(\xt_t) - f(\xs)} \le \inner{\alpha_t \nabla f(\xt_t)}{\x_{t-1} - \xs} + A_{t-1} \mbr{f(\xb_{t-1}) - f(\xt_t)}.
    \end{equation*}
    Summing over $t \in [T]$, we have
    \begin{align*}
        & - A_T f(\xs) \le \sumT \inner{\alpha_t \nabla f(\xt_t)}{\x_{t-1} - \xs} + \sumT \mbr{A_{t-1} f(\xb_{t-1}) - A_t f(\xt_t)}\\
        = {} & \sumT \inner{\alpha_t \nabla f(\xt_t)}{\x_{t-1} - \xs} + \sumT \mbr{A_{t-1} f(\xb_{t-1}) - A_t f(\xb_t)} + \sumT \mbr{A_t f(\xb_t) - A_t f(\xt_t)}\\
        = {} & \sumT \inner{\alpha_t \nabla f(\xt_t)}{\x_{t-1} - \xs} - A_T f(\xb_T) + \sumT \mbr{A_t f(\xb_t) - A_t f(\xt_t)}.
    \end{align*}
    Using the convexity as $f(\xb_t) - f(\xt_t) \le \inner{\nabla f(\xb_t)}{\xb_t - \xt_t}$ again, it is equivalent to
    \begin{align*}
        A_T \mbr{f(\xb_T) - f(\xs)} \le {} & \sumT \inner{\alpha_t \nabla f(\xt_t)}{\x_{t-1} - \xs} + \sumT \inner{A_t \nabla f(\xb_t)}{\xb_t - \xt_t}\\
        = {} & \sumT \inner{\alpha_t \nabla f(\xt_t)}{\x_{t-1} - \xs} + \sumT \inner{\alpha_t \nabla f(\xb_t)}{\x_t - \x_{t-1}}\\
        = {} & \sumT \inner{\alpha_t \nabla f(\xt_t)}{\x_t - \xs} + \sumT \inner{\alpha_t \nabla f(\xb_t) - \nabla f(\xt_t)}{\x_t - \x_{t-1}},
    \end{align*}
    where the second step is due to $A_t(\xt_t - \xb_t) = \alpha_t(\x_{t-1} - \x_t)$ from the definitions of $\xt_t$ and $\xb_t$. The proof is completed.
\end{proof}

\subsection{Proof of \pref{cor:non-universal}}
\label{app:non-universal}
Before providing the proof, we list a useful lemma for online mirror descent.
\begin{myLemma}[Lemma 4 of \citet{JMLR'24:Sword++}]
    \label{lem:Bregman-proximal}
    Let $\X$ be a convex set in a Banach space, and $f: \X \mapsto \R$ be a closed proper convex function on $\X$. Given a  convex regularizer $\psi:\X \mapsto \R$, with its corresponding Bregman divergence denoted by $\D_\psi(\cdot, \cdot)$. Then any update of the form 
    \begin{equation*}
        \x_t = \argmin_{\x \in \X} \bbr{h(\x) + \D_\psi(\x, \x_{t-1})}
    \end{equation*}
    satisfies the following inequality for any $\ubf \in \X$,
    \begin{equation*}
        h(\x_t) - h(\ubf) \le \D_\psi(\ubf, \x_{t-1}) - \D_\psi(\ubf, \x_t) - \D_\psi(\x_t, \x_{t-1}).
    \end{equation*}
\end{myLemma}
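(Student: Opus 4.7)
The plan is to prove the Bregman proximal inequality via the standard two-step argument used in the mirror-descent literature: first extract a first-order variational inequality from the minimization defining $\x_t$, then convert the resulting inner product into three Bregman divergences via the well-known three-point identity. No iteration, induction, or recursion is needed — the inequality is a one-step, per-iteration statement.

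First I would write down the first-order optimality condition for $\x_t = \argmin_{\x \in \X} \bbr{h(\x) + \D_\psi(\x, \x_{t-1})}$. Since $\nabla_\x \D_\psi(\x, \x_{t-1}) = \nabla \psi(\x) - \nabla \psi(\x_{t-1})$, optimality of $\x_t$ guarantees the existence of some $\g \in \partial h(\x_t)$ such that, for every $\ubf \in \X$,
\begin{equation*}
\inner{\g + \nabla \psi(\x_t) - \nabla \psi(\x_{t-1})}{\ubf - \x_t} \ge 0.
\end{equation*}
Rearranging yields $\inner{\g}{\x_t - \ubf} \le \inner{\nabla \psi(\x_{t-1}) - \nabla \psi(\x_t)}{\ubf - \x_t}$. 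Combining with convexity of $h$, namely $h(\x_t) - h(\ubf) \le \inner{\g}{\x_t - \ubf}$, one obtains
\begin{equation*}
h(\x_t) - h(\ubf) \le \inner{\nabla \psi(\x_{t-1}) - \nabla \psi(\x_t)}{\ubf - \x_t}.
\end{equation*}

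The final step is to invoke the three-point identity for Bregman divergences, which states that for any $\a, \b, \c \in \X$,
\begin{equation*}
\inner{\nabla \psi(\b) - \nabla \psi(\c)}{\a - \c} = \D_\psi(\a, \b) - \D_\psi(\a, \c) - \D_\psi(\c, \b).
\end{equation*}
Applying this with $\a = \ubf$, $\b = \x_{t-1}$, and $\c = \x_t$ converts the right-hand side of the previous display into $\D_\psi(\ubf, \x_{t-1}) - \D_\psi(\ubf, \x_t) - \D_\psi(\x_t, \x_{t-1})$, which is exactly the claimed inequality.

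The main technical delicacy is that $h$ is only convex (not smooth) and $\x_t$ may sit on the boundary of $\X$. Both issues are resolved by writing the optimality condition as $0 \in \partial h(\x_t) + \nabla \psi(\x_t) - \nabla \psi(\x_{t-1}) + \mathcal{N}_\X(\x_t)$, where $\mathcal{N}_\X(\x_t)$ is the normal cone of $\X$ at $\x_t$; selecting the specific subgradient $\g \in \partial h(\x_t)$ whose normal-cone complement annihilates $\ubf - \x_t$ (by definition of the normal cone) recovers the variational inequality used in the first step. Once this subdifferential bookkeeping is made, the remainder of the proof is simply chaining the two one-line facts above, so no genuine obstacle remains.
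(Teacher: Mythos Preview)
The paper does not actually prove this lemma: it is imported verbatim as ``Lemma 4 of \citet{JMLR'24:Sword++}'' and used as a black box in the later appendices, so there is no in-paper proof to compare against. Your approach --- first-order optimality of the proximal step, convexity of $h$, then the three-point identity --- is precisely the standard argument behind this result, and your discussion of the subdifferential/normal-cone technicality is the right way to handle nonsmooth $h$ and boundary points.

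One caution: you have two compensating sign slips. From $\inner{\g + \nabla\psi(\x_t) - \nabla\psi(\x_{t-1})}{\ubf - \x_t} \ge 0$ the correct rearrangement is
\[
\inner{\g}{\x_t - \ubf} \le \inner{\nabla\psi(\x_{t-1}) - \nabla\psi(\x_t)}{\x_t - \ubf},
\]
not $\inner{\nabla\psi(\x_{t-1}) - \nabla\psi(\x_t)}{\ubf - \x_t}$ as you wrote. Likewise the three-point identity is
\[
\inner{\nabla\psi(\c) - \nabla\psi(\b)}{\a - \c} = \D_\psi(\a,\b) - \D_\psi(\a,\c) - \D_\psi(\c,\b),
\]
with $\nabla\psi(\c) - \nabla\psi(\b)$ rather than $\nabla\psi(\b) - \nabla\psi(\c)$. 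The two errors cancel, so your conclusion is correct, but in a written-up version you would want to fix both lines.
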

\begin{proof}[Proof of \pref{cor:non-universal}]
    We start from the intermediate result of \pref{thm:conversion}:
    \begin{equation*}
        A_T \mbr{f(\xb_T) - f(\xs)} \le \underbrace{\sumT \inner{\alpha_t \nabla f(\xt_t)}{\x_t - \xs}}_{\term{a}} + \underbrace{\sumT \alpha_t \inner{\nabla f(\xb_t) - \nabla f(\xt_t)}{\x_t - \x_{t-1}}}_{\term{b}}.
    \end{equation*}
    For \term{a}, using the update rule of $\x_{t+1} = \Pi_\X\mbr{\x_t - \eta_t \alpha_{t+1} \nabla f(\xt_{t+1})}$ with step size $\eta_t = \eta = \frac{1}{4L}$ as shown in \pref{alg:our-conversion}, via \pref{lem:Bregman-proximal}, we obtain
    \begin{align*}
        \term{a} \le {} & \sumT \frac{1}{2\eta_{t-1}} \sbr{\|\x_{t-1} - \xs\|^2 - \|\x_t - \xs\|^2} - \sumT \frac{1}{2\eta_{t-1}} \|\x_t - \x_{t-1}\|^2\\
        \le {} & \frac{1}{2\eta} \sumT \sbr{\|\x_{t-1} - \xs\|^2 - \|\x_t - \xs\|^2} - \frac{1}{2\eta} \sumT \|\x_t - \x_{t-1}\|^2\\
        \le {} & \frac{\|\x_0 - \xs\|^2}{2\eta} - \frac{1}{2\eta} \sumT \|\x_t - \x_{t-1}\|^2.
    \end{align*}
    For \term{b}, using the smoothness assumption and the observation of $A_t(\xt_t - \xb_t) = \alpha_t(\x_{t-1} - \x_t)$ as shown in \pref{eq:UniXGrad-conversion}, we have
    \begin{equation*}
        \term{b} \le L \sumT \alpha_t \|\xb_t - \xt_t\| \|\x_t - \x_{t-1}\| = L \sumT \frac{\alpha_t^2}{A_t} \|\x_t - \x_{t-1}\|^2.
    \end{equation*}
    Combining the two terms, we have
    \begin{equation*}
        A_T \mbr{f(\xb_T) - f(\xs)} \le \frac{\|\x_0 - \xs\|^2}{2\eta} + \sumT \sbr{\frac{L \alpha_t^2}{A_t} - \frac{1}{2\eta}} \|\x_t - \x_{t-1}\|^2 \le 2L \|\x_0 - \xs\|^2.
    \end{equation*}
    Finally, using $A_T = \Theta(T^2)$ completes the proof.
\end{proof}

\subsection{Proof of \pref{thm:conversion-scvx}}
\label{app:conversion-scvx}
\begin{proof}
    We first define $\fh(\x) \define f(\x) - \frac{\lambda}{2} \|\x\|^2$. Since $f(\cdot)$ is $\lambda$-strongly convex, $\fh(\cdot)$ is convex. As a result, we directly reuse \pref{thm:conversion} for convex objectives:
    \begin{equation*}
        A_T \mbr{\fh(\xb_T) - \fh(\xs)} \le \sumT \inner{\alpha_t \nabla \fh(\xt_t)}{\x_t - \xs} + \sumT \alpha_t \inner{\nabla \fh(\xb_t) - \nabla \fh(\xt_t)}{\x_t - \x_{t-1}}.
    \end{equation*}
    The left-hand side of the above inequality can be expanded as 
    \begin{equation*}
        A_T \mbr{\fh(\xb_T) - \fh(\xs)} = A_T \mbr{f(\xb_T) - f(\xs)} - \frac{\lambda}{2} A_T \|\xb_T\|^2 + \frac{\lambda}{2} A_T \|\xs\|^2.
    \end{equation*}
    Moving the last two terms into the right-hand side, we have 
    \begin{align*}
        A_T \mbr{f(\xb_T) - f(\xs)} \le {} & \underbrace{\sumT \inner{\alpha_t \nabla \fh(\xt_t)}{\x_t - \xs} + \frac{\lambda}{2} A_T \|\xb_T\|^2 - \frac{\lambda}{2} A_T \|\xs\|^2}_{\Reg_T}\\
        & \quad + \underbrace{\sumT \alpha_t \inner{\nabla \fh(\xb_t) - \nabla \fh(\xt_t)}{\x_t - \x_{t-1}}}_{\textsc{Adaptivity}}.
    \end{align*}
    In the following, we focus on the $\Reg_T$ term. Specifically, because 
    \begin{equation*}
        A_T \|\xb_T\|^2 = \frac{1}{A_T} \norm{\sumT \alpha_t \x_t}^2 \le \frac{1}{A_T} \sbr{\sumT \alpha_t \|\x_t\|}^2 = \frac{1}{A_T} \sbr{\sumT \sqrt{\alpha_t} \cdot \sqrt{\alpha_t} \|\x_t\|}^2 \le \sumT \alpha_t \|\x_t\|^2,
    \end{equation*}
    where the last step uses the Cauchy-Schwarz inequality, this term can be rewritten as 
    \begin{align*}
        \Reg_T \le {} & \sumT \alpha_t \sbr{\inner{\nabla \fh(\xt_t)}{\x_t} + \frac{\lambda}{2} \|\x_t\|^2} - \sumT \alpha_t \sbr{\inner{\nabla \fh(\xt_t)}{\xs} + \frac{\lambda}{2} \|\xs\|^2}\\
        = {} & \sumT \alpha_t \mbr{h_t(\x_t) - h_t(\xs)},
    \end{align*}
    where the last step defines the surrogate loss function of 
    \begin{equation*}
        h_t(\x) \define \inner{\nabla \fh(\xt_t)}{\x} + \frac{\lambda}{2} \|\x\|^2,
    \end{equation*}
    finishing the proof.
\end{proof}

\subsection{Proof of \pref{thm:result-scvx}}
\label{app:result-scvx}
For strongly convex objective, we use the one-step variant of optimistic online gradient descent as the optimization algorithm, whose guarantee is presented below.
\begin{myLemma}
    \label{lem:one-step-OOGD}
    The one-step optimistic online gradient descent algorithm updating as
    \begin{equation}
        \label{eq:one-step-oomd}
        \x_{t+1} = \x_t - \eta_t \sbr{\nabla f_t(\x_t) - M_t + M_{t+1}},
    \end{equation}
    ensures that
    \begin{align*}
        \sumT \inner{\nabla f_t(\x_t)}{\x_t - \xs} \le {} & \sumT \sbr{\inner{\nabla f_t(\x_t) - M_t}{\x_t - \x_{t+1}} - \frac{1}{2\eta_t} \norm{\x_t - \x_{t+1}}^2}\\
        & + \sumT \sbr{\frac{1}{2\eta_t} - \frac{1}{2\eta_{t-1}}} \norm{\x_t - \xs}^2 + \frac{1}{2\eta_1} \norm{\x_1 - \xs}^2.
    \end{align*}
\end{myLemma}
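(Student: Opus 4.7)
The plan is to analyze the update $\x_{t+1} = \x_t - \eta_t(\nabla f_t(\x_t) - M_t + M_{t+1})$ directly via a Euclidean three-point identity and then reorganize the resulting equality so that the optimism terms telescope in time. Write $g_t = \nabla f_t(\x_t)$ and $\tilde g_t = g_t - M_t + M_{t+1}$, so the update reads $\x_{t+1} = \x_t - \eta_t \tilde g_t$. Applying the identity $2\inner{a-b}{a-c} = \|a-b\|^2 + \|a-c\|^2 - \|b-c\|^2$ with $a = \x_{t+1}$, $b = \x_t$, $c = \xs$ immediately yields
\[
\inner{\tilde g_t}{\x_{t+1} - \xs} = \tfrac{1}{2\eta_t}\bigl(\|\x_t - \xs\|^2 - \|\x_{t+1} - \xs\|^2 - \|\x_t - \x_{t+1}\|^2\bigr).
\]
This single identity is the source of both the weighted distance telescoping and the negative stability term $-\tfrac{1}{2\eta_t}\|\x_t - \x_{t+1}\|^2$ that appears on the right-hand side of the claim.

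Next I would decompose $\inner{g_t}{\x_t - \xs} = \inner{g_t}{\x_t - \x_{t+1}} + \inner{g_t}{\x_{t+1} - \xs}$ and, on each piece, add and subtract $M_t$ (respectively $M_t - M_{t+1}$) to separate the optimism gap. The splits $\inner{g_t}{\x_t - \x_{t+1}} = \inner{g_t - M_t}{\x_t - \x_{t+1}} + \inner{M_t}{\x_t - \x_{t+1}}$ and $\inner{g_t}{\x_{t+1} - \xs} = \inner{\tilde g_t}{\x_{t+1} - \xs} + \inner{M_t - M_{t+1}}{\x_{t+1} - \xs}$ are chosen precisely so that the two linear-in-$M$ residues collapse into the single telescoping difference $\inner{M_t}{\x_t - \xs} - \inner{M_{t+1}}{\x_{t+1} - \xs}$. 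Substituting the first-step identity for $\inner{\tilde g_t}{\x_{t+1} - \xs}$ then produces a per-round inequality of the desired shape: a gradient-optimism gap evaluated against the stability displacement $\x_t - \x_{t+1}$, the negative stability term, a distance-difference term, and a telescoping $M$-contribution.

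Finally I would sum from $t=1$ to $T$. Abel summation on the weighted distance-difference $\sum_t \tfrac{1}{2\eta_t}(\|\x_t - \xs\|^2 - \|\x_{t+1} - \xs\|^2)$ yields exactly $\tfrac{1}{2\eta_1}\|\x_1 - \xs\|^2 + \sum_{t=2}^T (\tfrac{1}{2\eta_t} - \tfrac{1}{2\eta_{t-1}})\|\x_t - \xs\|^2$ after discarding the nonpositive terminal piece $-\tfrac{1}{2\eta_T}\|\x_{T+1} - \xs\|^2$, and the $M$-telescope leaves only a boundary contribution $\inner{M_1}{\x_1 - \xs} - \inner{M_{T+1}}{\x_{T+1} - \xs}$, which is handled by the standard convention $M_1 = 0$ (and in the specific application to \pref{thm:result-scvx} the horizon-aware choice of $M_{T+1}$ absorbs the terminal piece). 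The main subtlety to watch is the ordering of the decomposition: the gradient must be evaluated at $\x_{t+1}$ rather than $\x_t$ in the second split, because applying the three-point identity to $\inner{\tilde g_t}{\x_t - \xs}$ instead produces $+\tfrac{1}{2\eta_t}\|\x_t - \x_{t+1}\|^2$ with the wrong sign, destroying the negative stability term that later cancels the smoothness-induced gradient variation when this lemma is combined with \pref{thm:conversion-scvx}.
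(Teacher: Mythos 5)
Your overall strategy is sound, and there is in fact no in-paper proof to compare it against: the paper states \pref{lem:one-step-OOGD} without proof, importing the one-step optimistic update from \citet{TCS'20:Joulani}. Your three-point identity, the two splits that collapse the linear-in-$M$ residues into the telescope $\inner{M_t}{\x_t-\xs}-\inner{M_{t+1}}{\x_{t+1}-\xs}$, and the Abel summation are all correct, and your remark about evaluating the three-point identity at $\x_{t+1}$ rather than $\x_t$ (to get the negative stability term) is exactly right. Carried out exactly, your steps give the \emph{identity}
\begin{align*}
\sumT \inner{\nabla f_t(\x_t)}{\x_t-\xs} = {} & \sumT\sbr{\inner{\nabla f_t(\x_t)-M_t}{\x_t-\x_{t+1}} - \frac{1}{2\eta_t}\norm{\x_t-\x_{t+1}}^2} + \frac{1}{2\eta_1}\norm{\x_1-\xs}^2\\
& + \sumTT\sbr{\frac{1}{2\eta_t}-\frac{1}{2\eta_{t-1}}}\norm{\x_t-\xs}^2 - \frac{1}{2\eta_T}\norm{\x_{T+1}-\xs}^2\\
& + \inner{M_1}{\x_1-\xs} - \inner{M_{T+1}}{\x_{T+1}-\xs}.
\end{align*}

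The gap is in your final step. The lemma's right-hand side contains none of the boundary terms on the last line, and you discard them ``by the standard convention $M_1=0$.'' But the only place the paper invokes this lemma is \pref{thm:result-scvx}, where $M_1=\alpha_1\nabla\fh(\xt_1)+\alpha_1\x_1=\nabla f_1(\x_1)\neq 0$ by design (chosen precisely so that the first optimism gap $\nabla f_1(\x_1)-M_1$ vanishes). For such $M_1$ the inequality as stated can fail: take $T=1$, $M_1=\nabla f_1(\x_1)$ and $M_2=0$; then $\x_2=\x_1$, your identity reduces to $\inner{\nabla f_1(\x_1)}{\x_1-\xs}=\inner{M_1}{\x_1-\xs}$, while the claimed bound is $\O\sbr{\norm{\x_1-\xs}^2/\eta_1}$, which is exceeded whenever $\norm{\nabla f_1(\x_1)}$ is large enough. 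So the boundary terms cannot be argued away in general: either the lemma must carry $\inner{M_1}{\x_1-\xs}-\inner{M_{T+1}}{\x_{T+1}-\xs}$ explicitly (to be absorbed downstream, e.g.\ the $M_{T+1}$ piece by the dropped $-\tfrac{1}{2\eta_T}\norm{\x_{T+1}-\xs}^2$), or a hypothesis such as $M_1=0$ must be added, in which case the lemma no longer covers the paper's own application. Your derivation is correct up to this point; what is missing is an honest accounting of these boundary terms rather than an appeal to a convention that the intended use case violates.
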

\begin{proof}[Proof of \pref{thm:result-scvx}]
    For the regret part of $\sumT \alpha_t \mbr{h_t(\x_t) - h_t(\xs)}$ in \pref{thm:conversion-scvx}, using $\lambda$-strong convexity, we have
    \begin{align*}
        \sumT \alpha_t \mbr{h_t(\x_t) - h_t(\xs)} \le \sumT \inner{\alpha_t \nabla h_t(\x_t)}{\x_t - \xs} - \frac{\lambda}{2} \sumT \alpha_t \|\x_t - \xs\|^2.
    \end{align*}
    Due to \pref{lem:one-step-OOGD}, the linearized term above can be controlled as 
    \begin{align*}
        & \sumT \inner{\alpha_t \nabla h_t(\x_t)}{\x_t - \xs} \le \underbrace{\sumT \sbr{\inner{\alpha_t \nabla h_t(\x_t) - M_t}{\x_t - \x_{t+1}} - \frac{C_1}{2\eta_t} \norm{\x_t - \x_{t+1}}^2}}_{\term{a}} \\
        & \quad + \underbrace{\sumT \sbr{\frac{1}{2\eta_t} - \frac{1}{2\eta_{t-1}}} \norm{\x_t - \xs}^2 + \frac{1}{2\eta_1} \norm{\x_1 - \xs}^2}_{\term{b}} - \underbrace{\sumT \frac{1-C_1}{2\eta_t} \norm{\x_t - \x_{t+1}}^2}_{\term{c}},
    \end{align*}
    where $C_1 \in (0,1)$ is an arbitrary constant in the analysis that will be specified later.
    By choosing the optimism as $M_t = \alpha_t \nabla \fh(\xt_t) + \alpha_t \x_{t-1}$ for $t \ge 2$ and $M_1 = \alpha_1 \nabla \fh(\xt_1) + \alpha_1 \x_1$,\footnote{We note that the choice of $M_1$ relies on $\x_1$ because $\x_1$ is the initial point of the algorithm, which is known.} we have
    \begin{align*}
        \term{a} \le {} & \sumT \frac{\eta_t}{2C_0} \|\alpha_t \nabla h_t(\x_t) - M_t\|^2 + \sumT \frac{2C_0 - C_1}{2\eta_t} \|\x_t - \x_{t+1}\|^2\\
        = {} & \sumTT \frac{\lambda^2 \eta_t \alpha_t^2}{2C_0} \|\x_t - \x_{t-1}\|^2 + \sumT \frac{2C_0 - C_1}{2\eta_t} \|\x_t - \x_{t+1}\|^2\\
        \le {} & \sumTT \sbr{\frac{\lambda^2 \eta_t \alpha_t^2}{2C_0} + \frac{2C_0 - C_1}{2\eta_{t-1}}} \|\x_t - \x_{t-1}\|^2.
    \end{align*}
    Combining all terms, we obtain
    \begin{align*}
        & \sumT \alpha_t \mbr{h_t(\x_t) - h_t(\xs)} \le \sumTT \sbr{\frac{\lambda^2 \eta_t \alpha_t^2}{2C_0} + \frac{2C_0 - C_1}{2\eta_{t-1}}} \|\x_t - \x_{t-1}\|^2\\
        & + \sumT \sbr{\frac{1}{2\eta_t} - \frac{1}{2\eta_{t-1}} - \frac{\lambda \alpha_t}{2}} \norm{\x_t - \xs}^2 + \frac{1}{2\eta_1} \norm{\x_1 - \xs}^2 - \sumT \frac{1-C_1}{2\eta_t} \norm{\x_t - \x_{t+1}}^2\\
        = {} & \sumTT \sbr{\frac{\lambda \alpha_t^2}{2C_0 A_t} + \frac{\lambda A_{t-1} (2C_0 - C_1)}{2}} \|\x_t - \x_{t-1}\|^2 + \frac{\lambda \alpha_1}{2} \norm{\x_1 - \xs}^2 - \sumT \frac{(1-C_1)\lambda A_t}{2} \norm{\x_t - \x_{t+1}}^2
    \end{align*}
    As for the other term of $\sumT \alpha_t \inner{\nabla \fh(\xb_t) - \nabla \fh(\xt_t)}{\x_t - \x_{t-1}}$ in \pref{thm:conversion-scvx}, we have
    \begin{align*}
        & \sumT \alpha_t \inner{\nabla \fh(\xb_t) - \nabla \fh(\xt_t)}{\x_t - \x_{t-1}} \le \sumT \alpha_t  \norm{\nabla \fh(\xb_t) - \nabla \fh(\xt_t)} \norm{\x_t - \x_{t-1}}\\
        = {} & \sumT \alpha_t \norm{\nabla f(\xb_t) - \nabla f(\xt_t) - \lambda \xb_t + \lambda \xt_t} \norm{\x_t - \x_{t-1}}\\
        \le {} & \sumT \alpha_t \sbr{\norm{\nabla f(\xb_t) - \nabla f(\xt_t)} + \lambda \norm{\xb_t - \xt_t}} \norm{\x_t - \x_{t-1}}\\
        \le {} & 2L \sumT \alpha_t \norm{\xb_t - \xt_t} \norm{\x_t - \x_{t-1}} = 2L \sumT \frac{\alpha_t^2}{A_t} \norm{\x_t - \x_{t-1}}^2
    \end{align*}
    where the second step is because of the definition of $\fh(\cdot)$: $\nabla \fh(\x) = \nabla f(\x) - \lambda \x$ for any $\x \in \R^d$, the fourth step is due to smoothness (\pref{assum:smoothness}) and the fact of $\lambda \le L$, and the last step leverages the useful equation of \pref{eq:UniXGrad-conversion}. Combining all terms, we obtain
    \begin{align*}
        A_T \mbr{f(\xb_T) - f(\xs)} \le {} & \sumT \sbr{\frac{2L\alpha_t^2}{A_t} - \frac{(1-C_1)\lambda A_{t-1}}{2}} \|\x_t - \x_{t-1}\|^2 + \frac{\lambda \alpha_1}{2} \norm{\x_1 - \xs}^2\\
        & + \sumTT \sbr{\frac{\lambda \alpha_t^2}{2C_0 A_t} + \frac{\lambda A_{t-1} (2C_0 - C_1)}{2}} \|\x_t - \x_{t-1}\|^2.
    \end{align*}
    To cancel the positive terms in terms of $\|\x_t - \x_{t-1}\|^2$, their coefficients should satisfy the condition:
    \begin{equation*}
        \frac{2L\alpha_t^2}{A_t} - \frac{(1-C_1)\lambda A_{t-1}}{2} \le 0, \quad \frac{\lambda \alpha_t^2}{2C_0 A_t} + \frac{\lambda A_{t-1} (2C_0 - C_1)}{2} \le 0,
    \end{equation*}
    which is equivalent to
    \begin{equation}
        \label{eq:param-cond1}
        \frac{\alpha_t^2}{A_t A_{t-1}} \le \min\bbr{\frac{1-C_1}{4 \kappa}, (C_1 - 2C_0)C_0}.
    \end{equation}
    By setting $C_0 = \frac{C_1}{4}$ and $C_1 = \frac{-1 + \sqrt{1+2\kappa}}{\kappa}$, and because
    \begin{equation*}
        \min\bbr{\frac{1-C_1}{4 \kappa}, (C_1 - 2C_0)C_0} = \frac{(-1+\sqrt{1+2 \kappa})^2}{8 \kappa^2},
    \end{equation*}
    by setting $\alpha_t = C A_{t-1}$, the following inequality is a sufficient condition for \eqref{eq:param-cond1}:
    \begin{equation*}
        \frac{\alpha_t^2}{A_{t-1}A_t} = \frac{C^2}{C+1} = \frac{(-1+\sqrt{1+2 \kappa})^2}{8 \kappa^2} \Rightarrow 8\kappa^2 C^2 - X C - X = 0, \text{ where } X = (-1+\sqrt{1+2 \kappa})^2.
    \end{equation*}
    Solving the quadratic equation, we have
    \begin{equation*}
        C = \frac{X + \sqrt{X^2 + 32 \k^2 X}}{16 \k^2}.
    \end{equation*}
    Thus we have $A_t = A_{t-1} + \alpha_t = (C+1) A_{t-1} = (C+1)^{t-1} \alpha_1$, i.e., $\alpha_1 / A_T = (C+1)^{-(T-1)}$. More specifically, for any $t \in [T]$, we have 
    \begin{align*}
        \sbr{\frac{1}{C+1}}^t = {} & \sbr{\frac{1}{1 + \frac{X + \sqrt{X^2 + 32 \k^2 X}}{16 \k^2}}}^t 
        = \sbr{\frac{16 \k^2}{16 \k^2 + X + \sqrt{X^2 + 32 \k^2 X}}}^t\\ 
        = {} & \sbr{1 - \frac{X + \sqrt{X^2 + 32 \k^2 X}}{16 \k^2 + X + \sqrt{X^2 + 32 \k^2 X}}}^t
        \le \exp \sbr{\frac{-t}{1 + \frac{16 \k^2}{X + \sqrt{X^2 + 32 \k^2 X}}}}\\ 
        \le {} & \exp \sbr{-\frac{t}{1 + \frac{16 \k^2}{\sqrt{32 \k^2 X}}}} = \exp \sbr{-\frac{t}{1 + \frac{4\k}{\sqrt{2X}}}}\\
        = {} & \exp \sbr{-\frac{t}{1 + \frac{4\k}{\sqrt{2 (-1+\sqrt{1+2 \kappa})^2}}}} \le \exp \sbr{-\frac{t}{1 + \frac{4\k}{\sqrt{2 (1+2 \kappa)}}}} \le \exp \sbr{-\frac{t}{1 + 2\sqrt{\kappa}}}.
    \end{align*}
    which completes the proof.
\end{proof}

\subsection{Proof of \pref{cor:universal-2grad}}
\label{app:universal-proof-1}
Before providing the proof, we list two useful lemmas for the AdaGrad-type step size.
\begin{myLemma}[Lemma 3.5 of \citet{JCSS'02:self-confident}]
    \label{lem:self-confident}
    Let $a_1, a_2, \ldots, a_T$ and $\delta$ be non-negative real numbers. Then, it holds that
    \[
    \sum_{t=1}^T \frac{a_t}{\sqrt{\delta + \sum_{s=1}^t a_s}} \leq 2\sqrt{\delta + \sum_{t=1}^T a_t}, \quad \text{where } 0/\sqrt{0} = 0.
    \]
\end{myLemma}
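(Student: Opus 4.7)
The plan is to bound each summand by a telescoping difference so the sum collapses. Let $S_t \define \delta + \sum_{s=1}^t a_s$ with $S_0 = \delta$, so that $a_t = S_t - S_{t-1}$ and $0 \le S_{t-1} \le S_t$. The key elementary inequality is that for $0 \le y \le x$,
\[
\frac{x-y}{\sqrt{x}} \le 2(\sqrt{x} - \sqrt{y}),
\]
which follows by rationalizing: $2(\sqrt{x} - \sqrt{y}) = \frac{2(x-y)}{\sqrt{x}+\sqrt{y}} \ge \frac{2(x-y)}{2\sqrt{x}} = \frac{x-y}{\sqrt{x}}$. Applying this with $x = S_t$ and $y = S_{t-1}$ gives $\frac{a_t}{\sqrt{S_t}} \le 2(\sqrt{S_t} - \sqrt{S_{t-1}})$.

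Summing over $t = 1, \ldots, T$, the right-hand side telescopes, yielding
\[
\sum_{t=1}^T \frac{a_t}{\sqrt{S_t}} \le 2(\sqrt{S_T} - \sqrt{S_0}) \le 2\sqrt{S_T} = 2\sqrt{\delta + \textstyle\sum_{t=1}^T a_t},
\]
which is exactly the claimed inequality. An equivalent and arguably cleaner route is the integral comparison: since $u^{-1/2}$ is decreasing on $(0,\infty)$, for every $t$ with $S_{t-1} < S_t$ we have $\frac{a_t}{\sqrt{S_t}} = \frac{S_t - S_{t-1}}{\sqrt{S_t}} \le \int_{S_{t-1}}^{S_t} u^{-1/2} \diff u = 2(\sqrt{S_t} - \sqrt{S_{t-1}})$, after which the same telescoping closes the argument.

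There is no genuine obstacle here; the only subtlety is the possibility that $S_t = 0$, which by non-negativity forces $\delta = 0$ and $a_1 = \cdots = a_t = 0$, so the corresponding summand vanishes under the stated convention $0/\sqrt{0} = 0$ and can be dropped from the sum before applying the inequality above. The rest is a pure one-line telescoping calculation, so I would present the proof in roughly three lines.
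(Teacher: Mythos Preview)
Your proof is correct and is exactly the classical telescoping argument for this inequality. Note, however, that the paper does not supply its own proof of this lemma: it is imported verbatim as Lemma~3.5 of \citet{JCSS'02:self-confident} and used as a black box, so there is no paper-side argument to compare against. Your writeup would serve perfectly well as a self-contained proof if one were desired.
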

\begin{myLemma}[Lemma 27 of \citet{COLT'24:Kreisler}]
    \label{lem:UniXGrad-stepsize}
    For any positive number $c_1$ and $c_2$, for any $t \ge 0$, and for any sequence of of non-negative numbers $B_0, B_1, B_2, \ldots$, we have 
    \begin{equation*}
        c_1 \sqrt{\sum_{s=0}^t B_s^2} - \sum_{s=0}^t \frac{B_s^2}{c_2} \sqrt{\sum_{k=0}^s B_k^2} \le 2 c_1^{3/2} c_2^{1/2}.
    \end{equation*}
\end{myLemma}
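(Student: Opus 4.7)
The plan is to reduce the inequality to a one-variable maximization via an integral comparison. Let me denote $S_s \triangleq \sum_{k=0}^s B_k^2$ with the convention $S_{-1} \triangleq 0$, so that the inequality reads $c_1 \sqrt{S_t} - \frac{1}{c_2}\sum_{s=0}^t B_s^2 \sqrt{S_s} \le 2 c_1^{3/2} c_2^{1/2}$. The strategy is to lower bound the subtracted sum in terms of $S_t$ alone, and then optimize the resulting univariate function of $S_t$.

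The key step is the integral comparison. Since $u \mapsto \sqrt{u}$ is non-decreasing and $B_s^2 = S_s - S_{s-1}$, I would write
\[
B_s^2 \sqrt{S_s} \;=\; (S_s - S_{s-1})\sqrt{S_s} \;\ge\; \int_{S_{s-1}}^{S_s} \sqrt{u}\, du \;=\; \frac{2}{3}\bigl(S_s^{3/2} - S_{s-1}^{3/2}\bigr),
\]
which is valid even in the degenerate case $B_s = 0$ (both sides become zero). Summing telescopically over $s = 0, 1, \ldots, t$ and using $S_{-1} = 0$ yields $\sum_{s=0}^t B_s^2 \sqrt{S_s} \ge \frac{2}{3} S_t^{3/2}$. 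Consequently, the left-hand side of the target inequality is bounded by $\phi(S_t)$, where $\phi(u) \triangleq c_1 \sqrt{u} - \frac{2}{3 c_2} u^{3/2}$ for $u \ge 0$.

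It then remains to maximize $\phi$ over $u \ge 0$. Setting $\phi'(u) = \frac{c_1}{2\sqrt{u}} - \frac{\sqrt{u}}{c_2} = 0$ gives the critical point $u^\star = c_1 c_2 / 2$, and a direct substitution produces $\phi(u^\star) = \frac{\sqrt{2}}{3} c_1^{3/2} c_2^{1/2}$, which is comfortably smaller than the stated constant $2 c_1^{3/2} c_2^{1/2}$. There is no serious obstacle in this proof: it is a standard sum-to-integral estimate followed by an elementary univariate calculation, and the only bookkeeping subtlety is handling the boundary convention $S_{-1} = 0$ so that the telescoping is lossless and the single-variable maximization applies uniformly in $t$.
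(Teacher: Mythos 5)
Your proof is correct. Note that the paper itself does not prove this statement at all—it is imported verbatim as Lemma 27 of \citet{COLT'24:Kreisler} and used as a black box in the proof of \pref{cor:universal-2grad}—so your argument supplies a self-contained derivation where the paper only gives a citation. Each step checks out: the bound $(S_s - S_{s-1})\sqrt{S_s} \ge \int_{S_{s-1}}^{S_s}\sqrt{u}\,du = \tfrac{2}{3}(S_s^{3/2}-S_{s-1}^{3/2})$ is valid because $\sqrt{u}\le\sqrt{S_s}$ on the interval of integration, the telescoping with $S_{-1}=0$ gives $\sum_{s=0}^t B_s^2\sqrt{S_s}\ge \tfrac{2}{3}S_t^{3/2}$, and the function $\phi(u)=c_1\sqrt{u}-\tfrac{2}{3c_2}u^{3/2}$ is concave on $(0,\infty)$ with unique critical point $u^\star=c_1c_2/2$, so your value $\phi(u^\star)=\tfrac{\sqrt{2}}{3}c_1^{3/2}c_2^{1/2}$ is indeed the global maximum. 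This in fact yields a sharper constant ($\tfrac{\sqrt{2}}{3}\approx 0.47$ versus the stated $2$), so the lemma as quoted follows a fortiori; this is the standard sum-to-integral argument one would expect behind such a statement, and there is no gap.
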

\begin{proof}[Proof of \pref{cor:universal-2grad}]
    For completeness, we restate the main results of \pref{thm:conversion} as follows:
    \begin{equation*}
        A_T \mbr{f(\xb_T) - f(\xs)} \le \underbrace{\sumT \inner{\alpha_t \nabla f(\xt_t)}{\x_t - \xs}}_{\term{a}} + \underbrace{\sumT \alpha_t \inner{\nabla f(\xt_t) - \nabla f(\xb_t)}{\x_{t-1} - \x_t}}_{\term{b}}.
    \end{equation*}
    For \term{b}, we decompose it into two parts:
    \begin{equation*}
        \term{b} \le \sumT \eta_t \alpha_t^2 \|\nabla f(\xt_t) - \nabla f(\xb_t)\|^2 + \sumT \frac{1}{4\eta_t} \|\x_t - \x_{t-1}\|^2,
    \end{equation*}
    using AM-GM inequality: $\sqrt{xy} \le \frac{x}{2a} + \frac{ay}{2}$ for any $x,y,a > 0$. For \term{a}, using the Bregman proximal inequality \pref{lem:Bregman-proximal}, we have 
    \begin{align*}
        & \term{a} \le \sumT \frac{1}{2\eta_{t-1}} \sbr{\|\x_{t-1} - \xs\|^2 - \|\x_t - \xs\|^2} - \sumT \frac{1}{2\eta_{t-1}} \|\x_t - \x_{t-1}\|^2\\
        \le {} & \sum_{t=1}^{T-1} \sbr{\frac{1}{2\eta_t} - \frac{1}{2\eta_{t-1}}} \|\x_t - \xs\|^2 - \sumT \frac{1}{2\eta_t} \|\x_t - \x_{t-1}\|^2 + \sumT \sbr{\frac{1}{2\eta_t} - \frac{1}{2\eta_{t-1}}} \|\x_t - \x_{t-1}\|^2\\
        \le {} & \frac{D^2}{\eta_T} - \sumT \frac{1}{2\eta_t} \|\x_t - \x_{t-1}\|^2,
    \end{align*}
    where the last step uses the boundedness of the domain (\pref{assum:domain}). Combining both terms, 
    \begin{equation}
        \label{eq:universal-intermediate}
        A_T \mbr{f(\xb_T) - f(\xs)} \le \frac{D^2}{\eta_T} + \sumT \eta_t \alpha_t^2 \|\nabla f(\xt_t) - \nabla f(\xb_t)\|^2 - \sumT \frac{1}{4\eta_t} \|\x_t - \x_{t-1}\|^2.
    \end{equation}
    In the following, we consider smooth and non-smooth cases separately.
    
    \paragraph{Smoothness Case.}
    Using smoothness (\pref{assum:smoothness}) and the setup of $\alpha_t = t$, we have 
    \begin{align*}
        \|\nabla f(\xt_t) - \nabla f(\xb_t)\|^2 \le {} & L^2 \|\xt_t -\xb_t\|^2 = \frac{L^2 \alpha_t^2}{A_t^2} \|\x_t - \x_{t-1}\|^2 = \frac{4 L^2 t^2}{t^2 (t+1)^2} \|\x_t - \x_{t-1}\|^2\\
        = {} & \frac{4 L^2}{\alpha_{t+1}^2} \|\x_t - \x_{t-1}\|^2 \le \frac{4 L^2}{\alpha_t^2} \|\x_t - \x_{t-1}\|^2,
    \end{align*}
    where the second step uses the useful equation of \pref{eq:UniXGrad-conversion}. Consequently, we obtain 
    \begin{align*}
        & A_T \mbr{f(\xb_T) - f(\xs)} \le \frac{D^2}{\eta_T} + \sumT \sbr{\eta_t - \frac{1}{16 L^2 \eta_t}} \alpha_t^2 \|\nabla f(\xt_t) - \nabla f(\xb_t)\|^2\\
        \le {} & 3 D \sqrt{\sumT \alpha_t^2 \|\nabla f(\xt_t) - \nabla f(\xb_t)\|^2} - \sumT \frac{\alpha_t^2 \|\nabla f(\xt_t) - \nabla f(\xb_t)\|^2}{16 L^2 D} \sqrt{\sumt \alpha_s^2 \|\nabla f(\xt_s) - \nabla f(\xb_s)\|^2}\\
        \le {} & \O\sbr{LD^2},
    \end{align*}
    where the second step uses \pref{lem:self-confident} and the last step uses \pref{lem:UniXGrad-stepsize} with $c_1 = 3D$ and $c_2 = 16 L^2 D$, resulting in the final bound of $\O\sbr{LD^2 / T^2}$.

    \paragraph{Non-Smoothness Case.}
    Starting from \eqref{eq:universal-intermediate}, we have
    \begin{align*}
        A_T \mbr{f(\xb_T) - f(\xs)} \le {} & \frac{D^2}{\eta_T} + \sumT \eta_t \alpha_t^2 \|\nabla f(\xt_t) - \nabla f(\xb_t)\|^2 \le 3 D \sqrt{\sumT \alpha_t^2 \|\nabla f(\xt_t) - \nabla f(\xb_t)\|^2}\\
        \le {} & 3 D \sqrt{2G^2 \sumT \alpha_t^2} = \O\sbr{GD T^{3/2}}.
    \end{align*}
    where the second step uses \pref{lem:self-confident}, the third step uses the assumption of bounded gradients: $\|\nabla f(\cdot)\| \le G$, and the last step is due to the fact of $\sumT \alpha_t^2 = \sumT t^2 = \O(T^3)$. Diving both sides by $A_T = \Theta(T^2)$, we obtain the final bound of $\O(GD/\sqrt{T})$, completing the proof.
\end{proof}

\subsection{Proof of \pref{thm:conversion-universal}}
\label{app:conversion-universal}
\begin{proof}
    To start with, we analyze the intermediate term of:
    \begin{equation}
        \label{eq:conversion-2-inter}
        \sumT \alpha_t \mbr{f(\xt_t) - f(\xs)} \le \sumT \inner{\alpha_t \nabla f(\xt_t)}{\x_{t-1} - \xs} + \sumTT \inner{\alpha_t \nabla f(\xt_t)}{\xt_t - \x_{t-1}},
    \end{equation}
    where we omit the index of $t=1$ because $\xt_1 = \x_0$. Via the useful equation of $A_{t-1} \sbr{\xt_t - \xt_{t-1}} = \alpha_t \sbr{\x_{t-1} - \xt_t} + \alpha_{t-1} \sbr{\x_{t-1} - \x_{t-2}}$ (a detailed derivation is deferred to \pref{app:equations} due to page limits), the second term above can be bounded as  
    \begin{align*}
        & \sumTT \inner{\alpha_t \nabla f(\xt_t)}{\xt_t - \x_{t-1}} = \sumTT A_{t-1} \inner{\nabla f(\xt_t)}{\xt_{t-1} - \xt_t} + \sumTT \inner{\alpha_{t-1} \nabla f(\xt_t)}{\x_{t-1} - \x_{t-2}}\\
        \le {} & \sumTT A_{t-1} \mbr{f(\xt_{t-1}) - f(\xt_t)} + \sumTT \inner{\alpha_{t-1} \nabla f(\xt_t)}{\x_{t-1} - \x_{t-2}} - \sumTT A_{t-1} \D_f(\xt_{t-1}, \xt_t),
    \end{align*}
    where the last step uses the definition of Bregman divergence. Plugging the above back into \eqref{eq:conversion-2-inter}, 
    \begin{align*}
        A_1 f(\xt_1) - A_T f(\xs) \le {} & \sumT \inner{\alpha_t \nabla f(\xt_t)}{\x_{t-1} - \xs} + \sumTT \mbr{A_{t-1} f(\xt_{t-1}) - A_t f(\xt_t)}\\
        & + \sumTT \inner{\alpha_{t-1} \nabla f(\xt_t)}{\x_{t-1} - \x_{t-2}} - \sumTT A_{t-1} \D_f(\xt_{t-1}, \xt_t).
    \end{align*}
    As a result, we can upper-bound $A_T \mbr{f(\xt_T) - f(\xs)}$ by 
    \begin{equation*}
        \sumT \inner{\alpha_t \nabla f(\xt_t)}{\x_{t-1} - \xs} + \sumTT \inner{\alpha_{t-1} \nabla f(\xt_t)}{\x_{t-1} - \x_{t-2}} - \sumTT A_{t-1} \D_f(\xt_{t-1}, \xt_t).
    \end{equation*}
    Compared with the final result in \pref{thm:conversion-universal}, we need to handle the differential terms of 
    \begin{align*}
        & \sumT \inner{\alpha_t \nabla f(\xt_t)}{\x_{t-1} - \x_t} + \sumTT \inner{\alpha_{t-1} \nabla f(\xt_t)}{\x_{t-1} - \x_{t-2}} =  \sumT \inner{\alpha_t \nabla f(\xt_t)}{\x_{t-1} - \x_t}\\
        &\qquad + \sum_{t=1}^{T-1} \inner{\alpha_t \nabla f(\xt_{t+1})}{\x_t - \x_{t-1}} = \sum_{t=1}^{T-1} \alpha_t \inner{\nabla f(\xt_{t+1}) - \nabla f(\xt_t)}{\x_t - \x_{t-1}},
    \end{align*}
    where the second step shifts the time index, and the last step is because of $\alpha_T = 0$.
\end{proof}

\subsection{Proof of \pref{thm:universal-1grad}}
\label{app:universal-proof-2}
Before providing the proof, we present a useful property of smoothness.
\begin{myProp}[Theorem 2.1.5 of \citet{Nesterov-Convex}]
    \label{prop:smoothness}
    $f(\cdot)$ is $L$-smooth over $\R^d$ if and only if 
    \begin{equation}
        \label{eq:smoothness-property}
        \|\nabla f(\x) - \nabla f(\y)\|^2 \le 2 L \cdot \D_f(\y,\x), \quad \text{for any } \x, \y \in \R^d.
    \end{equation}
\end{myProp}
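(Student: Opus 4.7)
The plan is to start from \pref{thm:conversion-universal}, which bounds $A_T[f(\xt_T) - f(\xs)]$ by three pieces: a look-ahead online learning regret $\sum_{t=1}^T \inner{\alpha_t \nabla f(\xt_t)}{\x_t - \xs}$, an optimistic cross term $\sum_{t=1}^{T-1}\alpha_t\inner{\nabla f(\xt_{t+1}) - \nabla f(\xt_t)}{\x_t - \x_{t-1}}$, and a negative Bregman reserve $-\sum_{t=2}^T A_{t-1}\D_f(\xt_{t-1}, \xt_t)$. Both the cross term and the step size in~\pref{eq:step-size-1grad} depend only on gradients $\nabla f(\xt_\cdot)$, so the whole argument will use exactly one gradient query per iteration. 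Note also that $\alpha_T = 0$ kills the last regret summand and makes $\eta_{T-1}$ the largest step-size index actually used.

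First, I would control the look-ahead regret using \pref{lem:Bregman-proximal} applied to the OGD step $\x_t = \Pi_\X[\x_{t-1} - \eta_{t-1}\alpha_t\nabla f(\xt_t)]$, which yields $\inner{\alpha_t\nabla f(\xt_t)}{\x_t - \xs} \le \frac{1}{2\eta_{t-1}}(\|\xs - \x_{t-1}\|^2 - \|\xs - \x_t\|^2 - \|\x_t - \x_{t-1}\|^2)$. Summing, I would follow the manipulation inside the proof of \pref{cor:universal-2grad}: Abel-rearrange the telescoping part, bound it by $D^2/\eta_{T-1}$ using \pref{assum:domain} together with non-increasing $\eta_t$, and reshape the residual into $-\sum_t \frac{1}{2\eta_t}\|\x_t - \x_{t-1}\|^2$. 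Then I would apply AM-GM to the cross term with the matched constant, namely $\alpha_t\|\nabla f(\xt_{t+1}) - \nabla f(\xt_t)\|\cdot\|\x_t - \x_{t-1}\| \le \eta_t\alpha_t^2\|\nabla f(\xt_{t+1}) - \nabla f(\xt_t)\|^2 + \frac{1}{4\eta_t}\|\x_t - \x_{t-1}\|^2$, so that the quadratic piece is absorbed by the reserve.

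The remaining positive sum $\sum_{t=1}^{T-1}\eta_t\alpha_t^2\|\nabla f(\xt_{t+1}) - \nabla f(\xt_t)\|^2$ is exactly of the self-confident form, and \pref{lem:self-confident} turns it into $2D\sqrt{\sum_{t=1}^{T-1}\alpha_t^2\|\nabla f(\xt_{t+1}) - \nabla f(\xt_t)\|^2}$. Combining with $D^2/\eta_{T-1} = D\sqrt{\sum_{t=1}^{T-1}\alpha_t^2\|\nabla f(\xt_{t+1}) - \nabla f(\xt_t)\|^2}$ (immediate from the step-size definition), the first two terms of \pref{thm:conversion-universal} collapse into $3D\sqrt{\sum_{t=1}^{T-1}\alpha_t^2\|\nabla f(\xt_{t+1}) - \nabla f(\xt_t)\|^2}$, leaving me with
\[
A_T[f(\xt_T) - f(\xs)] \le 3D\sqrt{\sum_{t=1}^{T-1}\alpha_t^2\|\nabla f(\xt_{t+1}) - \nabla f(\xt_t)\|^2} - \sum_{t=1}^{T-1}A_t\,\D_f(\xt_t, \xt_{t+1}).
\]

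For the smooth case, I would invoke \pref{prop:smoothness} as $\|\nabla f(\xt_{t+1}) - \nabla f(\xt_t)\|^2 \le 2L\,\D_f(\xt_t, \xt_{t+1})$ together with $\alpha_t^2 \le 2A_t$ (since $\alpha_t = t$ and $A_t = t(t+1)/2$). Writing $Y \define \sum_{t=1}^{T-1}A_t\,\D_f(\xt_t, \xt_{t+1}) \ge 0$, the display above reduces to $A_T[f(\xt_T) - f(\xs)] \le 6D\sqrt{LY} - Y$, whose unconstrained maximum over $Y \ge 0$ is $9LD^2$, giving $\O(LD^2/T^2)$ after dividing by $A_T = \Theta(T^2)$. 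For the non-smooth case I would simply drop the Bregman reserve and plug in $\|\nabla f(\cdot)\| \le G$, so $\sum_{t=1}^{T-1}\alpha_t^2\|\nabla f(\xt_{t+1}) - \nabla f(\xt_t)\|^2 \le 4G^2\sum_t t^2 = \O(G^2T^3)$, yielding an $\O(GDT^{3/2})$ bound on $A_T[f(\xt_T) - f(\xs)]$ and hence $\O(GD/\sqrt{T})$. The main obstacle is choosing the AM-GM constant so that the self-confident sum and the step-size denominator are both indexed at $t$ rather than $t-1$; once that alignment is in place and the quadratic reserve from the regret telescope absorbs the AM-GM residual, the two regimes both reduce to elementary scalar calculations.
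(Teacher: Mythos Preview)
Your proposal does not address the stated proposition at all. \pref{prop:smoothness} is the classical equivalence between $L$-smoothness and the inequality $\|\nabla f(\x) - \nabla f(\y)\|^2 \le 2L\,\D_f(\y,\x)$; proving it requires a direct argument on the function (e.g., for the forward direction, fix $\x$ and minimize the auxiliary function $\phi(\z) = f(\z) - \inner{\nabla f(\x)}{\z}$, which is also $L$-smooth and attains its minimum at $\x$, then apply the descent lemma to $\phi$ at $\y$). What you have written is instead a complete proof sketch of \pref{thm:universal-1grad}, the universal one-gradient convergence result. Your argument \emph{invokes} \pref{prop:smoothness} in the smooth case rather than establishing it.

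In the paper, \pref{prop:smoothness} is not proved at all; it is imported from Nesterov's textbook and used as a black-box tool inside the proof of \pref{thm:universal-1grad}. So there is nothing to compare against for the proposition itself, and your write-up, while a perfectly reasonable sketch of the theorem that \emph{uses} the proposition, is simply aimed at the wrong target.
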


\begin{proof}[Proof of \pref{thm:universal-1grad}]
    For completeness, we restate the main results of \pref{thm:conversion-universal} as follows:
    \begin{align*}
        A_T \mbr{f(\xt_T) - f(\xs)} \le {} & \underbrace{\sumT \inner{\alpha_t \nabla f(\xt_t)}{\x_t - \xs}}_{\term{a}}\\
        + {} & \underbrace{\sum_{t=1}^{T-1} \alpha_t \inner{\nabla f(\xt_{t+1}) - \nabla f(\xt_t)}{\x_t - \x_{t-1}}}_{\term{b}} - \sumTT A_{t-1} \D_f(\xt_{t-1}, \xt_t).
    \end{align*}
    For \term{b}, it holds that 
    \begin{align*}
        \term{b} \le {} & \frac{1}{2} \sum_{t=1}^{T-1} \eta_t \alpha_t^2 \|\nabla f(\xt_{t+1}) - \nabla f(\xt_t)\|^2 + \sum_{t=1}^{T-1} \frac{1}{2\eta_t} \|\x_t - \x_{t-1}\|^2\\
        \le {} & D \sqrt{\sum_{t=1}^{T-1} \alpha_t^2 \|\nabla f(\xt_{t+1}) - \nabla f(\xt_t)\|^2} + \sum_{t=1}^{T-1} \frac{1}{2\eta_t} \|\x_t - \x_{t-1}\|^2,
    \end{align*}
    where the first step uses AM-GM inequality and the second step holds due to the step sizes~\eqref{eq:step-size-1grad} and the self-confident tuning lemma (\pref{lem:self-confident}). As for \term{a}, following the same analysis as in \pref{app:universal-proof-1}, using \pref{lem:Bregman-proximal}, it holds that 
    \begin{equation*}
        \term{a} \le \frac{D^2}{\eta_{T-1}} - \sum_{t=1}^{T-1} \frac{1}{2\eta_t} \|\x_t - \x_{t-1}\|^2.
    \end{equation*}
    Combining the above results, it holds that 
    \begin{equation}
        \label{eq:universal-inter-2}
        A_T \mbr{f(\xt_T) - f(\xs)} \le 2D \sqrt{\sum_{t=1}^{T-1} \alpha_t^2 \|\nabla f(\xt_{t+1}) - \nabla f(\xt_t)\|^2} - \sumTT A_{t-1} \D_f(\xt_{t-1}, \xt_t).
    \end{equation}
    In the following, we consider smooth and non-smooth cases separately.
    
    \paragraph{Smoothness Case.}
    Starting from \eqref{eq:universal-inter-2}, we obtain
    \begin{align*}
        A_T \mbr{f(\xt_T) - f(\xs)} \le {} & 2D\sqrt{2L \sum_{t=1}^{T-1} \alpha_t^2 \D_f(\xt_t, \xt_{t+1})} - \sum_{t=1}^{T-1} A_t \D_f(\xt_t, \xt_{t+1})\\
        \le {} & 2 \gamma L D + \sum_{t=1}^{T-1} \sbr{\frac{\alpha_t^2 D}{\gamma} - A_t} \D_f(\xt_t, \xt_{t+1}) \le \O(L D^2),
    \end{align*}
    where the first step uses the property of smoothness in \pref{prop:smoothness}, the second step uses AM-GM inequality $\sqrt{xy} \le \frac{x}{2\gamma} + \frac{\gamma y}{2}$ for any $x,y,\gamma > 0$, and the last step holds by simply choosing $\gamma = 2D$ (this constant only appears in the analysis and thus can be choosen arbitrarily). Finally, since $A_T = \frac{T(T-1)}{2} = \Theta(T^2)$, we achieve the optimal accelerated rate of $\O(LD^2/T^2)$.

    \paragraph{Non-Smoothness Case.}
    Starting from \eqref{eq:universal-inter-2}, we have
    \begin{equation*}
        A_T \mbr{f(\xt_T) - f(\xs)} \le 2D \sqrt{\sum_{t=1}^{T-1} \alpha_t^2 \|\nabla f(\xt_{t+1}) - \nabla f(\xt_t)\|^2} \le 4D \sqrt{G^2 \sumT \alpha_t^2} = \O\sbr{GD T^{3/2}},
    \end{equation*}
    the second step uses the assumption of bounded gradients: $\|\nabla f(\cdot)\| \le G$, and the last step is due to the fact of $\sumT \alpha_t^2 = \sumT t^2 = \O(T^3)$. Diving both sides by $A_T = \Theta(T^2)$, we obtain the final bound of $\O(GD/\sqrt{T})$, completing the proof.
\end{proof}

\section{Extension to Stochastic Optimization}
\label{app:stochastic}
In this section, we extend our proposed methods to the stochastic optimization setting. In the following, we first provide a formal description of the stochastic gradient oracle.
\begin{myAssum}[Stochastic Gradient Oracle]
    \label{assum:stochastic}
    For the objective function $f: \mathcal{X} \rightarrow \mathbb{R}$, we assume access to a stochastic gradient oracle $g(\cdot)$ that satisfies:
    \begin{enumerate}
        \item[\textit{(i)}] Unbiasedness: $\E[\g(\x) \given \x] = \nabla f(\x)$ for all $\x \in \X$;
        \item[\textit{(ii)}] Bounded noise: $\|\g(\x) - \nabla f(\x)\| \leq \sigma$ almost surely, where $\sigma > 0$ is the noise parameter.
    \end{enumerate}
\end{myAssum}
Note that \pref{assum:stochastic} leverages an almost-sure boundedness because we aim to achieve \emph{high-probability} rates, which is stronger than the expectation rates obtained by previous works~\citep{ICML'19:Cutkosky,NeurIPS'19:UniXGrad,ICML'20:Joulani}, where a bounded-variance assumption suffices.

Below we present our result for the stochastic optimization setting.
\begin{myCor}
    \label{cor:universal-1grad-stochastic}
    Under Assumptions~\ref{assum:domain} and~\ref{assum:stochastic}, our \pref{alg:our-conversion} with weights $\alpha_t = t$ for $t \in [T-1]$, $\alpha_T = 0$, and with step size
    \begin{equation}
        \label{eq:step-size-1grad-stochastic}
        \eta_t = \frac{D}{\sqrt{\sum_{s=1}^t \alpha_s^2 \|\g(\xt_{s+1}) - \g(\xt_s)\|^2}},
    \end{equation}
    with probability at least $1 - \delta$, guarantees
    \begin{equation*}
        f(\tilde{\x}_T) - f(\x^*) \leq
        \begin{cases}
            \O\sbr{\frac{L D^2}{T^2} + \frac{\sigma D \sqrt{\theta_{T,\delta/3}}}{\sqrt{T}} + \frac{\sigma D \theta_{T,\delta/3}}{T}}, & \text{if $f(\cdot)$ satisfies \pref{assum:smoothness}}, \\[2mm]
            \O\sbr{\frac{D (G+\sigma)}{\sqrt{T}} + \frac{\sigma D \sqrt{\theta_{T,\delta/3}}}{\sqrt{T}} + \frac{\sigma D \theta_{T,\delta/3}}{T}}, & \text{if } \|\nabla f(\cdot)\| \leq G,
        \end{cases}
    \end{equation*}
    where $\smash{\theta_{t,\delta/3} = \log \frac{180 \log (6t)}{\delta}}$ for any $t \in [T]$ and $\delta > 0$.
\end{myCor}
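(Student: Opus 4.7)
The plan is to mirror the deterministic proof of \pref{thm:universal-1grad} (given in \pref{app:universal-proof-2}) while tracking the extra martingale-type terms induced by the stochastic gradient oracle. First I would invoke \pref{thm:conversion-universal} directly, since that conversion is a purely analytical statement about the true gradients $\nabla f(\xt_t)$ and does not depend on how the algorithm chooses its iterates. This gives an upper bound on $A_T[f(\xt_T) - f(\xs)]$ consisting of (a) a linearized regret term $\sum_t \langle \alpha_t \nabla f(\xt_t), \x_t - \xs\rangle$, (b) an optimistic-type cross term $\sum_{t} \alpha_t \langle \nabla f(\xt_{t+1}) - \nabla f(\xt_t), \x_t - \x_{t-1}\rangle$, and (c) a negative Bregman term $-\sum_t A_{t-1} \D_f(\xt_{t-1},\xt_t)$.

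Next I would decompose $\g(\xt_t) = \nabla f(\xt_t) + \xib_t$, where by \pref{assum:stochastic} the noise sequence $\{\xib_t\}$ is a bounded martingale difference with $\|\xib_t\| \le \sigma$ almost surely. Substituting this into (a) and (b), the ``noise-free'' parts $\sum_t \langle \alpha_t \g(\xt_t), \x_t - \xs\rangle$ and $\sum_t \alpha_t \langle \g(\xt_{t+1}) - \g(\xt_t), \x_t - \x_{t-1}\rangle$ can be handled exactly as in \pref{app:universal-proof-2}: the Bregman proximal inequality (\pref{lem:Bregman-proximal}) with the adaptive step size~\eqref{eq:step-size-1grad-stochastic} together with the self-confident tuning lemma (\pref{lem:self-confident}) collapses them into $\O(D\sqrt{\sum_t \alpha_t^2 \|\g(\xt_{t+1}) - \g(\xt_t)\|^2})$ minus negative stability terms. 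To bound the square-root, I would use the crude splitting $\|\g(\xt_{t+1}) - \g(\xt_t)\|^2 \le 2 \|\nabla f(\xt_{t+1}) - \nabla f(\xt_t)\|^2 + 2 \|\xib_{t+1} - \xib_t\|^2$: under \pref{assum:smoothness}, \pref{prop:smoothness} turns the first piece into $4L\D_f(\xt_t,\xt_{t+1})$ absorbable by (c) via AM-GM, while the second piece contributes $\O(\sigma^2 \sum_t \alpha_t^2) = \O(\sigma^2 T^3)$, yielding the $\sigma D/\sqrt{T}$ component after dividing by $A_T=\Theta(T^2)$. The non-smooth case is even simpler: bound $\|\g(\cdot)\| \le G+\sigma$ directly.

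The genuinely stochastic terms are the linear-in-noise residues left over from the substitution, namely $\sum_t \langle \alpha_t \xib_t, \x_t - \xs\rangle$ and $\sum_t \alpha_t \langle \xib_{t+1} - \xib_t, \x_t - \x_{t-1}\rangle$, both of which are martingale-difference sums with per-step increments bounded by $\O(\alpha_t \sigma D)$ by \pref{assum:domain}. To obtain a high-probability bound I would apply a time-uniform Freedman-type inequality (e.g.\ the stitched bound of Howard et al.), which gives deviations of order $\sigma D \sqrt{(\sum_t \alpha_t^2) \theta_{T,\delta/3}} + \sigma D (\max_t \alpha_t) \theta_{T,\delta/3}$. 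Using $\sum_t \alpha_t^2 = \O(T^3)$ and $\max_t \alpha_t = T$, dividing by $A_T=\Theta(T^2)$, produces exactly the $\sigma D \sqrt{\theta_{T,\delta/3}}/\sqrt{T}$ and $\sigma D \theta_{T,\delta/3}/T$ terms in the statement. Splitting the total failure probability $\delta$ into three equal pieces (one for each martingale) via a union bound accounts for the $\delta/3$ appearing inside $\theta_{T,\delta/3}$.

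The main obstacle is that the step size $\eta_t$ in \eqref{eq:step-size-1grad-stochastic} is itself a random function of the realized stochastic gradients, so one cannot directly treat it as adapted in a way that commutes cleanly with the concentration. I would circumvent this by keeping the self-confident tuning argument strictly pathwise --- \pref{lem:self-confident} holds deterministically --- and only invoking concentration \emph{after} the regret bound has been reduced to the martingale-difference sums above, whose increments are bounded almost surely by quantities \emph{independent} of $\eta_t$. A secondary subtlety, which requires care but should cause no conceptual difficulty, is ensuring the AM-GM splitting of $\|\g(\xt_{t+1})-\g(\xt_t)\|^2$ is done after (not before) invoking \pref{lem:self-confident}, so that the $\sigma^2 \alpha_t^2$ noise contribution inside the square-root cleanly separates from the smoothness-driven term that cancels with the negative Bregman stability.
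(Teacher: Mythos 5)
Your proposal is correct and follows essentially the same route as the paper's proof: the same decomposition of terms (a) and (b) into a stochastic-gradient part (handled pathwise via \pref{lem:Bregman-proximal} and \pref{lem:self-confident}, exactly because those hold deterministically for the realized step sizes) plus linear-in-noise martingale residues, the same splitting of $\|\g(\xt_{t+1})-\g(\xt_t)\|^2$ inside the square root after the self-confident tuning so the smoothness part is absorbed by the negative Bregman term, and the same three-way union bound over time-uniform martingale concentration (the paper's \pref{lem:DOG-concentration} is precisely the Freedman/Howard-type bound you invoke, applied with $y_t=\alpha_t D$ and increments bounded by $\sigma$). No gaps.
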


Before presenting the proof of \pref{cor:universal-1grad-stochastic}, we import a useful lemma from \citet{ICML'23:DOG}.
\begin{myLemma}[Lemma D.2 of \citet{ICML'23:DOG}]
    \label{lem:DOG-concentration}
    Let $c > 0$, $\{X_t\}$ be a martingale difference sequence adapted to filtration $\F_t$ with $\E[X_t] = 0$ and $|X_t| \le c$, and $\{y_t\}$ be a non-negative and non-decreasing sequence. Then, for any $\delta \in (0, 1)$, with probability at least $1 - \delta$, we have
    \begin{equation*}
        \abs{\sumT y_t X_t} \le 8 y_T \sqrt{\theta_{T,\delta} \sumT X_t^2 + c^2 \theta_{T,\delta}^2},
    \end{equation*}
    where $\theta_{t,\delta} \define \log \frac{60 \log (6t)}{\delta}$.
\end{myLemma}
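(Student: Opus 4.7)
The plan is to prove this time-uniform martingale concentration via a Freedman-type argument combined with a peeling (stitching) union bound that handles the unknown magnitude of the quadratic variation and produces the characteristic $\log\log$ factor hidden in $\theta_{T,\delta}$.

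First, I would reduce to a rescaled martingale by setting $Z_t \define y_t X_t$. Since $\{y_t\}$ is non-negative and predictable (non-decreasing, hence deterministic in the natural filtration picture), $\{Z_t\}$ remains a martingale difference sequence w.r.t.\ $\F_t$, and monotonicity $y_t \le y_T$ yields the uniform bound $|Z_t| \le c y_T$. The task becomes controlling $\abs{\sumT Z_t}$ in terms of $\sumT Z_t^2$, after which I factor out $y_T$ at the end.

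Next, I would apply Freedman's inequality to the martingale $M_T = \sumT Z_t$. The classical form controls $|M_T|$ in terms of the conditional quadratic variation $W_T = \sumT \E[Z_t^2 \mid \F_{t-1}]$, giving (on the event $\{W_T \le V\}$) a bound of order $\sqrt{V \log(1/\delta)} + c y_T \log(1/\delta)$. To convert this into a bound involving the \emph{empirical} quantity $\sumT Z_t^2$ as required, I would in parallel apply Freedman to the auxiliary martingale $\sum_t (Z_t^2 - \E[Z_t^2 \mid \F_{t-1}])$, whose increments are bounded by $c^2 y_T^2$, yielding with high probability that $W_T$ and $\sumT Z_t^2$ are comparable up to the same logarithmic order.

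The main technical obstacle — and the source of the $\log\log$ factor in $\theta_{T,\delta}$ — is that neither $W_T$ nor $\sumT Z_t^2$ has an a priori non-trivial deterministic upper bound; the crude bound $W_T \le T c^2 y_T^2$ would contribute a full $\log T$ rather than $\log\log T$. To circumvent this, I would use a stitching/peeling argument: partition the possible values of $W_T$ into dyadic buckets $[2^{k-1}, 2^k] \cdot c^2 y_T^2$ for $k = 0, 1, \ldots, \ceil{\log_2 T}$, apply Freedman within each bucket with a failure probability schedule $\delta_k \propto \delta / k^2$ (or equivalently an allocation proportional to $1/\log^2(6t)$, which is exactly what produces the constant $60$ and the $\log(6t)$ inside $\theta_{t,\delta}$), and take a union bound. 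Since $\sum_k \delta_k \le \delta$, this gives a bound of the desired $\log\log$ form simultaneously over all possible magnitudes of the quadratic variation.

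Finally, I would assemble the pieces. The combination of the bucketed Freedman bound with the empirical-to-conditional bridge yields, with probability at least $1 - \delta$, an inequality of the form $|M_T| \le C_1 \sqrt{\theta_{T,\delta} \sumT Z_t^2} + C_2 c y_T \theta_{T,\delta}$. Substituting back $Z_t = y_t X_t$ and applying $y_t \le y_T$ to factor out $y_T$ gives $\sumT Z_t^2 \le y_T^2 \sumT X_t^2$, whence $\abs{\sumT y_t X_t} \le C_1 y_T \sqrt{\theta_{T,\delta} \sumT X_t^2} + C_2 y_T c \theta_{T,\delta}$. Merging the two error terms using $\sqrt{a} + \sqrt{b} \le \sqrt{2(a+b)}$, applying the one-sided bound separately to $M_T$ and $-M_T$ (costing a factor of $2$ in $\delta$, which is absorbed into the constant $60$), and tracking the absolute constants through the peeling yields exactly the stated bound with prefactor $8$.
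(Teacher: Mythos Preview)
The paper does not prove this lemma; it imports it verbatim from \citet{ICML'23:DOG} (their Lemma~D.2) and uses it as a black-box concentration tool inside the proof of \pref{cor:universal-1grad-stochastic}. There is therefore nothing in the paper to compare your argument against.

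On its own merits, your Freedman-plus-peeling strategy is the standard route to time-uniform empirical-Bernstein bounds of this shape, and the overall plan is sound. One caveat worth flagging: your first reduction sets $Z_t = y_t X_t$ and asserts that $\{Z_t\}$ is still a martingale difference sequence with $|Z_t| \le c\,y_T$. This is clean only when $\{y_t\}$ is predictable (or deterministic) and $y_T$ is a deterministic constant --- otherwise $y_T$ is a random increment bound and Freedman does not apply directly. The statement as written allows $\{y_t\}$ to be merely adapted, so in the general case you would need either an additional peeling layer over the magnitude of $y_T$, or an Abel-summation rearrangement that pushes the $y_t$ weights outside the martingale before invoking Freedman. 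In the paper's actual applications the weights are $y_t = \alpha_t D$, which are deterministic, so this subtlety is harmless there.
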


\begin{proof}[Proof of \pref{cor:universal-1grad-stochastic}]
    We start from the beginning of \pref{thm:universal-1grad} and aim to analyze two main terms:
    \begin{gather*}
        \term{a} = \sumT \inner{\alpha_t \nabla f(\xt_t)}{\x_t - \xs},\\
        \text{and } \term{b} = \sum_{t=1}^{T-1} \alpha_t \inner{\nabla f(\xt_{t+1}) - \nabla f(\xt_t)}{\x_t - \x_{t-1}}.
    \end{gather*}  
    For \term{a}, we further decompose it into two parts because we can only access the stochastic gradient $\g(\xt_t)$:
    \begin{equation*}
        \term{a} = \underbrace{\sumT \inner{\alpha_t \g(\xt_t)}{\x_t - \xs}}_{\term{a-i}} + \underbrace{\sum_{t=1}^{T-1} \alpha_t \inner{\nabla f(\xt_t) - \g(\xt_t)}{\x_t - \xs}}_{\term{a-ii}},
    \end{equation*}
    where \term{a-ii} only counts from $t=1$ to $t=T-1$ because we manually set $\alpha_T = 0$. For \term{a-i}, we can use the Bregman proximal inequality (\pref{lem:Bregman-proximal}) to obtain
    \begin{equation*}
        \term{a-i} \le \frac{D^2}{\eta_{T-1}} - \sum_{t=1}^{T-1} \frac{1}{2\eta_t} \|\x_t - \x_{t-1}\|^2.
    \end{equation*}
    And for \term{a-ii}, using the martingale concentration inequality (\pref{lem:DOG-concentration}) with 
    \begin{equation*}
        X_t = \left\langle \nabla f(\xt_t) - \g(\xt_t), \frac{\x_t - \xs}{D} \right\rangle, \quad y_t = \alpha_t D, \quad c = \sigma,
    \end{equation*}
    where $(\F_t)_{t\ge 0}$ denotes the filtration generated by the history up to round $t$, i.e., $\F_t = \sigma\big(\x_0, \{\x_s, \xt_s, \g(\xt_s) \}_{s=1}^t \big)$, and $\{X_t\}_{t=1}^{T-1}$ is a martingale difference sequence adapted to $(\F_t)_{t\ge 0}$; with probability at least $\smash{1 - \nicefrac{\delta}{3}}$, 
    \begin{align*}
        \term{a-ii} \le {} & 8 (T-1)D \sqrt{\theta_{T,\delta/3} \sigma^2 (T-1) + \sigma^2 \theta_{T,\delta/3}^2},
    \end{align*}
    where $\smash{\theta_{t,\delta/3} = \log \frac{180 \log (6t)}{\delta}}$. For \term{b}, we can decompose it into two parts of
    \begin{gather*}
        \term{b-i} = \sum_{t=1}^{T-1} \alpha_t \inner{\g(\xt_{t+1}) - \g(\xt_t)}{\x_t - \x_{t-1}},\\
        \text{ and } \term{b-ii} = \sum_{t=1}^{T-1} \alpha_t \inner{\nabla f(\xt_{t+1}) - \g(\xt_{t+1})}{\x_t - \x_{t-1}} + \sum_{t=1}^{T-1} \inner{\g(\xt_t) - \nabla f(\xt_t)}{\x_t - \x_{t-1}}. 
    \end{gather*}
    For \term{b-i}, following the same analysis as \pref{thm:universal-1grad}, we have
    \begin{equation*}
        \term{b-i} \le D \sqrt{\sum_{t=1}^{T-1} \alpha_t^2 \|\g(\xt_{t+1}) - \g(\xt_t)\|^2} + \sum_{t=1}^{T-1} \frac{1}{2\eta_t} \|\x_t - \x_{t-1}\|^2.
    \end{equation*}
    And for \term{b-ii}, we can use the martingale concentration inequality (\pref{lem:DOG-concentration}) again, where we note that $X_t = \inner{\nabla f(\xt_{t+1}) - \g(\xt_{t+1})}{\x_t - \x_{t-1}} / D$ is also a martingale difference sequence adapted to the filtration $(\F_t)_{t\ge 0}$ because $\xt_{t+1}$ only contains the information up to the $t$-th iteration. Thus, with probability at least $\smash{1 - \nicefrac{2\delta}{3}}$, we have
    \begin{equation*}
        \term{b-ii} \le 16 (T-1)D \sqrt{\theta_{T-1,\delta/3} \sigma^2 (T-1) + \sigma^2 \theta_{T-1,\delta/3}^2}.
    \end{equation*}
    In the following, we consider smooth and non-smooth cases separately.

    \paragraph{Smoothness Case.}
    Combining \term{a-i}, \term{b-i}, and the negative Bregman divergence as shown in \pref{thm:conversion-universal}, we have
    \begin{align*}
        & 2D \sqrt{\sum_{t=1}^{T-1} \alpha_t^2 \|\g(\xt_{t+1}) - \g(\xt_t)\|^2} - \sum_{t=1}^{T-1} A_t \D_f(\xt_t, \xt_{t+1})\\
        \le {} & 2D \sqrt{3\sum_{t=1}^{T-1} \alpha_t^2 \|\nabla f(\xt_{t+1}) - \nabla f(\xt_t)\|^2 + 6\sum_{t=1}^{T-1} \alpha_t^2 \sigma^2} - \sum_{t=1}^{T-1} A_t \D_f(\xt_t, \xt_{t+1})\\
        \les {} & L D^2 + \sigma T^{3/2},
    \end{align*}
    because of $\|\nabla f(\x) - \g(\x)\| \le \sigma$ for any $\x \in \X$ and the fact of $\sumT \alpha_t^2 = \sumT t^2 = \O(T^3)$. Finally, we can combine all the terms together to obtain the final bound of
    \begin{equation*}
        f(\xt_T) - f(\xs) \le \O\sbr{\frac{L D^2}{T^2} + \frac{\sigma D \sqrt{\theta_{T,\delta/3}}}{\sqrt{T}} + \frac{\sigma D \theta_{T,\delta/3}}{T}},
    \end{equation*}
    which matches the optimal rate of $\O(LD^2/T^2 + \sigma D / \sqrt{T})$ with only a logarithmic factor overhead. And note that the extra logarithmic factor always appears in high-probability bounds in the literature.
    
    \paragraph{Non-Smoothness Case.}
    Combining \term{a-i} and \term{b-i}, we have
    \begin{align*}
        & 2D \sqrt{\sum_{t=1}^{T-1} \alpha_t^2 \|\g(\xt_{t+1}) - \g(\xt_t)\|^2} \le 2D \sqrt{3\sum_{t=1}^{T-1} \alpha_t^2 \|\nabla f(\xt_{t+1}) - \nabla f(\xt_t)\|^2 + 6\sum_{t=1}^{T-1} \alpha_t^2 \sigma^2}\\
        \le {} & 2D \sqrt{6G^2 \sum_{t=1}^{T-1} \alpha_t^2 + 6\sigma^2 \sum_{t=1}^{T-1} \alpha_t^2} \le \O\sbr{D (G+\sigma) T^{3/2}},
    \end{align*}
    because of $\|\nabla f(\x) - \g(\x)\| \le \sigma$ for any $\x \in \X$ and the fact of $\sumT \alpha_t^2 = \sumT t^2 = \O(T^3)$. Finally, we can combine all the terms together to obtain the final bound of
    \begin{equation*}
        f(\xt_T) - f(\xs) \le \O\sbr{\frac{D (G+\sigma)}{\sqrt{T}} + \frac{\sigma D \sqrt{\theta_{T,\delta/3}}}{\sqrt{T}} + \frac{\sigma D \theta_{T,\delta/3}}{T}},
    \end{equation*}
    which matches the optimal rate of $\O(D (G+\sigma)/\sqrt{T})$ with only a logarithmic factor overhead. And note that the extra logarithmic factor always appears in high-probability bounds in the literature.
\end{proof}
\section{Proof of \pref{prop:connection}}
\label{app:connection}
\begin{proof}
    We prove this by induction. To begin with, we aim to show $\y_1 = \xb_1$. To see this, we provide a fundamental analysis for \pref{alg:our-conversion}:
    \begin{align}
        \xb_t = {} & \frac{1}{A_t} \sbr{A_{t-1} \xb_{t-1} + \alpha_t \x_t} = \frac{1}{A_t} \mbr{A_{t-1} \xb_{t-1} + \alpha_t \sbr{\x_{t-1} - \eta_{t-1} \alpha_t \nabla f(\xt_t)}} \notag\\
        = {} & \frac{1}{A_t} \mbr{A_{t-1} \xb_{t-1} + \alpha_t \sbr{\frac{A_{t-1} \xb_{t-1} - A_{t-2} \xb_{t-2}}{\alpha_{t-1}} - \eta_{t-1} \alpha_t \nabla f(\xt_t)}} \notag\\
        = {} & \xb_{t-1}\sbr{\frac{A_{t-1}}{A_t} + \frac{\alpha_t (\alpha_{t-1} + A_{t-2})}{A_t \alpha_{t-1}}} - \xb_{t-2} \sbr{\frac{\alpha_t A_{t-2}}{A_t \alpha_{t-1}}} - \frac{\eta_{t-1} \alpha_t^2}{A_t} \nabla f(\xt_t) \notag\\
        = {} & \xb_{t-1} + \frac{\alpha_t A_{t-2}}{A_t \alpha_{t-1}} \sbr{\xb_{t-1} - \xb_{t-2}}  - \frac{\eta_{t-1} \alpha_t^2}{A_t} \nabla f(\xt_t) \notag\\
        = {} & \xb_{t-1} - \frac{1}{4L} \nabla f(\xt_t) + \frac{t-2}{t+1} \sbr{\xb_{t-1} - \xb_{t-2}}, \label{eq:eqvl-1}
    \end{align}
    where the second step uses the update rule, and the last step holds due to $\alpha_t = t$, $\smash{\eta_{t-1} = \frac{t+1}{t}\cdot \frac{1}{8L}}$. For $t=1$, $A_{t-2} = 0$, we have $\xb_1 = \xb_0 - \frac{1}{4L} \nabla f(\xt_1) = \x_0 - \frac{1}{4L} \nabla f(\x_0)$ due to the initialization of $\xt_1 = \xb_0 = \x_0$. Furthermore, for \NAG~\eqref{eq:NAG}, $\y_1 = \z_{0} - \theta \nabla f(\z_0) = \x_0 - \frac{1}{4L} \nabla f(\x_0)$ due to the initialization of $\y_0 = \z_0 = \x_0$. To conclude, we have $\y_1 = \xb_1$. 

    Consequently, we assume $\y_{t-1} = \xb_{t-1}$ for some $t > 1$ and aim to prove $\z_{t-1} = \xt_t$. In \NAG~\eqref{eq:NAG}, since $\z_{t-1} = \y_{t-1} + \frac{t-2}{t+1} (\y_{t-1} - \y_{t-2})$, we have $\z_{t-1} = \xb_{t-1} + \frac{t-2}{t+1} (\xb_{t-1} - \xb_{t-2})$ due to the induction hypothesis. To prove $\smash{\z_{t-1} = \xt_t}$, since both of them are linear combinations of the sequence $\{\x_s\}_{s=1}^{t-1}$, we only need to prove that the coefficients of $\x_s$ for all $s \in [t-1]$ are the same. Specifically, for $\x_{t-1}$, its coefficient in $\z_{t-1}$ is $\frac{\alpha_{t-1}}{A_{t-1}} (1 + \frac{t-2}{t+1}) = \frac{2(2t-1)}{t(t+1)}$ and its coefficient in $\xt_t$ is $\frac{1}{A_t} (\alpha_{t-1} + \alpha_t) = \frac{2(2t-1)}{t(t+1)}$, which are the same. For $\x_s$ with $s \in [t-2]$, its coefficient in $\z_{t-1}$ is $\frac{\alpha_s}{A_{t-1}} (1 + \frac{t-2}{t+1}) - \frac{t-2}{t+1} \frac{\alpha_s}{A_{t-2}} = \frac{2 \alpha_s}{t(t+1)}$ and its coefficient in $\xt_t$ is $\frac{\alpha_s}{A_t} = \frac{2 \alpha_s}{t(t+1)}$, which are the same. As a result, we have proven $\z_{t-1} = \xt_t$.

    Finally, we aim to prove $\y_t = \xb_t$ to finish the proof of induction. To see this, we combine the update of \NAG~\eqref{eq:NAG} into one step and use $\y_{t-1} = \xb_{t-1}$ and $\z_{t-1} = \xt_t$:
    \begin{align*}
        \y_t = \z_{t-1} - \theta \nabla f(\z_{t-1}) = {} & \sbr{\y_{t-1} + \frac{t-2}{t+1} (\y_{t-1} - \y_{t-2})} - \theta \nabla f(\z_{t-1})\\
        = {} & \sbr{\xb_{t-1} + \frac{t-2}{t+1} (\xb_{t-1} - \xb_{t-2})} - \frac{1}{4L} \nabla f(\xt_t),
    \end{align*}
    which is equivalent to \eqref{eq:eqvl-1}, showing that $\y_t = \xb_t$.

    Finally, since we have proved $\xt_t = \xb_{t-1} + \frac{t-2}{t+1} (\xb_{t-1} - \xb_{t-2})$, denoting by $\beta_{t-1} = \frac{t-2}{t+1}$, the update rule of our algorithm can be rewritten as 
    \begin{equation*}
        \xb_t = \xb_{t-1} + \beta_{t-1} \sbr{\xb_{t-1} - \xb_{t-2}} - \frac{1}{4L} \nabla f\sbr{\xb_{t-1} + \beta_{t-1} (\xb_{t-1} - \xb_{t-2})},
    \end{equation*}
    which is exactly Nesterov's accelerated gradient method in a one-step update formulation.
\end{proof}

\section{Experiments}
\label{app:experiment}
In this section, we present the numerical experiments to evaluate the performance of the proposed methods. We compare our methods with baseline algorithms across different problem settings (non-universal and universal) on various datasets.

\textbf{Contenders.}~~
In the non-universal setting, we compare our method (\pref{alg:our-conversion} with step size $\eta = \frac{1}{4L}$) with: \textit{(i)} the classic non-universal methods of \NAG, as stated in~\pref{eq:NAG}; \textit{(ii)} the standard gradient descent \GD; and \textit{(iii)} the non-universal variant of UniXGrad~\citep{NeurIPS'19:UniXGrad} (whose step size is also set as $\eta = \frac{1}{4L}$). And in the universal setting, we compare our methods~---~the two-gradient version with adaptive step size \pref{eq:step-size-2grad} and the one-gradient improvement with adaptive step size \pref{eq:step-size-1grad}~---~with the classic universal methods: \textit{(i)} \textbf{UniXGrad}~\citep{NeurIPS'19:UniXGrad} and \textit{(ii)} the method in \citet{ICML'20:Joulani} (abbreviated as \textbf{JRGS'20}).

\textbf{Setup.}~~
Our experiment setup is mainly inspired by \citet{NeurIPS'19:UniXGrad}. We investigate two kinds of convex smooth optimization problems: the squared loss task and the logistic regression task. For the squared loss task, we take the least squares problem with $L_2$-norm ball constraint for this setting, i.e., $f(\x) \triangleq \frac{1}{2N} \|A\x - \b\|_2^2$, where $\|\x\|_2 < R$, $A \in \R^{N \times d}$ follows a normal distribution of $\mathcal{N}(\mathbf{0}, \sigma^2 I)$ and $\b = A\xs + \epsilon$ such that $\epsilon$ is a random vector $\sim \mathcal{N}(\mathbf{0}, 10^{-3})$. We pick $N = 500$ and $d = 100$. The smoothness parameter $L$ of the squared objective is $\frac{1}{N} \sigma_{\max}(A)^2$, where $\sigma_{\max}(A)$ is the largest singular value of $A$.

For the logistic regression task, the performance is measured by the $\ell_2$-regularized logistic loss $f(\x) \triangleq \frac{1}{N} \sum_{i=1}^N \log(1 + \exp(-b_i \cdot \mathbf{a}_i^\top \x)) + \mu \|\x\|_2^2$, where $\mathbf{a}_t \in \R^d$ and $b_t \in \{-1, +1\}$ are chosen from a dataset $\{\mathbf{a}_t, b_t\}_{i=1}^N$, $\mu = 0.005$ is the parameter of the regularization term to prevent overfitting. The smoothness parameter $L$ of the logistic objective is $\frac{1}{4N} \lambda_{\max}(\sum_{i=1}^N \mathbf{a}_i \mathbf{a}_i^\top)$, where $\lambda_{\max}(\cdot)$ is the largest eigenvalue of the matrix. Here we use five \href{https://www.csie.ntu.edu.tw/~cjlin/libsvmtools/datasets/}{\textcolor{blue}{LIBSVM}} datasets to initialize the logistic loss.

In the non-universal setting, all the methods can use the knowledge of the smoothness parameter $L$, which is prohibited in the universal setting.

\textbf{Results.}~~
We report average results of the suboptimality gap, i.e., $f(\cdot) - \min_{\x \in \X} f(\x)$, in the \emph{logarithmic} scale, and time complexity with standard deviations of $5$ independent runs. Only the randomness of the initial point is preserved. All hyper-parameters are set to be theoretically optimal. 

\pref{fig:non-universal} plots the suboptimality gap and time complexity of all methods, in the non-universal setting. Our method (\pref{alg:our-conversion}) achieves a similar convergence behavior as \NAG and \textbf{UniXGrad} while being faster than \GD. As for the time complexity, \GD is the fastest one, \textbf{UniXGrad} is the slowest because there are two gradient evaluations and projections per iteration, while our method and \NAG are comparable.

\begin{figure}[!ht]
    \centering
    \subfigure[\texttt{Squared}]{
        \label{fig:non-universal-squared}
        \includegraphics[clip, trim=2mm 3mm 2mm 2mm, height=3.12cm]{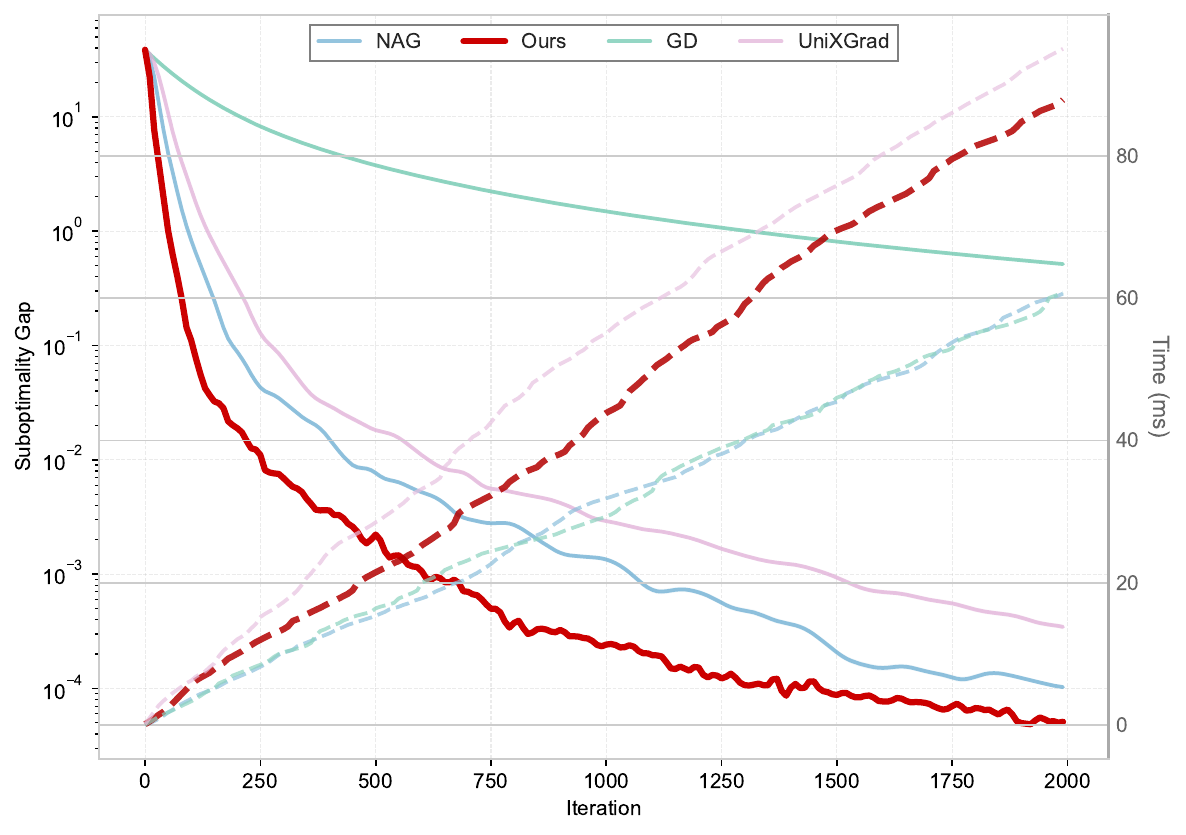}}
    \subfigure[\texttt{a1a}]{
        \label{fig:non-universal-a1a}
        \includegraphics[clip, trim=2mm 3mm 2mm 2mm, height=3.12cm]{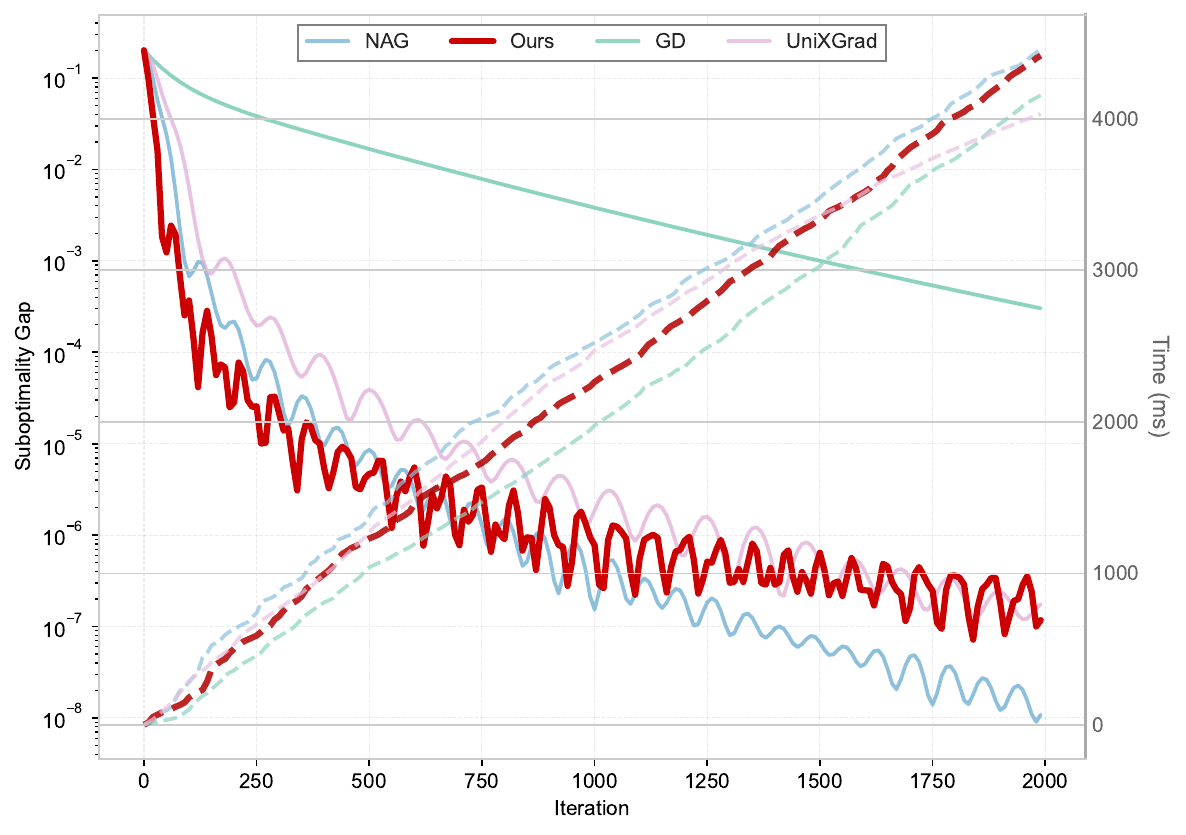}}
    \subfigure[\texttt{mushrooms}]{
        \label{fig:non-universal-mushrooms}
        \includegraphics[clip, trim=2mm 3mm 2mm 2mm, height=3.12cm]{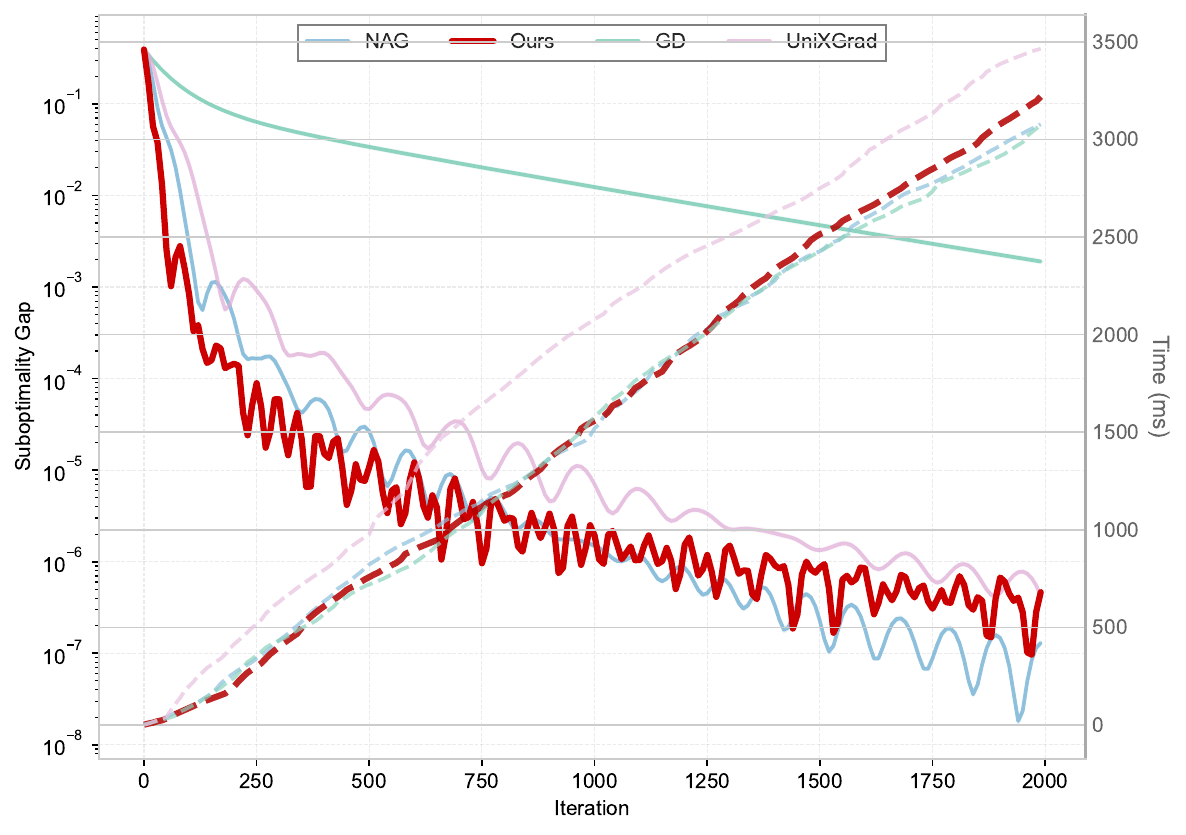}}\\
    \subfigure[\texttt{splice}]{
        \label{fig:non-universal-splice}
        \includegraphics[clip, trim=2mm 3mm 2mm 2mm, height=3.12cm]{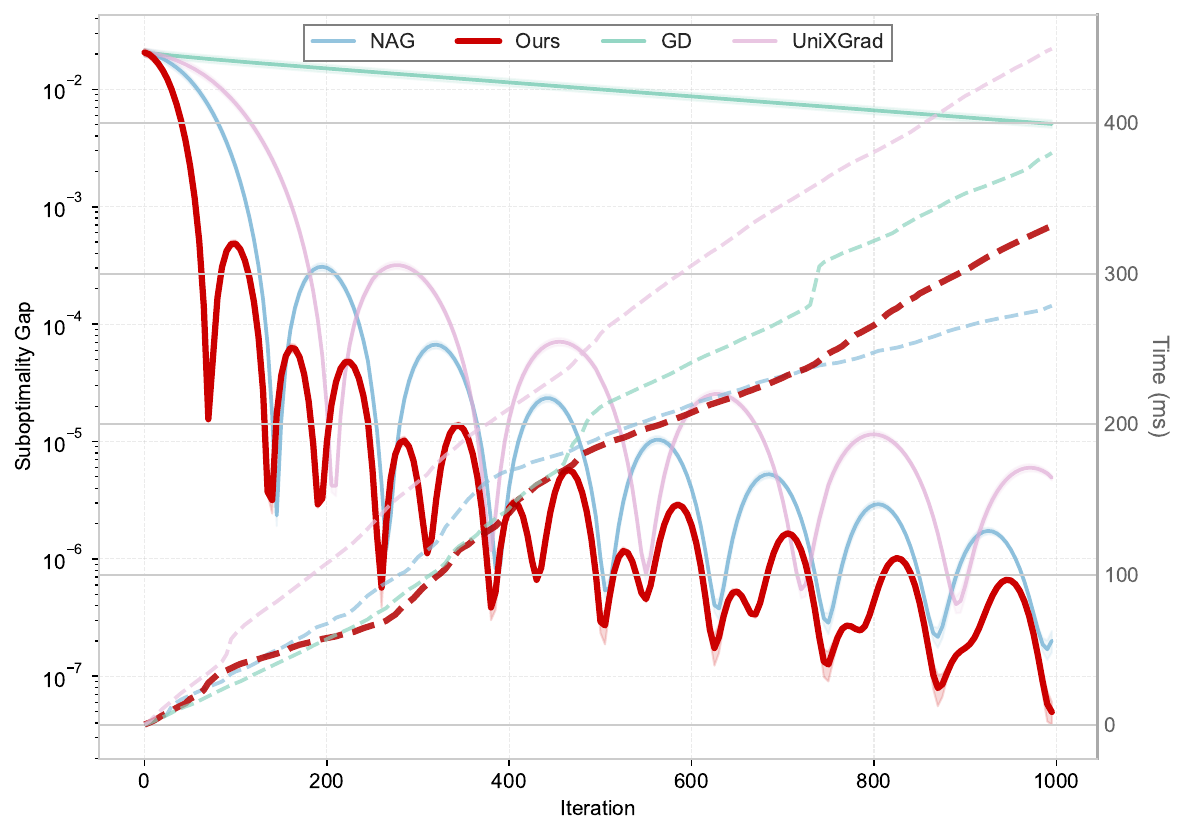}}    
    \subfigure[\texttt{splice-scale}]{
        \label{fig:non-universal-splice-scale}
        \includegraphics[clip, trim=2mm 3mm 2mm 2mm, height=3.12cm]{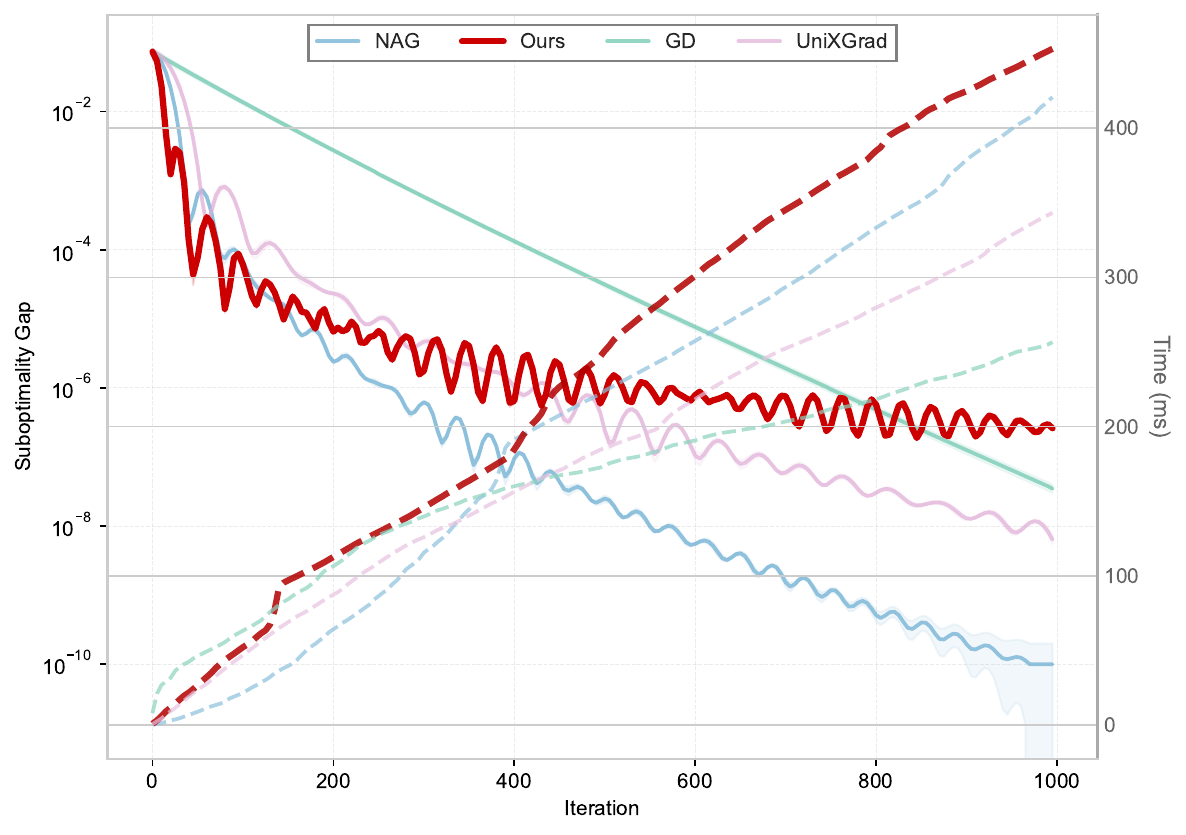}}
    \subfigure[\texttt{svmguide3}]{
        \label{fig:non-universal-svmguide3}
        \includegraphics[clip, trim=2mm 3mm 2mm 2mm, height=3.12cm]{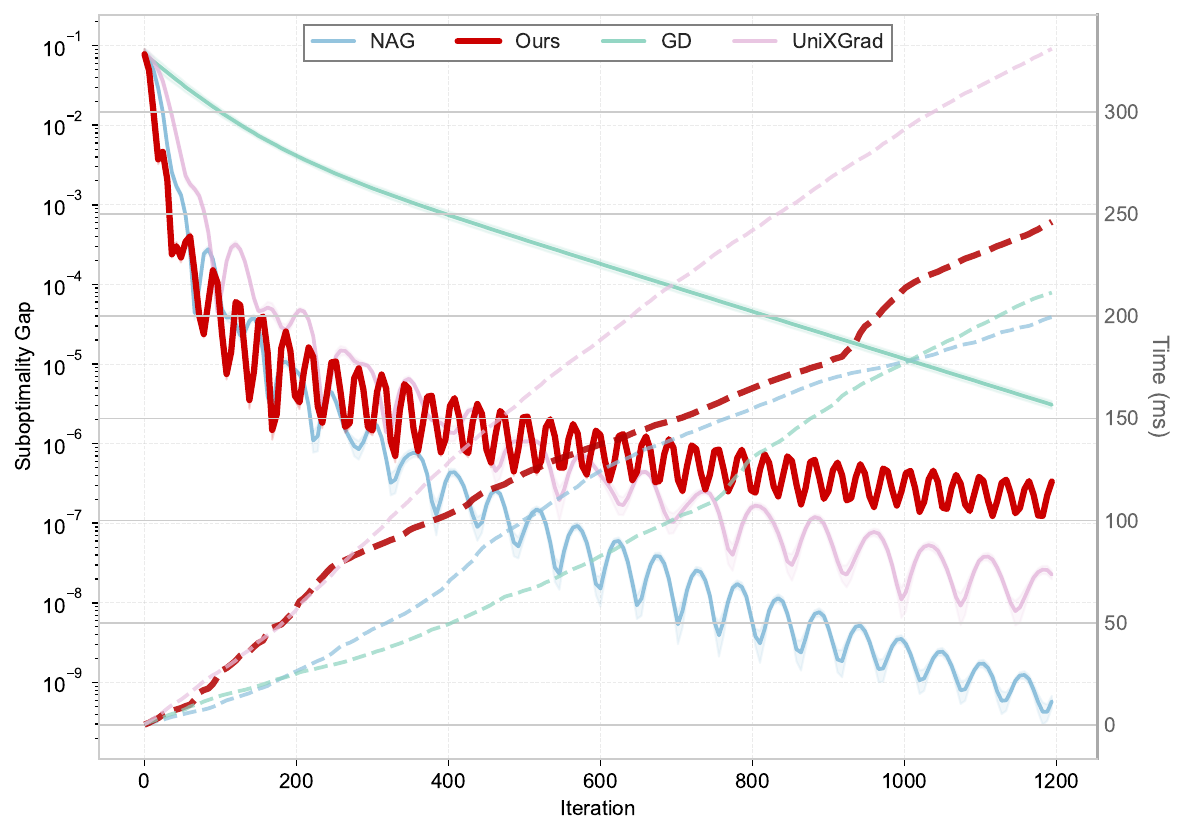}}
    \caption{\small{Comparison in the non-universal setting of the convergence curves and time complexity. Our method (\textbf{Ours}) is compared with classic non-universal methods \NAG~\eqref{eq:NAG}, \GD, and \textbf{UniXGrad} on one squared loss task (\texttt{Squared}) and five $\ell_2$-regularized logistic regression tasks (\texttt{a1a}, \texttt{mushrooms}, \texttt{splice}, \texttt{splice-scale} and \texttt{svmguide3}). \textbf{Ours} achieves similar convergence as \NAG and \textbf{UniXGrad} while being faster than \GD.}}
    \label{fig:non-universal}
\end{figure}

\pref{fig:universal} plots the suboptimality gap and time complexity of all methods, in the universal setting. Our method (\pref{alg:our-conversion}) achieves comparable and sometimes better convergence behavior compared with the other state-of-the-art methods. In terms of time complexity, our one-gradient improvement has a similar performance as \textbf{JRGS'20}, and is faster than \textbf{UniXGrad} and our two-gradient version. Note that the standard deviation of the convergence curves become large as the suboptimality gap becomes small because the logarithm scale is used, making it sensitive to the small variations.

\begin{figure}[!ht]
    \centering
    \subfigure[\texttt{Squared}]{
        \label{fig:universal-squared}
        \includegraphics[clip, trim=2mm 3mm 2mm 2mm, height=3.05cm]{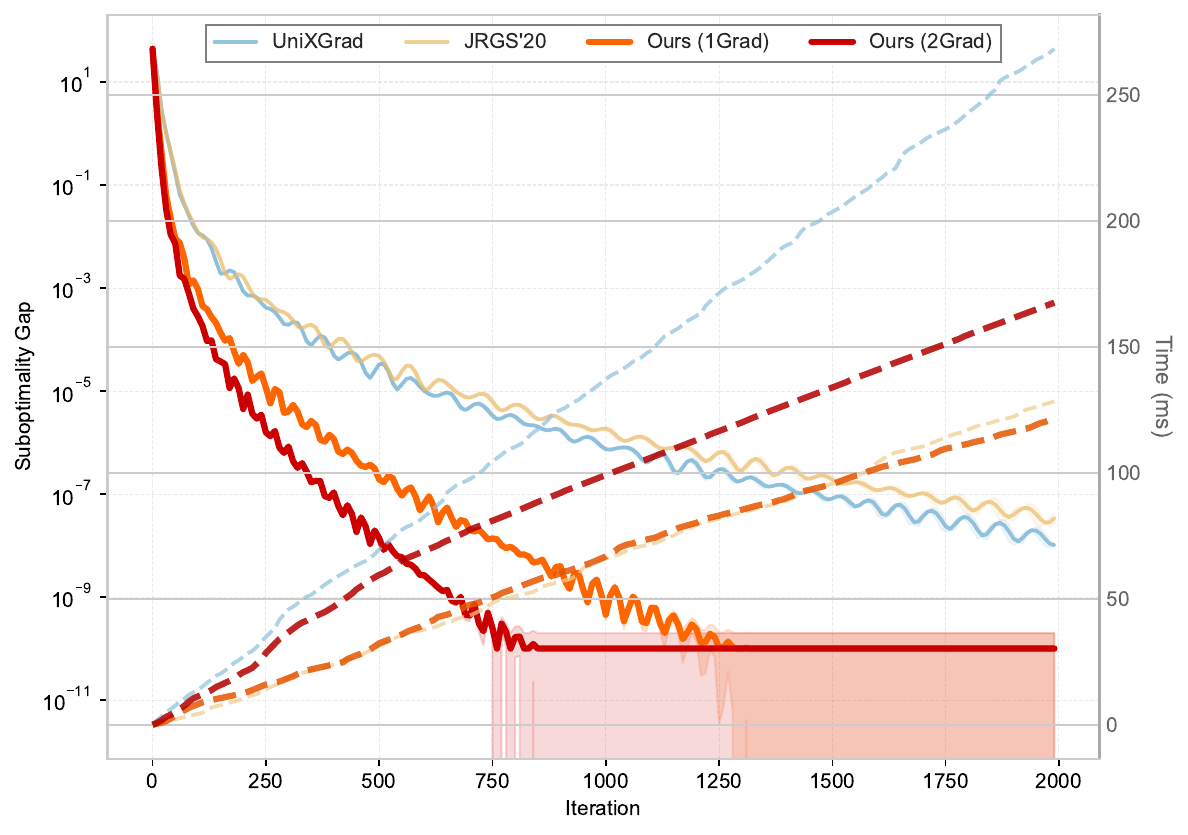}}
    \subfigure[\texttt{a1a}]{
        \label{fig:universal-a1a}
        \includegraphics[clip, trim=2mm 3mm 2mm 2mm, height=3.05cm]{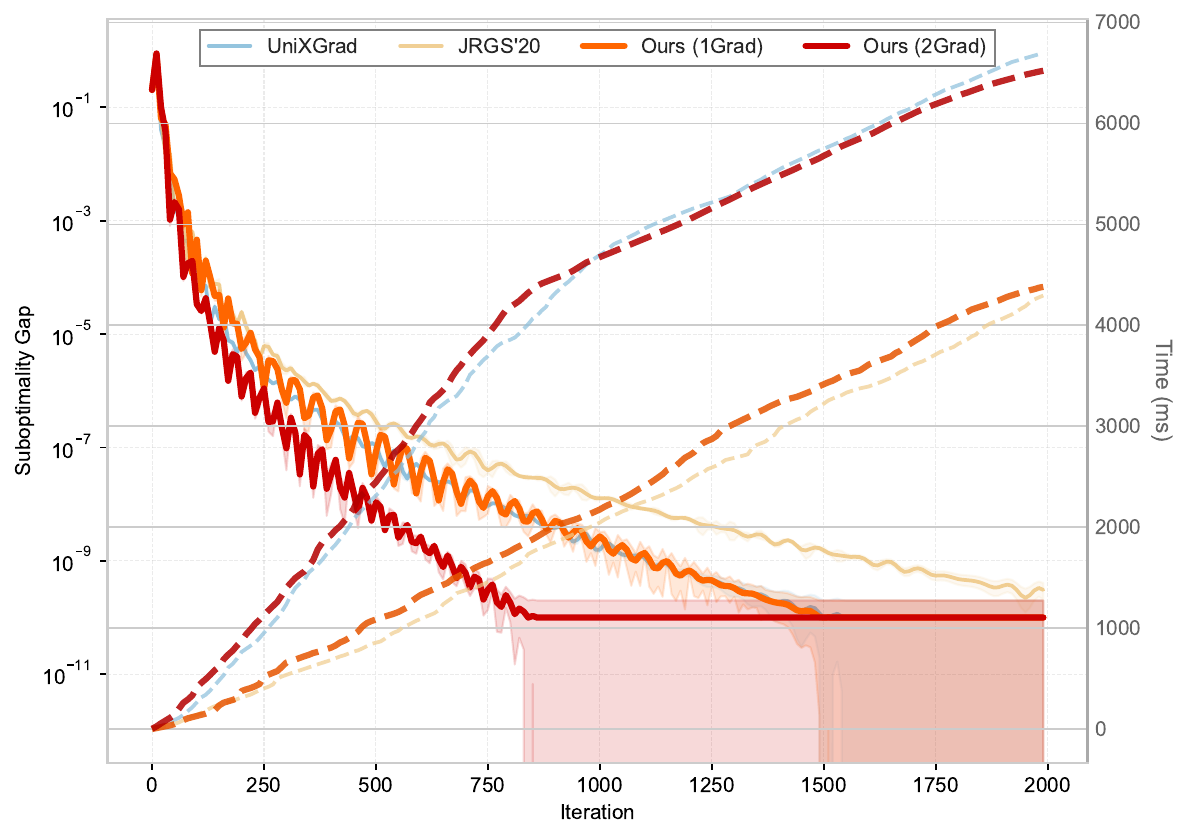}}
    \subfigure[\texttt{mushrooms}]{
        \label{fig:universal-mushrooms}
        \includegraphics[clip, trim=2mm 3mm 2mm 2mm, height=3.05cm]{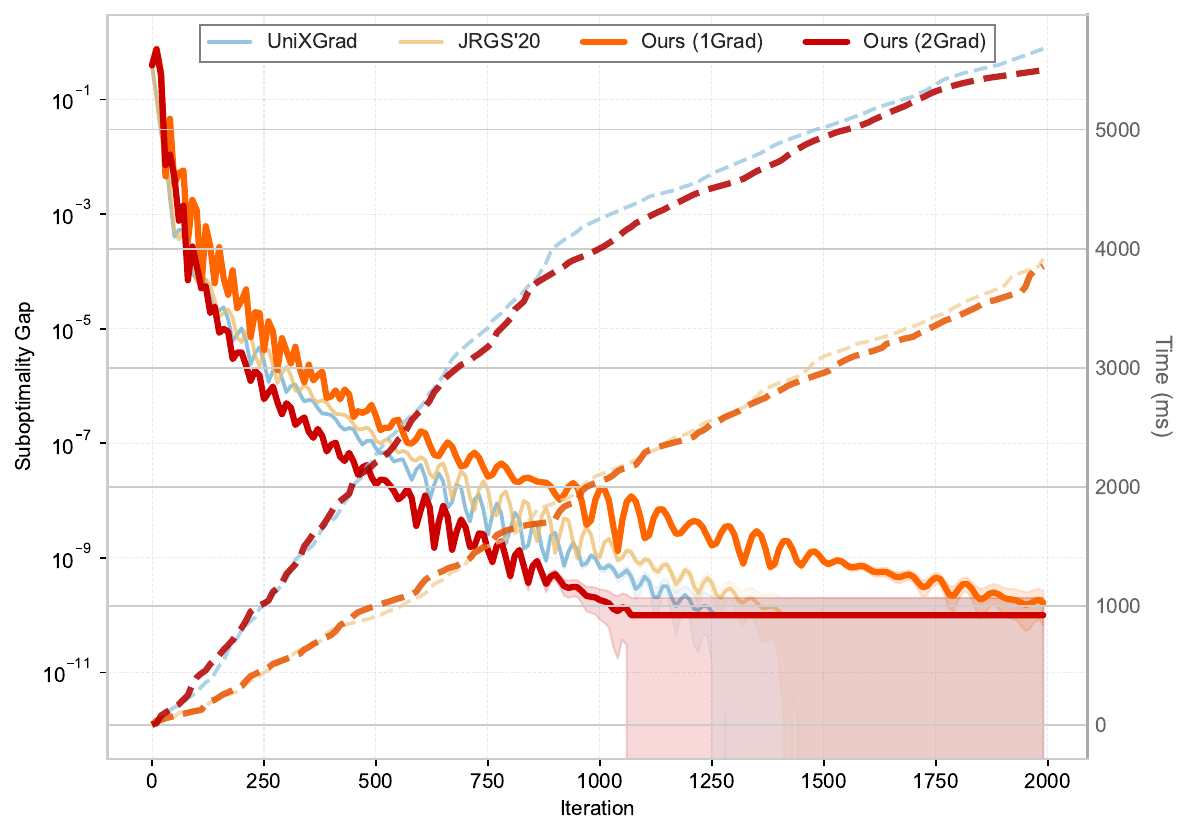}}\\
    \subfigure[\texttt{splice}]{
        \label{fig:universal-splice}
        \includegraphics[clip, trim=2mm 3mm 2mm 2mm, height=3.05cm]{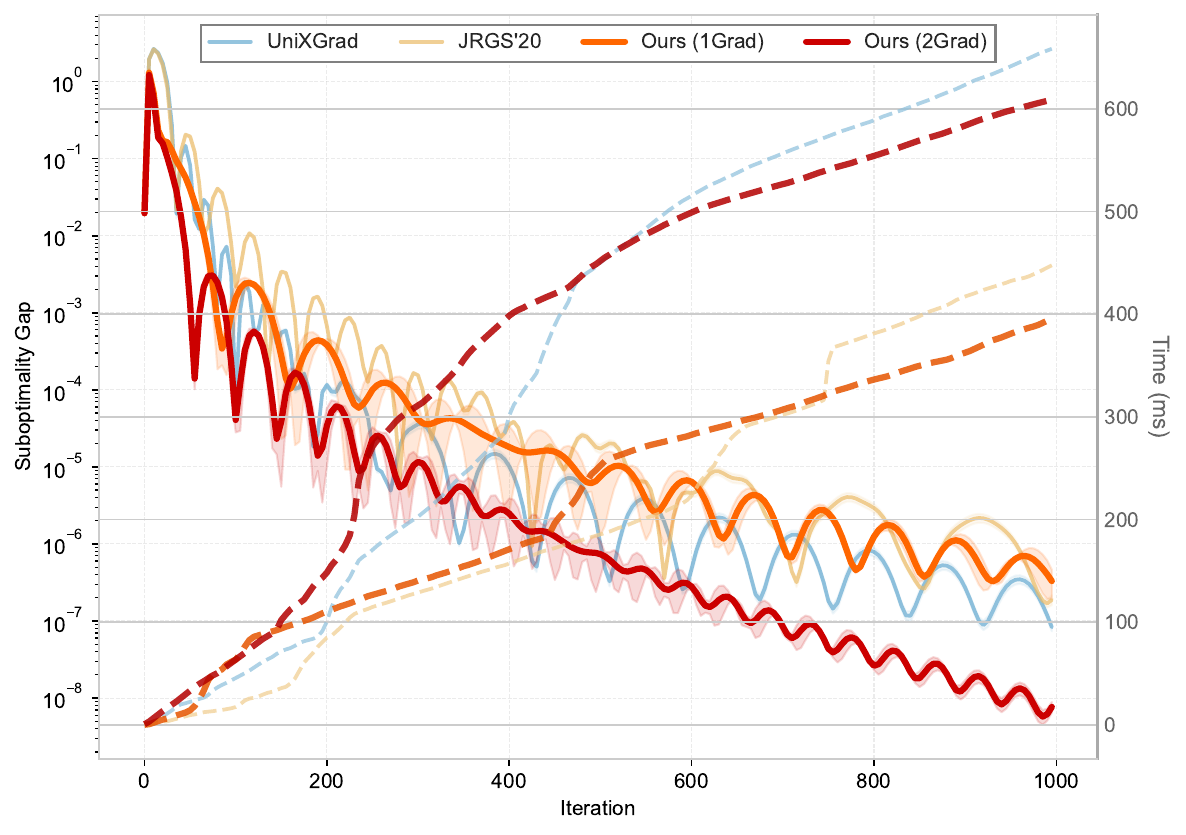}}        
    \subfigure[\texttt{splice-scale}]{
        \label{fig:universal-splice-scale}
        \includegraphics[clip, trim=2mm 3mm 2mm 2mm, height=3.05cm]{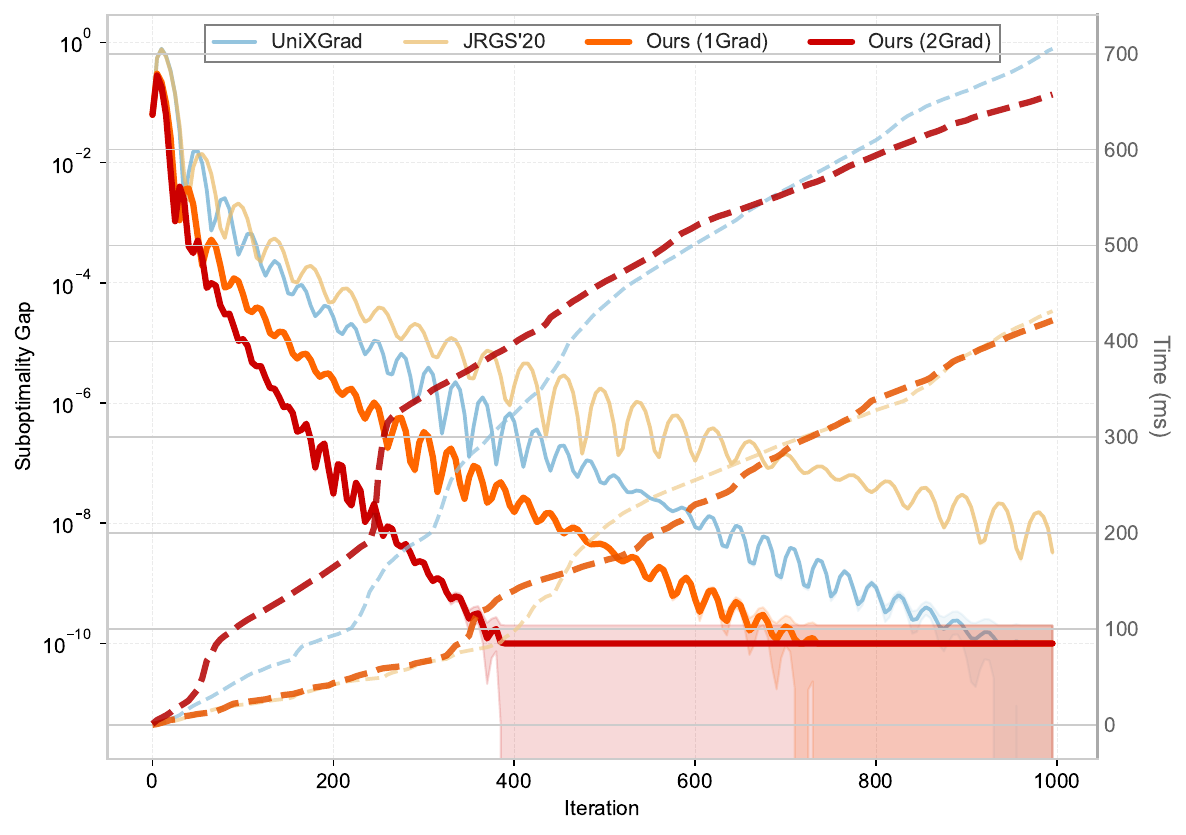}}
    \subfigure[\texttt{svmguide3}]{
        \label{fig:universal-svmguide3}
        \includegraphics[clip, trim=2mm 3mm 2mm 2mm, height=3.05cm]{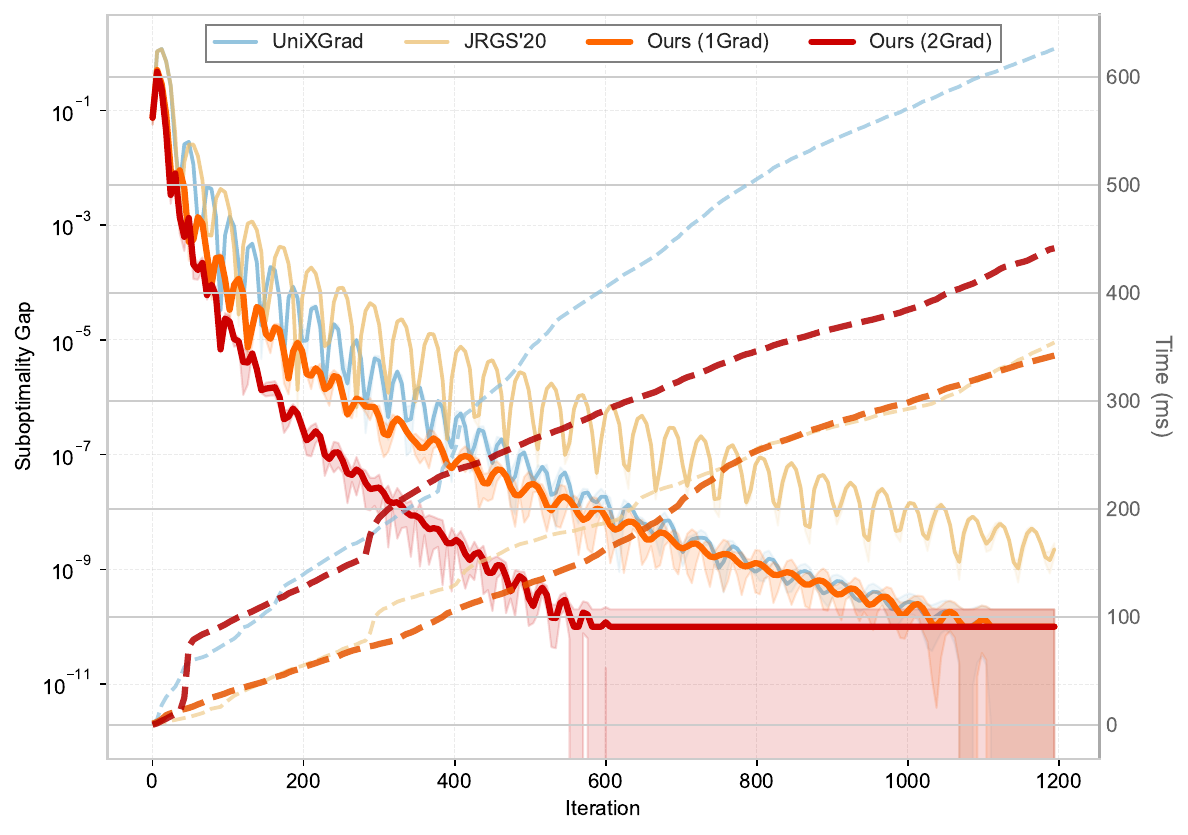}}
    \caption{\small{Comparison in the universal setting of the convergence curves and time complexity. Our methods~---~\textbf{Ours (1Grad)} and \textbf{Ours (2Grad)}~---~are compared with classic universal methods \textbf{UniXGrad} and \textbf{JRGS'20} on one squared loss task (\texttt{Squared}) and five $\ell_2$-regularized logistic regression tasks (\texttt{a1a}, \texttt{mushrooms}, \texttt{splice}, \texttt{splice-scale} and \texttt{svmguide3}). Our methods achieve comparable convergence behavior compared with the other  contenders. \textbf{Ours (1Grad)} is more efficient than \textbf{Ours (2Grad)} and \textbf{UniXGrad}.}}
    \label{fig:universal}
\end{figure}

The results in \pref{fig:non-universal} and \pref{fig:universal} show the effectiveness of our simple method (\pref{alg:our-conversion}) when compared with the classic non-universal methods (\NAG and \GD) and the classic universal methods (UniXGrad and the method in \citet{ICML'20:Joulani}), supporting our theoretical findings.

\newpage

\end{document}